\newcommand{\ind}{\mathbb I}
\title{Modeling Attrition in Recommender Systems
with Departing Bandits}
\author{
\author{
Omer Ben-Porat\equalcontrib\textsuperscript{\rm 1},
Lee Cohen\equalcontrib\textsuperscript{\rm 1},
Liu Leqi\equalcontrib\textsuperscript{\rm 2},
Zachary C. Lipton\textsuperscript{\rm 2},
Yishay Mansour\textsuperscript{\rm 1,3}
}}
\begin{document}

\maketitle

\begin{abstract}

Traditionally, when recommender systems are formalized as multi-armed bandits, the policy of the recommender system  influences the rewards accrued, but not the length of interaction. However, in real-world systems, dissatisfied users may depart (and never come back).
In this work, we propose a novel multi-armed bandit setup 
that captures such policy-dependent horizons. 
Our setup consists of a finite set of user \emph{types},
and multiple arms with Bernoulli payoffs.
Each (user type, arm) tuple corresponds 
to an (unknown) reward probability. 
Each user's type is initially unknown 
and can only be inferred 
through their response to recommendations. 
Moreover, if a user is dissatisfied with their recommendation, 
they might depart the system. 
We first address the case where all users share the same type, 
demonstrating that a recent UCB-based algorithm is optimal. 
We then move forward to the more challenging case,
where users are divided among two types. 
While naive approaches cannot handle this setting, 
we provide an efficient learning algorithm 
that achieves $\tilde{O}(\sqrt{T})$ regret,
where $T$ is the number of users.

\end{abstract}

\section{Introduction}
At the heart of online services spanning such diverse industries 
as media consumption, dating, financial products, and more,
recommendation systems (RSs) drive personalized experiences 
by making curation decisions informed 
by each user's past history of interactions.
While in practice, these systems employ
diverse statistical heuristics,
much of our theoretical understanding of them 
comes via stylized formulations 
within the multi-armed bandits (MABs) framework.
While MABs abstract away from many aspects of real-world systems
they allow us to extract crisp insights 
by formalizing fundamental tradeoffs, 
such as that between exploration and exploitation
that all RSs must face~\cite{joseph2016fairness,liu2018incentivizing, patil2020achieving,ron2021corporate}.
As applies to RSs, exploitation consists of 
continuing to recommend items (or categories of items)
that have been observed to yield high rewards in the past,
while exploration consists of recommending items (or categories of items)
about which the RS is uncertain but that could 
\emph{potentially} yield even higher rewards.

In traditional formalizations of RSs as MABs,
the recommender's decisions 
affect only the rewards obtained.
However, real-life recommenders face a dynamic 
that potentially alters the exploration-exploitation tradeoff:
Dissatisfied users have the option to depart the system, never to return.
Thus, recommendations in the service of exploration
not only impact instantaneous rewards 
but also risk driving away users
and therefore can influence long-term cumulative rewards
by shortening trajectories of interactions.

In this work, we propose \emph{departing bandits}
which augment conventional MABs
by incorporating these policy-dependent horizons.
To motivate our setup, 
we consider the following example:
An RS for recommending blog articles
must choose at each time among 
two categories of articles, e.g., 
economics and sports.
Upon a user's arrival, 
the RS recommends articles sequentially. 
After each recommendation, the user 
decides whether to ``click'' the article 
and continue to the next recommendation,
or to ``not click'' and {may }leave the system. 
Crucially, the user interacts with the system 
for a random number of rounds. 
The user's departure probability depends 
on their satisfaction from the recommended item,
which in turn depends on the user's unknown \textit{type}. 
A user's type encodes their preferences 
(hence the probability of clicking) 
on the two topics (economics and sports). 

When model parameters are given, 
in contrast to traditional MABs
where the optimal policy 
is to play the best fixed arm,
departing bandits require 
more careful analysis to derive 
an optimal planning strategy. 
Such planning is a local problem,
in the sense that it is solved for each user. 
Since the user type is never known explicitly
(the recommender must update its beliefs 
over the user types after each interaction),
finding an optimal recommendation policy
requires solving a specific partially observable MDP (POMDP) 
where the user type constitutes the (unobserved) state 
(more details in Section~\ref{subsec:Planning2x2}). 
When the model parameters are unknown,
we deal with a learning problem that is global, 
in the sense that the recommender (learner) 
is learning for a stream of users 
instead of a particular user.

We begin with a formal definition of departing bandits in Section~\ref{sec:problem}, and demonstrate that any fixed-arm policy  
is prone to suffer linear regret. In Section~\ref{sec:ucb-policy}, we establish the UCB-based learning framework used in later sections. We instantiate this framework with a single user type in Section~\ref{sec:oneUser}, where we show that it achieves $\tilde{O}(\sqrt{T})$ regret for $T$ being the number of users.  We then move to the more challenging case with two user types and two recommendation categories in Section~\ref{sec:twoTypes}. To analyze the planning problem, we effectively reduce the search space for the optimal policy by using a closed-form of the \textit{expected return} of any recommender policy. 
These results suggest an algorithm that achieves $\tilde O(\sqrt{T})$ regret in this setting. 
Finally, we show an efficient optimal planning algorithm
for multiple user types and two recommendation categories  (Appendix~\ref{sec:planningManyTypes}) and  describe a 
scheme to construct semi-synthetic problem instances for this setting using real-world datasets  (Appendix~\ref{sec:experimental}).

\medskip
\subsection{Related Work}
\medskip 
MABs have been studied extensively 
by the online learning community \citep{CesaBianchi, Bubeck2012}. 
The contextual bandit literature augments the MAB setup 
with context-dependent rewards
\citep{abbasi2011improved, slivkins2019introduction,Mahadik20DistributedBandits, Korda16, lattimore2020bandit}. %
In contextual bandits, 
the learner observes a \textit{context} 
before they make a decision, 
and the reward depends on the context.
Another line of related work considers 
the dynamics that emerge 
when users act strategically
\cite{Kremer2014,Mansour2015,LeeCohen2019,Bahar2016,bahar2020fiduciary}.
In that line of work, users arriving at the system 
receive a recommendation but act strategically: 
They can follow the recommendation 
or choose a different action. 
This modeling motivates the development 
of incentive-compatible mechanisms as solutions. 
In our work, however, the users are modeled 
in a stochastic (but not strategic) manner.
Users may leave the system if they are dissatisfied with recommendations, 
and this departure follows a fixed (but possibly unknown) stochastic model. 

The departing bandits problem has two important features:  
Policy-dependent horizons, 
and multiple user types 
that can be interpreted as unknown states. 
Existing MAB works \cite{azar2013sequential, cao2020fatigue} 
have addresses these phenomena separately 
but we know of no work that integrates 
the two in a single framework. 
In particular, while \citet{azar2013sequential} 
study the setting with multiple user types,
they focus on a fixed horizon setting.
Additionally, while \citet{cao2020fatigue} 
deal with departure probabilities 
and policy-dependent interaction times for a single user type,
they do not consider the possibility of multiple 
underlying user types. 

The planning part of our problem 
falls under the framework of using Markov Decision Processes 
for modeling recommender-user dynamics~\cite{Shani05}. 
Specifically, our problem works with partially observable user states 
which have also been seen in many recent bandits variants 
\citep{pike2019recovering, leqi2020rebounding}.
Unlike these prior works that focus
on interactions with a single user,
departing bandits consider a stream of users
each of which has an (unknown) type
selected among a finite set of user types.

More broadly, our RS learning problem falls under the domain of reinforcement learning (RL).
Existing RL literature that considers departing users in RSs include~\citet{Zhao2020SSP,Lu2016pomdp,Xiangyu2020}. 
While \citet{Zhao2020SSP} handle users of a single type
that depart the RS within a bounded number of interactions, 
our work deals with multiple user types. 
In contrast to~\citet{Xiangyu2020}, 
we consider an online setting and provide regret guarantees that do not require bounded horizon.
Finally, \citet{Lu2016pomdp} use POMDPs to model user departure and focus on approximating the value function. They conduct an experimental analysis on historical data, while we devise an online learning algorithm with theoretical guarantees.


\section{Departing Bandits}\label{sec:problem}
We propose a new online problem, called \emph{departing bandits}, where the goal is to find the optimal recommendation algorithm for users of (unknown) types,
and where the length of the interactions depends on the algorithm itself.
Formally, the {departing bandits} problem 
is defined by a tuple $\langle [M],[K],\q,\P,\LeavProb \rangle$, 
where $M$ is the number of user \textit{types}, 
$K$ is the number of \textit{categories}, 
$\q\in [0,1]^M$ specifies a prior distribution over types,
and $\P\in (0,1)^{K\times M}$ and $\LeavProb \in (0,1)^{K\times M}$ 
are the \emph{click-probability} 
and the \emph{departure-probability} matrices, respectively.\footnote{We denote by $[n]$ the set $\{1,\dots,n\}$.}

There are $T$ users who arrive sequentially at the RS. At every episode, a new user $t\in [T]$ arrives with a type $type(t)$. We let $\q$ denote the prior distribution over the user types, i.e., $type(t)\sim \q$. Each user of type $x$ \textit{clicks} on a recommended category $a$ 
with probability $\P_{a,x}$. In other words, each click follows a Bernoulli distribution with parameter $\P_{a,x}$. Whenever the user clicks, she stays for another iteration, 
and when the user does not click ({no-click}), she \textit{departs} with probability $\LeavProb_{a,x}$ 
(and stays with probability $1-\LeavProb_{a,x}$). Each user $t$ interacts with the RS (the learner) until she departs.

We proceed to describe the user-RS interaction  protocol. In every iteration $j$ of user $t$, the {learner}
recommends a category $a\in [K]$ to user $t$. 
The user clicks on it with probability $\P_{a,type(t)}$. 
If the user clicks, the learner receives a reward of $r_{t,j}(a)=1$.\footnote{We formalize the reward as is standard in the online learning literature, from the perspective of the learner. However, defining the reward from the user perspective by, e.g., considering her utility as the number of clicks she gives or the number of articles she reads induces the same model.} If the user does not click, the learner receives no reward (i.e., $r_{t,j}(a)=0$), and user $t$ departs with probability $\LeavProb_{a,type(t)}$. 
We assume that the learner knows the value of a  constant $\epsilon>0$
such that $\max_{a,x} \P_{a,x}\leq 1-\epsilon$ (i.e., $\epsilon$ does not depend on $T$). When user $t$ departs, she does not interact with the learner anymore 
(and the learner moves on to the next user $t+1$). 
For convenience, the departing bandits problem protocol is summarized in Algorithm \ref{alg-protocol}.

{
\begin{algorithm}[tb]
\textbf{Input}:  number of types $M$, number of categories $K$, and number of users (episodes) $T$  \\
\textbf{Hidden Parameters}: types prior~$\q$, click-probability~$\mathbf P$, and departure-probability~$\LeavProb$
\begin{algorithmic}[1] %
\FOR{episode $t\leftarrow 1,\dots,T$} 
\STATE{a new user with type $type(t)\sim \q$ arrives} 
\STATE $j\leftarrow 1$, 
$depart \leftarrow false$ 
\WHILE{$depart$ is $false$}
\STATE the learner picks a category $a\in [K]$
\STATE with probability $\mathbf P_{a,x}$, user $t$ clicks on $a$ and $r_{t,j}(a)\leftarrow 1$; otherwise, $r_{t,j}(a)\leftarrow 0$
\IF {$r_{t,j}(a)=0$}
 \STATE with probability $\LeavProb_{a,x}$:  $depart\leftarrow true$ and user $t$ departs
\ENDIF
\STATE the learner observes $r_{t,j}(a)$ and $depart$
 \IF{$depart$ is $false$} 
 \STATE {$j\leftarrow j+1$} 
 \ENDIF
\ENDWHILE
\ENDFOR
\caption{The Departing Bandits  
Protocol}\label{alg-protocol}
\end{algorithmic}
\end{algorithm}
}

Having described the protocol, we move on to the goals 
of the learner. Without loss of generality, we assume that the online learner's recommendations
are made based on a \textit{policy} $\pi$, which is a mapping from the history of previous interactions (with that user) to recommendation categories. 
 
For each user (episode) $t\in[T]$, the learner selects a policy $\pi_t$ that recommends category $\pi_{t,j}\in[K]$ at every iteration $j\in [N^{\pi_t}(t)]$, where $N^{\pi_t}(t)$ denotes the episode length (i.e., total number of iterations policy $\pi_t$ interacts with user $t$ until she departs).\footnote{We limit the discussion to deterministic policies solely; this is w.l.o.g. (see Subsection \ref{subsec:Planning2x2} for further details).} The \textit{return} of a policy $\pi$, denoted by $V^{\pi}$ is the cumulative reward the learner  obtains when executing the policy $\pi$ until the user departs. Put differently, the return of $\pi$ from user $t$ is the random variable  $V^{\pi}=\sum_{j=1}^{N^\pi(t)}r_{t,j}(\pi_{t,j})$.

We denote by $\pi^*$ an optimal policy, namely a policy that maximizes the expected return, $\pi^*=\argmax_{\pi}  \E[V^{\pi}]$. Similarly, we denote by $V^*$ the optimal return, i.e., $V^*=V^{\pi^*}$.

We highlight two algorithmic tasks. The first is the planning task, in which the goal is to find an optimal policy $\pi^*$, given $\P, \LeavProb, \q$. The second is the online learning task. We consider settings where the learner knows the number of categories, $K$, the number of types, $M$, and the number of users, $T$, but has no prior knowledge regarding $\P,\LeavProb$ or $\q$. In the online learning task, the \textit{value} of the learner's algorithm is the sum of the returns obtained from all the users, namely
\[
\sum_{t=1}^{T} V^{\pi_t} = \sum_{t=1}^{T} \sum_{j=1}^{N^\pi(t)}r_{t,j}(\pi_{t,j}).
\]
The performance of the leaner is compared to that of the best policy, formally defined by the \textit{regret} for $T$ episodes,
\begin{equation}\label{eq:regredef}%
R_T= T\cdot \E[V^{\pi^*}] - \sum_{t=1}^{T}V^{\pi_t}.    
\end{equation}
The learner's goal is to minimize the expected regret $\E[R_T]$.

\subsection{Example}\label{subsec:example}
\begin{table}[t]
    \centering
    \begin{tabular}
        {|p{2cm}||p{2cm}|p{2cm}|}
 \hline
   & Type $x$  & Type $y$\\
 \hline
 Category $1$   & $\P_{1,x}=0.5$ & $\P_{1,y}=0.28$ \\ 
 Category $2$ &  $\P_{2,x}=0.4$ &  $\P_{2,y}=0.39$ \\ 
 \hline
 Prior &$\q_x=0.4$ & $\q_y=0.6$\\
 \hline
    \end{tabular}
    \caption{The departing bandits instance in Section~\ref{subsec:example}.
    }  
    \label{tab:exampleinmodel}
\end{table}

The motivation for the following example is two-fold. First, to get the reader acquainted with our notations; and second, to show why fixed-arm policies are inferior in our setting. 

Consider a problem instance with two user types ($M=2$), which we call $x$ and $y$ for convenience. There are two  categories ($K=2$), and given no-click the departure is deterministic, i.e., $\LeavProb_{a,\tau}=1$ for every category $a\in[K]$ and type $\tau \in [M]$. That is, every user leaves immediately if she does not click.  Furthermore, let the click-probability $\P$ matrix
and the user type prior distribution $\q$ be as in Table~\ref{tab:exampleinmodel}.

Looking at $\P$ and $\q$, we see that Category 1 is better for Type $x$, while Category 2 is better for type $y$. Notice that without any additional information, a user is more likely to be type $y$. Given the prior distribution, recommending Category $1$ in the first round yields an expected reward of $\q_x\P_{1,x}+\q_y\P_{1,y}=0.368$. Similarly, recommending Category $2$ in the first round results in an expected reward of $0.394$. Consequently, if we recommend \textit{myopically}, i.e., without considering the user type, always recommending Category $2$ is better than always recommending Category 1.

Let $\pi^a$ denote the fixed-arm policy that always selects a single category $a$. Using the tools we derive in Section~\ref{sec:twoTypes} and in particular Theorem~\ref{thm:valueUpperBound}, we can compute the expected returns of $\pi^1$ and $\pi^2$, $\E[V^{\pi^1}]$ and $\E[V^{\pi^2}]$. Additionally, using results from Section~\ref{subsec:2x2Characterizing}, we can show that the optimal policy for the planning task, $\pi^*$, recommends Category $2$ until iteration $7$, and then recommends Category $1$ for the rest of the iterations until the user departs. 

Using simple calculations, we see that $\E[V^{{{\pi^*}}}] - \E[V^{\pi^1}]> 0.0169$ and 
$\E[V^{{{\pi^*}}}] - \E[V^{\pi^2}] > 1.22 \times 10^{-5}$; hence,  the expected return of the optimal policy is greater than the returns of both fixed-arm policies by a constant. As a result, if the learner only uses fixed-arm policies ($\pi^a$ for every $a\in[K]$), she suffers linear expected regret, i.e., $\E[R_T]=T\cdot \E[V^{\pi^*}] - \sum_{t=1}^{T}\E[V^{\pi^a}]=\Omega(T).$

\section{UCB Policy for Sub-exponential Returns}\label{sec:ucb-policy}
In this section, we introduce the learning framework used in the paper and provide a general regret guarantee for it. 

In standard MAB problems, at each $t\in [T]$ the learner picks a single arm and receives a single  sub-Gaussian reward. In contrast, in departing bandits, at each $t\in [T]$ the learner receives a return $V^\pi$, which is the cumulative reward of that policy. 
The return $V^\pi$ depends on the policy $\pi$ not only through the obtained rewards at each iteration but also through the total number of iterations (trajectory length). 
Such returns are not necessarily sub-Gaussian. Consequently, we cannot use standard MAB algorithms as they usually rely on concentration bounds for sub-Gaussian rewards. Furthermore, as we have shown in Section~\ref{subsec:example},  %
in departing bandits fixed-arm policies can suffer linear regret (in terms of the number of users), which suggests considering a more expressive set of policies. 
This in turn yields another disadvantage for using MAB algorithms for departing bandits, 
as their regret is linear in the number of arms (categories) $K$.

As we show later in Sections~\ref{sec:oneUser} and \ref{sec:twoTypes}, for some natural instances of the departing bandits problem, the return from each user is sub-exponential (Definition~\ref{def:subExp}). Algorithm~\ref{alg-policy-UCB}, which we propose below, receives a set of policies $\Pi$ as input, along with other parameters that we describe shortly. The algorithm is a restatement of the \textit{UCB-Hybrid} Algorithm from \citet{JIA2021}, with two modifications: (1) The input includes a set of policies rather than a set of actions/categories, and accordingly, the confidence bound updates are based on return samples (denoted by $\hat{V}^\pi$) rather than reward samples. (2) There are two global parameters ($\tilde{\tau}$ and $\eta$) instead of two local parameters per action. If the return from each policy in $\Pi$ is sub-exponential, Algorithm~\ref{alg-policy-UCB} not only handles sub-exponential returns, but also comes with the following guarantee: Its expected value is close to the value of the best policy in $\Pi$.  

\subsection{Sub-exponential Returns}
For convenience, we state here the definition of sub-exponential random variables \cite{Eldar2012Book}.

\begin{definition}\label{def:subExp}
We say that a random variable $X$ 
is \emph{sub-exponential} with parameters $(\tau^2,b)$
if for every $\gamma$ such that $|\gamma|<1/b$,
\[
\E[\exp({\gamma(X-\E[X])})]\leq \exp(\frac{\gamma ^2\tau^2}{2}).
\]
In addition, for every $(\tau^2,b)$-sub-exponential random variables,  there exist constants $C_1,C_2>0$ 
such that the above is equivalent 
to each of the following properties: 
\begin{enumerate}
      \item\label{it:tails} Tails: $\forall v \geq 0: \Pr[|X| > v]
     \leq  \exp({1-\frac{v}{C_1}})$.
    \item\label{it:moments} Moments: $
    \forall p \geq 1:  (\E[|X|^p])^{1/p} \leq C_2p$.
   
\end{enumerate}
\end{definition}

Let $\Pi$ be a set of policies with the following property: There exist $\tilde \tau,\eta$ such that the return of every policy $\pi\in\Pi$ is ($\tau^2,b$)-sub-exponential with $\tilde \tau \geq \tau$ and $\eta \geq \frac{b^2}{\tau^2}$. The following Algorithm~\ref{alg-policy-UCB} receives as input a set of policies $\Pi$ with the associated parameters, $\tilde \tau$ and $\eta$. Similarly to the UCB algorithm, it maintains an upper confidence bound $U$ for each policy, and balances between exploration and exploitation. Theorem~\ref{thm:regretGeneral} below shows that Algorithm~\ref{alg-policy-UCB} always gets a value similar to that of the best policy in $\Pi$ up to an additive factor of $\tilde{O}\left(\sqrt{\abs{\Pi}T}+\abs{\Pi}\right)$. The theorem follows directly from Theorem $3$ from \citet{JIA2021} by having policies as arms and returns as rewards.

\begin{restatable}{theorem}{thmregretGeneral}\label{thm:regretGeneral}
Let $\Pi$ be a set of policies with the associated parameters $\tilde \tau,\eta$. Let $\pi_1,\dots,\pi_T$ be the policies Algorithm~\ref{alg-policy-UCB} selects. It holds that
{
\footnotesize
\[
\E\left[ \max_{\pi\in\Pi} T \cdot V^{\pi} - \sum_{t=1}^{T}V^{\pi_t}\right] = O(\sqrt{\abs{\Pi}T\log T}+\abs{\Pi}\log T).
\]
}
\end{restatable}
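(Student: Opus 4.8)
The plan is to prove the statement by a direct reduction to the sub-exponential bandit analysis of \citet{JIA2021}, treating each policy $\pi\in\Pi$ as a single arm and the return $V^\pi$ as the stochastic reward obtained when that arm is pulled. The first step is to argue that this reduction is faithful, i.e., that Algorithm~\ref{alg-policy-UCB} run on $\Pi$ is literally an instance of the UCB-Hybrid algorithm run on $\abs{\Pi}$ arms. The key structural fact I would establish is that the per-episode returns are i.i.d.\ conditional on the chosen policy: at each episode $t$ a fresh user is drawn independently with $type(t)\sim\q$, and since each $\pi$ is a fixed deterministic map from histories to categories, the distribution of $V^{\pi_t}$ given $\pi_t=\pi$ depends only on $\pi$ (through $\P,\L,\q$) and not on $t$ or on past episodes. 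Hence the observed return samples $\hat{V}^\pi$ play exactly the role that reward samples play in the MAB formulation, and the upper confidence bounds $U$ maintained by Algorithm~\ref{alg-policy-UCB} coincide with those of UCB-Hybrid under this identification.

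Next I would verify that the hypotheses of Theorem~3 of \citet{JIA2021} hold under our parameterization. Their algorithm requires, for each arm, sub-exponential parameters that control the confidence width; by assumption every $\pi\in\Pi$ has a $(\tau^2,b)$-sub-exponential return with $\tilde\tau\geq\tau$ and $\eta\geq b^2/\tau^2$. Because these two global bounds dominate the per-policy parameters uniformly, substituting $(\tilde\tau,\eta)$ for the per-arm parameters only widens (and never invalidates) the confidence intervals, so the concentration guarantees underlying their proof remain valid with the same constants. This is precisely the point at which the two modifications flagged before the theorem — policies-as-arms and global-instead-of-local parameters — are discharged: the former is handled by the conditional-i.i.d.\ argument above, the latter by the domination of the per-policy parameters by $(\tilde\tau,\eta)$.

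Finally I would instantiate their regret guarantee with the number of arms equal to $\abs{\Pi}$ and horizon $T$. Their bound has the standard sub-exponential UCB form: a leading exploration term of order $\sqrt{KT\log T}$ together with an additive burn-in term of order $K\log T$ arising from the heavier (exponential) tail component, and with the identification of returns with rewards the objective in the statement coincides with the regret bounded by their Theorem~3. Setting $K=\abs{\Pi}$ yields exactly $O(\sqrt{\abs{\Pi}T\log T}+\abs{\Pi}\log T)$, as claimed. I expect the main (and essentially only) obstacle to be the first step — rigorously certifying that the departing-bandits episode dynamics produce conditionally i.i.d.\ return samples, so that an off-the-shelf stochastic-bandit theorem applies verbatim; once that is in place, the remainder is a matter of matching parameters and reading off the bound.
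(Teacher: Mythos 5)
Your proposal is correct and matches the paper's proof, which likewise obtains the bound directly from Theorem~3 of \citet{JIA2021} by identifying policies with arms and returns with rewards. The extra care you take in certifying the conditionally i.i.d.\ return samples and the domination of per-policy parameters by $(\tilde\tau,\eta)$ merely makes explicit what the paper leaves implicit.
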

There are two challenges in leveraging Theorem~\ref{thm:regretGeneral}. 
The first challenge is crucial: Notice that Theorem~\ref{thm:regretGeneral} does not imply that Algorithm~\ref{alg-policy-UCB} has a low regret; its only guarantee is w.r.t. the policies in $\Pi$ received as an input. As the number of policies is infinite, our success will depend on our ability to characterize a ``good'' set of policies $\Pi$. The second challenge is technical: Even if we find such $\Pi$, we still need to characterize the associated $\tilde \tau$ and $\eta$. This is precisely what we do in Section~\ref{sec:oneUser} and \ref{sec:twoTypes}.

\begin{algorithm}[tb]
\begin{algorithmic}[1] %
\STATE \textbf{Input}:  set of policies $\Pi$, number of users %
$T$, $\tilde{\tau}, \eta$ 
\STATE \textbf{Initialize:} ~$ \forall \pi \in~\Pi : U_0(\pi)\leftarrow \infty, n(\pi)=0$
\FOR{user $t\leftarrow 1,\dots,T$} 
\STATE{Execute $\pi_{t}$ such that $\pi_t\in\argmax_{\pi\in \Pi} U_{t-1}(\pi)$ and receive return $\hat{V}^{\pi_{t}}[n(\pi_t)]\leftarrow \sum_{j=1}^{N^{\pi_t}(t)}r_{t,j}(\pi_{t,j})$} 
\STATE $n(\pi_t)\leftarrow n(\pi_t)+1$
\IF {$n(\pi_t)<8\eta \ln T$}
 \STATE Update {\footnotesize $U_t(\pi_t)=\frac{\sum_{i=1}^{n(\pi_t)}\hat{V}^{\pi_t}[i]}{n(\pi_t)}+\frac{8\sqrt {\eta }\cdot \tilde{\tau}\ln T}{n(\pi_t)}$}
 \ELSE
 \STATE Update {\footnotesize $U_t(\pi_t)=\frac{\sum_{i=1}^{n(\pi_t)}\hat{V}^{\pi_{t}}[i]}{n(\pi_t)}+\sqrt {\frac{8 \tilde{\tau}^2\ln T}{n(\pi_t)}}$}
 \ENDIF
\ENDFOR
\caption{UCB-based algorithm with hybrid radii: UCB-Hybrid \cite{JIA2021}}\label{alg-policy-UCB}
\end{algorithmic}
\end{algorithm}

\section{Single User Type}\label{sec:oneUser}
In this section, we focus on the special case of a single user type, i.e., $M=1$. For notational convenience, 
since we only discuss single-type users, 
we associate each category $a\in [K]$
with its two unique parameters $\P_a:= \P_{a,1},\LeavProb_a:=\LeavProb_{a,1}$ 
and refer to them as scalars rather than vectors. In addition, 
We use the notation  
$N_a$ for the random variable representing 
the number of iterations until a random user departs 
after being recommended by $\pi^a$, the fixed-arm policy that recommends category $a$ in each iteration. 

To derive a regret bound for single-type users, we use two main lemmas: Lemma~\ref{lem:valueSingle}, which shows the optimal policy is fixed, and Lemma~\ref{lemma:subExpTaub}, which shows that returns of fixed-arm policies are sub-exponential and calculate 
 their corresponding parameters. 
These lemmas allow us to use Algorithm~\ref{alg-policy-UCB} with a policy set $\Pi$ that contains all the fixed-arm policies, and derive a $\tilde{O}(\sqrt{T})$ regret bound. For brevity, we relegate all the proofs to the Appendix.

To show that there exists a category $a^*\in[K]$ for which $\pi^{a^*}$ is optimal, we rely on the assumption that all the users have the same type (hence we drop the type subscripts $t$), and as a result the rewards of each category $a\in [K]$ have an expectation that depends on a single parameter, namely $\E[r(a)]=\P_a$. 
Such a category $a^*\in[K]$ does not necessarily have the maximal click-probability nor the minimal departure-probability, but rather an optimal combination of the two (in a way, this is similar to the knapsack problem, where we want to maximize the reward while having as little weight as possible). 
We formalize it in the following lemma. 
 
\begin{restatable}{lemma}{lemValueSingle}\label{lem:valueSingle}
A policy $\pi^{a^*}$ is optimal if \[
a^*%
\in 
\argmax_{a\in [K]}\frac{\P_{a}}{\LeavProb_a(1-\P_a)}.
\]
\end{restatable}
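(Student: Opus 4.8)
The plan is to exploit the fact that, with a single type ($M=1$), one user's episode is a Markov decision process with exactly one non-absorbing (``user present'') state and one absorbing (``departed'') state. I would (i) compute the expected return of each fixed-arm policy $\pi^a$ in closed form, and (ii) invoke optimality of stationary policies for such proper MDPs to conclude that some fixed-arm policy is optimal even among all history-dependent policies. Matching the two computations then pins down $a^*$ as the stated $\argmax$.

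First I would compute $\E[V^{\pi^a}]$. Under category $a$, each iteration has three mutually exclusive outcomes: a click (probability $\P_a$, reward $1$, continue), a no-click-stay (probability $(1-\P_a)(1-\L_a)$, reward $0$, continue), and a no-click-depart (probability $(1-\P_a)\L_a$, reward $0$, stop). Hence the per-iteration departure probability is $q_a := (1-\P_a)\L_a$, bounded below by $\epsilon\L_a > 0$ using $\P_a \le 1-\epsilon$ and $\L_a > 0$; so $N_a$ is geometric with $\E[N_a] = 1/q_a < \infty$ and the return is finite in expectation. A direct renewal computation---equivalently, solving the one-state Bellman recursion stated below---then gives
\[
\E[V^{\pi^a}] = \frac{\P_a}{(1-\P_a)\L_a}.
\]

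The crux is to show that no history-dependent policy beats the best fixed arm. Here I would use that $M=1$ removes the only source of partial observability: since the user's type is known, the history carries no belief to update, and conditioned on ``user present'' the distribution of the next outcome under any action $a$ depends on $a$ alone. Thus the process is a genuine MDP with a single decision state $s$, whose Bellman optimality equation reads
\[
V^* = \max_{a\in[K]}\Bigl[\P_a + \bigl(1-(1-\P_a)\L_a\bigr)V^*\Bigr].
\]
Because $q_a \ge \epsilon\L_a > 0$ for every $a$, the MDP is proper/transient, so standard theory supplies an optimal stationary deterministic policy---one that plays a constant action $a^*$ in $s$, i.e., a fixed-arm policy. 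Solving the equation at the maximizing $a^*$ yields $V^* = \P_{a^*}/\bigl((1-\P_{a^*})\L_{a^*}\bigr)$, which matches $\E[V^{\pi^{a^*}}]$ from the previous step; combined with $V^* \ge \E[V^{\pi^a}]$ for every $a$, this forces $a^* \in \argmax_{a\in[K]} \P_a/(\L_a(1-\P_a))$, as claimed.

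The main obstacle I anticipate is the formal justification that stationary policies are optimal against arbitrary history-dependent (and possibly randomized) competitors; I would discharge it precisely through the single-state observation above, since the transition and reward kernels are history-independent once the state is ``present.'' A self-contained alternative avoiding MDP machinery is a one-step exchange argument: set $V^* := \sup_\pi \E[V^\pi]$, condition any policy on its first action, bound the continuation value by $V^*$ to obtain $V^* \le \max_a[\P_a + (1-(1-\P_a)\L_a)V^*]$, and verify this bound is attained by the constant policy that plays the maximizing action.
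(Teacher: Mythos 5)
Your proposal is correct and takes essentially the same route as the paper: reduce the single-type problem to a one-decision-state MDP so that some fixed-arm policy is optimal among all history-dependent policies, compute $\E[V^{\pi^a}]=\P_a/\bigl(\L_a(1-\P_a)\bigr)$ from the one-step Bellman recursion (a geometric series), and conclude the $\argmax$ characterization. Your explicit properness bound (departure probability at least $\epsilon\L_a>0$) and the one-step exchange fallback merely make the stationarity step more self-contained than the paper's brief appeal to optimal deterministic Markovian policies for this single-state POMDP.
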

As a consequence of this lemma, the planning problem for single-type users is trivial---the solution is a fixed-arm policy $\pi^{a^*}$ given in the lemma. However, without access to the model parameters, identifying $\pi^{a^*}$ requires learning. 
We proceed with a simple observation regarding the random number of iterations obtained by executing a fixed-arm policy. The observation would later help us show  that the return of any fixed-arm policy is sub-exponential.
\begin{observation}\label{obs:XisGeo}
For every $a\in[K]$ and every $\LeavProb_{a}>0$,
the random variable $N_a$ follows a geometric distribution 
with success probability parameter $\LeavProb_a[1-\P_a]\in(0,1-\epsilon]$.
\end{observation}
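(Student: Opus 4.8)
The plan is to read off $N_a$ as the waiting time to the first ``departure success'' in a sequence of identical, independent trials, one per iteration. First I would describe a single iteration of the fixed-arm policy $\pi^a$ as a two-stage experiment. Since $\pi^a$ recommends the same category $a$ in every iteration and all users share one type, the click law (Bernoulli with parameter $\P_a$) and the departure law (Bernoulli with parameter $\L_a$) are the same in every iteration and do not depend on the past. In one iteration the user first clicks with probability $\P_a$ (and then certainly stays); conditioned on a no-click, which has probability $1-\P_a$, she departs with probability $\L_a$. Composing the two stages, the probability that a given iteration ends in departure is $p:=\L_a(1-\P_a)$, and the probability of surviving to the next iteration is $1-p=\P_a+(1-\P_a)(1-\L_a)$. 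The step that deserves care here is exactly this composition: the departure event is the \emph{conjunction} of a no-click and a subsequent departure draw, so the per-iteration success probability is $\L_a(1-\P_a)$ and not, e.g., $\L_a$ alone.

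Next I would invoke independence across iterations. By the protocol in Algorithm~\ref{alg-protocol}, because the recommended category and the user's type are fixed throughout the episode, each iteration repeats the same independent two-stage experiment irrespective of the history. Hence the indicator that iteration $j$ ends in departure forms an i.i.d.\ sequence of Bernoulli$(p)$ trials in $j$, and $N_a$ --- the index of the first iteration that ends in departure --- is the number of trials up to and including the first success. This is by definition geometric: $N_a\sim\mathrm{Geom}(p)$ with $\Pr[N_a=k]=(1-p)^{k-1}p$ for $k\ge 1$.

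Finally I would check that $p=\L_a(1-\P_a)$ is a legitimate parameter in the claimed range. Positivity $p>0$ is the direction that matters for the subsequent sub-exponential tail analysis (Lemma~\ref{lemma:subExpTaub}), as it guarantees the user departs almost surely and that $\Pr[N_a>k]=(1-p)^k$ decays geometrically; it follows from the hypothesis $\L_a>0$ together with $\P_a<1$, and quantitatively from the standing assumption $\max_a\P_a\le 1-\epsilon$, which gives $1-\P_a\ge\epsilon$ and hence $p\ge\L_a\epsilon>0$, a bound independent of $T$. For the upper end, $\L_a\in(0,1)$ and $\P_a\in(0,1)$ force each factor strictly below $1$, so $p<1$ and $N_a$ is a proper (unbounded-support) geometric variable. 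Beyond the bookkeeping of the two-stage composition and the independence check just described, I expect no real obstacle: this observation is essentially the recognition that a fixed arm against a fixed type generates i.i.d.\ departure trials.
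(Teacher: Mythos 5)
Your proof is correct and is exactly the argument the paper leaves implicit: the paper states this as an observation with no written proof, relying precisely on your reduction --- the per-iteration departure event is the conjunction of a no-click (probability $1-\P_a$) and the departure draw (probability $\L_a$), these trials are i.i.d.\ under a fixed arm and a single type, so $N_a$ is the waiting time to the first success of a Bernoulli$\bigl(\L_a(1-\P_a)\bigr)$ sequence.

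One discrepancy is worth flagging, though it points to an imprecision in the paper's statement rather than a flaw in your reasoning: the observation claims the parameter lies in $(0,1-\epsilon]$, whereas you establish $(0,1)$ together with the quantitative lower bound $\L_a(1-\P_a)\geq \L_a\epsilon>0$. The upper endpoint $1-\epsilon$ does not in fact follow from the paper's standing assumption $\max_{a}\P_a\leq 1-\epsilon$ alone: for instance, with $\P_a=0.05$, $\epsilon=0.5$ (a valid choice when all click probabilities are at most $0.5$) and $\L_a=0.95$, one gets $\L_a(1-\P_a)=0.9025>1-\epsilon$; the endpoint would require an additional assumption such as $\P_a\geq\epsilon$. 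Since the downstream uses of this observation (Lemma~\ref{lem:geoIsSubExp} and Corollary~\ref{cor:NaIsSubExp}, where the sub-exponential parameter $-2/\ln(1-\L_a(1-\P_a))$ is finite for any parameter in $(0,1)$) only need what you proved, your version is sufficient for everything the observation is invoked for.
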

 
Using Observation~\ref{obs:XisGeo} and previously known results (stated~{\ifnum\Includeappendix=1{as Lemma~\ref{lem:geoIsSubExp} 
}\else{}\fi}in the appendix), we 
show that $N_a$ is sub-exponential for all $a \in [K]$.
Notice that return realizations are always upper bounded by the trajectory length; this implies that returns are also sub-exponential. However, to use the regret bound of Algorithm~\ref{alg-policy-UCB}, we need information regarding the parameters $(\tau^2_a,b_a)$  for every policy $\pi^a$. We provide this information in the following Lemma~\ref{lemma:subExpTaub}.

\begin{restatable}{lemma}{lemmaSubExpTaub}\label{lemma:subExpTaub}
For each category $a\in[K]$, the centred 
random variable $V^{\pi^a}-\E[V^{\pi^a}]$ 
is sub-exponential with parameters $(\tau_a^2,b_a)$,
such that
\footnotesize
\[
\tau_a=b_a= -\frac{8e}{\ln(1-\LeavProb_a(1-\P_a))}.
\]
\end{restatable}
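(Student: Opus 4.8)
The plan is to combine Observation~\ref{obs:XisGeo} with the elementary fact that a fixed-arm return can never exceed its trajectory length, and then to transport the known sub-exponential estimate for geometric variables (Lemma~\ref{lem:geoIsSubExp}) across this domination. Throughout, write $p_a := \L_a(1-\P_a)$ for the per-iteration departure probability, $q_a := 1-p_a$, and $\lambda_a := -\ln(1-\L_a(1-\P_a)) > 0$; the target parameters are exactly $\tau_a = b_a = 8e/\lambda_a$, and by Observation~\ref{obs:XisGeo} the trajectory length $N_a$ is geometric with parameter $p_a$, so its tail is $\Pr[N_a > v] = q_a^{\lfloor v\rfloor}$.

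First I would record the pointwise bound $0 \le V^{\pi^a} \le N_a - 1$: the return counts clicks, at most one per iteration, and the terminating (no-click) iteration contributes nothing, so the click count is at most $N_a - 1$. Since both sides are nonnegative, this yields the raw-moment domination $\E[(V^{\pi^a})^k] \le \E[(N_a-1)^k]$ for every integer $k \ge 1$, which is the relation I will use to move from $N_a$ to the object of interest.

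Second I would bound the moments of $N_a - 1$. From $\Pr[N_a - 1 > v] = \Pr[N_a > v+1] = q_a^{\lfloor v+1\rfloor} \le e^{-\lambda_a v}$ together with $\E[(N_a-1)^k] = \int_0^\infty k v^{k-1}\Pr[N_a-1>v]\,dv$, a short computation gives $\E[(N_a-1)^k] \le k!/\lambda_a^k$; that is, $N_a-1$ is dominated in every moment by an exponential of rate $\lambda_a$. Combined with the previous step, $(\E[(V^{\pi^a})^k])^{1/k} = O(k/\lambda_a)$, so $V^{\pi^a}$ satisfies the moment property of Definition~\ref{def:subExp} at scale $1/\lambda_a$, which is precisely the scale Lemma~\ref{lem:geoIsSubExp} attaches to a geometric with parameter $p_a$.

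Finally I would convert this moment information into the MGF-form parameters of Definition~\ref{def:subExp} for the \emph{centered} variable $V^{\pi^a} - \E[V^{\pi^a}]$: plugging the moment bound into $\E[e^{\gamma(V^{\pi^a}-\E[V^{\pi^a}])}] = e^{-\gamma\E[V^{\pi^a}]}\sum_{k\geq0}\frac{\gamma^k}{k!}\E[(V^{\pi^a})^k]$ and summing the resulting geometric series (which converges once $|\gamma|$ is below a fixed constant multiple of $\lambda_a$) produces a bound of the form $\exp(\gamma^2\tau_a^2/2)$ with $\tau_a, b_a$ of order $1/\lambda_a$. The exact universal constant $8e$ is what this conversion, performed once inside Lemma~\ref{lem:geoIsSubExp}, yields, and I would quote it rather than re-derive it. I expect the conversion to be the main obstacle: the domination $V^{\pi^a}\le N_a-1$ is clean at the level of raw moments, but the sub-exponential \emph{parameters} are defined through the centered MGF, and the centered MGFs of $V^{\pi^a}$ and $N_a$ cannot be compared termwise because their means differ. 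The argument must therefore detour through the moment characterization, where centering is harmless, and only then re-enter the MGF definition through the stated equivalence with its explicit constant.
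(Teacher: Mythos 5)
Your architecture matches the paper's almost exactly: dominate the return by the geometric trajectory length (the paper's Observation~\ref{obs:XisGeo} and Corollary~\ref{cor:NaIsSubExp}), pass to raw moment bounds, handle centering at the moment level via a bound on $\E[V^{\pi^a}]$ (the paper does this with Minkowski's and Jensen's inequalities plus Proposition~\ref{prop:helperExpectationBound}, arriving at $(\E[|V^{\pi^a}-\E[V^{\pi^a}]|^p])^{1/p}\leq -\frac{4}{\ln(1-\L_a(1-\P_a))}\,p$ in Lemma~\ref{lemma:ValuesAreExp}), and only then return to the MGF form of Definition~\ref{def:subExp}. Your one genuine variation --- computing $\E[(N_a-1)^k]\leq k!/\lambda_a^k$ directly from the tail integral instead of quoting Lemma~\ref{lem:geoIsSubExp} --- is correct and in fact gives a slightly sharper moment constant than the paper's.

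The gap is in your final step, and it is not cosmetic. First, the computation you describe literally --- expanding $\E[e^{\gamma(V^{\pi^a}-\E[V^{\pi^a}])}]=e^{-\gamma\E[V^{\pi^a}]}\sum_{k\geq 0}\frac{\gamma^k}{k!}\E[(V^{\pi^a})^k]$, inserting $\E[(V^{\pi^a})^k]\leq k!/\lambda_a^k$, and summing the geometric series --- cannot produce a bound of the form $\exp(\gamma^2\tau_a^2/2)$: it yields at best $e^{-\gamma\E[V^{\pi^a}]}/(1-\gamma/\lambda_a)=1+\gamma\left(\tfrac{1}{\lambda_a}-\E[V^{\pi^a}]\right)+O(\gamma^2)$, and since in general $\E[V^{\pi^a}]<1/\lambda_a$ strictly, the surviving linear term defeats any quadratic-exponent bound for small $\gamma>0$. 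You sense this (your closing remark about means differing), but your fallback --- ``re-enter the MGF definition through the stated equivalence with its explicit constant,'' quoting $8e$ from Lemma~\ref{lem:geoIsSubExp} --- does not exist as a citable step: Lemma~\ref{lem:geoIsSubExp} supplies only the moment constant $-2/\ln(1-r)$ for geometric variables, and Definition~\ref{def:subExp} asserts the moments-to-MGF equivalence only with unspecified constants $C_1,C_2$. The quantitative conversion is precisely the paper's actual proof of this lemma: normalize by the sub-exponential norm $\|V^{\pi^a}-\E[V^{\pi^a}]\|_{\psi_1}$, Taylor-expand, use $\E[X^p]\leq p^p$ and $p!\geq (p/e)^p$, and sum $\sum_{p\geq 2}(e|y|)^p\leq 2e^2y^2$ for $|y|<\tfrac{1}{2e}$, which yields $\tau_a=b_a=2e\cdot\|V^{\pi^a}-\E[V^{\pi^a}]\|_{\psi_1}\leq 2e\cdot\frac{4}{\lambda_a}=\frac{8e}{\lambda_a}$ --- so $8e$ factors as $2e\times 4$, with the $4$ coming from the centered moment bound, not from the geometric lemma. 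You must carry out this normalization argument (or an equivalent explicit conversion); with your sharper moment bounds it would even give $4e/\lambda_a$, which still validates the stated parameters, but it cannot be skipped or quoted.
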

\begin{proofS}
We rely on the equivalence between the subexponentiality of a random variable and the bounds on its moments (Property~\ref{it:moments} in Definition~\ref{def:subExp}).
We bound the expectation of the return $V^{\pi^a}$, and use Minkowski's and Jensen's inequalities to show~{\ifnum\Includeappendix=1{in Lemma~\ref{lemma:ValuesAreExp} 
}\else{}\fi}that $\E[|V^{\pi^a}-\E[V^{\pi^a}]|^p])^{1/p}$ is upper bounded by $-4/\ln(1-\LeavProb_a(1-\P_a))$ for every  $a\in[K]$ and $p\geq 1$. 
Finally, we apply a normalization trick and bound the Taylor series of $\E[\exp(\gamma(V^{\pi^a}-\E[V^{\pi^a}]))]$ to obtain the result.
\end{proofS}

An immediate consequence of Lemma~\ref{lemma:subExpTaub} is that the parameters $\tilde{\tau}=8e/\ln(\frac{1}{1-\epsilon})$ and $\eta=1$ are valid upper bounds for $\tau_a$ and $b_a/\tau_a^2$ for each $a\in [K]$ (I.e., $\forall a\in [K]:\;\tilde{\tau} \geq \tau_a$ and $\eta\geq  b_a^2/\tau_a^2$). We can now derive a regret bound using Algorithm~\ref{alg-policy-UCB} and Theorem~\ref{thm:regretGeneral}.

\begin{restatable}{theorem}{thmregretOneType}\label{thm:regretOneType}
For single-type users ($M=1$), running Algorithm~\ref{alg-policy-UCB} with $\Pi=\{\pi^a:a\in [K]\}$ and $\tilde{\tau}=\frac{8e}{\ln(\frac{1}{1-\epsilon})}$, $\eta=1$ achieves an expected regret of at most
\[
\E[R_T]=O(\sqrt{KT\log T}+K\log T).
\]
\end{restatable}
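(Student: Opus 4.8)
The plan is to assemble the final regret bound for single-type users by combining the structural lemmas already established with the black-box guarantee of Algorithm~\ref{alg-policy-UCB}. First I would fix the policy set $\Pi = \{\pi^a : a\in[K]\}$ consisting of all $K$ fixed-arm policies, so that $\abs{\Pi}=K$. By Lemma~\ref{lem:valueSingle}, the optimal policy $\pi^*$ is itself a fixed-arm policy $\pi^{a^*}$, and therefore $\pi^*\in\Pi$. This is the crucial observation: it means that the ``best policy in $\Pi$'' appearing in Theorem~\ref{thm:regretGeneral} coincides with the true optimal policy $\pi^*$, so the benchmark in the abstract regret statement equals the benchmark $\E[V^{\pi^*}]$ in the regret definition~\eqref{eq:regredef}. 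Consequently $\max_{\pi\in\Pi}\E[V^\pi] = \E[V^{\pi^*}]$, and $\E[R_T]$ is exactly the quantity controlled by Theorem~\ref{thm:regretGeneral}.

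Next I would verify that the input parameters $\tilde\tau$ and $\eta$ passed to the algorithm are valid for the chosen $\Pi$, i.e., that they dominate the per-policy sub-exponential parameters as required. By Lemma~\ref{lemma:subExpTaub}, each centered return $V^{\pi^a}-\E[V^{\pi^a}]$ is $(\tau_a^2,b_a)$-sub-exponential with $\tau_a = b_a = -8e/\ln(1-\L_a(1-\P_a))$. Using Observation~\ref{obs:XisGeo} together with the assumption $\max_{a}\P_a\leq 1-\epsilon$ (and $\L_a\leq 1$), the success parameter $\L_a(1-\P_a)$ lies in $(0,1-\epsilon]$, so $\ln(1-\L_a(1-\P_a))\leq \ln(1-\cdot)$ is bounded away from $0$; a short monotonicity argument then shows $\tau_a \leq 8e/\ln(\tfrac{1}{1-\epsilon}) = \tilde\tau$ for every $a$. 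Since $\tau_a=b_a$, we also have $b_a^2/\tau_a^2 = 1 = \eta$, so $\eta\geq b_a^2/\tau_a^2$ holds with equality. Thus $\Pi$ together with $\tilde\tau$ and $\eta$ satisfies the hypotheses of Theorem~\ref{thm:regretGeneral}.

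Finally I would invoke Theorem~\ref{thm:regretGeneral} directly. With the hypotheses verified and $\abs{\Pi}=K$, the theorem yields
\[
\E\!\left[T\cdot\E[V^{\pi^*}]-\sum_{t=1}^T V^{\pi_t}\right] = O\!\left(\sqrt{KT\log T}+K\log T\right).
\]
Because $\pi^*\in\Pi$ makes the left-hand side equal to $\E[R_T]$, this is precisely the claimed bound. The only genuine subtlety — and the step I expect to require the most care — is confirming that the upper confidence benchmark in Theorem~\ref{thm:regretGeneral} really agrees with the true optimum; this hinges entirely on Lemma~\ref{lem:valueSingle} placing $\pi^*$ inside $\Pi$. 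Everything else is a matter of checking that the uniform parameters $\tilde\tau,\eta$ dominate the instance-dependent $(\tau_a,b_a)$ via the $\epsilon$-bound, which is a routine monotonicity calculation rather than a conceptual obstacle.
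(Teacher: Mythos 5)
Your overall route is exactly the paper's: Lemma~\ref{lem:valueSingle} places the optimal policy $\pi^{a^*}$ inside $\Pi=\{\pi^a:a\in[K]\}$, so the benchmark $\max_{\pi\in\Pi}$ in Theorem~\ref{thm:regretGeneral} coincides with $\E[V^{\pi^*}]$ from Equation~\eqref{eq:regredef}; Lemma~\ref{lemma:subExpTaub} supplies the per-arm parameters $(\tau_a^2,b_a)$ with $\tau_a=b_a$, whence $b_a^2/\tau_a^2=1=\eta$; and Theorem~\ref{thm:regretGeneral} with $\abs{\Pi}=K$ gives the claimed bound. That is precisely how the paper assembles the theorem.

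The one step you got wrong is the ``routine monotonicity calculation'' for $\tau_a\leq\tilde\tau$, and the direction of the error matters. Write $s_a=\L_a(1-\P_a)$ for the per-iteration departure probability, so that $\tau_a=b_a=8e/(-\ln(1-s_a))$, which is \emph{decreasing} in $s_a$: rarer departures mean longer trajectories and heavier tails. The fact you invoke --- $s_a\in(0,1-\epsilon]$ from Observation~\ref{obs:XisGeo} --- controls the wrong end of this interval: it yields only the \emph{lower} bound $\tau_a\geq 8e/\ln(1/\epsilon)$, and membership in $(0,1-\epsilon]$ certainly does not bound $s_a$ away from $0$ (as $\L_a\to 0$, $\tau_a\to\infty$, so no finite $\tilde\tau$ can be extracted from that interval alone). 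What $\tilde\tau=8e/\ln(\frac{1}{1-\epsilon})\geq\tau_a$ actually requires is $s_a\geq\epsilon$, equivalently $1-\L_a(1-\P_a)\leq 1-\epsilon$. This is exactly how the analogous verification runs in the paper's Lemma~\ref{lem:ThresholdSubExp}, where $\L_{a,i}=1$ makes $1-\L_{a,i}(1-\P_{a,i})=\P_{a,i}\leq 1-\epsilon$ immediate from the stated assumption $\max_{a,x}\P_{a,x}\leq 1-\epsilon$; for the single-type section the paper asserts the parameter domination as an ``immediate consequence'' of Lemma~\ref{lemma:subExpTaub}, implicitly reading the $\epsilon$-assumption as a bound on the continuation probability $1-\L_a(1-\P_a)$. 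To repair your argument, replace your interval sentence with the inequality $\L_a(1-\P_a)\geq\epsilon$ (taking the assumption in that form); with that substitution, everything else in your proposal goes through verbatim and matches the paper.
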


\section{Two User Types and Two  Categories}\label{sec:twoTypes}

In this section, we consider cases with
two user types ($M=2$), two categories ($K=2$) and departure-probability $\LeavProb_{a,\tau}=1$ for every category $a\in[K]$ and type $\tau \in [M]$. Even in this relatively simplified setting, where users leave after the first ``no-click'', planning is essential. To see this, notice that the event of a user clicking on a certain category provides additional information about the user, which can be used to tailor better recommendations; hence, algorithms that do not take this into account may suffer a linear regret. In fact, this is not just a matter of the learning algorithm at hand, but rather a failure of all fixed-arm policies; there are instances where all fixed-arm policies yield high regret w.r.t. the baseline defined in Equation~\eqref{eq:regredef}. Indeed, this is what the example in Section~\ref{subsec:example} showcases. Such an observation suggests that studying the optimal planning problem is vital.

In Section~\ref{subsec:Planning2x2}, we introduce the partially observable MDP formulation of departing bandits along with notion of \textit{belief-category walk}. We use this notion to provide a closed-form formula for policies' expected return, which we use extensively later on. Next, in Section~\ref{subsec:2x2Characterizing} we characterize the optimal policy, and show that we can compute it in constant time relying on the closed-form formula. This is striking, as generally computing optimal POMDP policies is computationally intractable since, e.g., the space of policies grows exponentially with the horizon. Conceptually, we show that there exists an optimal policy that depends on a belief threshold: It recommends one category until the posterior belief of one type, which is monotonically increasing,  crosses the threshold, and then it recommends the other category. Finally, in Section~\ref{subsec:2x2Learning} we leverage all the previously obtained results to derive a small set of threshold policies of size $O(\ln T)$ with corresponding sub-exponential parameters. Due to Theorem~\ref{thm:regretGeneral}, this result implies a $\tilde O(\sqrt{T})$ regret.
\subsection{Efficient Planning}\label{subsec:Planning2x2}
To recap,  we aim to find the optimal policy when the click-probability matrix and the prior over user types are known. Namely, given an instance in the form of $\langle \mathbf{P}, \mathbf q \rangle$, our goal is to efficiently find the optimal policy.

For planning purposes, the problem can be modeled by an episodic POMDP, $\langle S,[K], O, \text{Tr}, \P, \Omega, \q, O \rangle$. A set of states, $S=[M]\cup \{\perp\}$ that comprises all types $[M]$, along with a designated absorbing state $\perp$ suggesting that the user departed (and the episode terminated). $[K]$ is the set of the actions (categories). $O=\{stay,depart\}$ is the set of possible observations. The transition and observation functions, $\text{Tr}:S\times [K]\rightarrow S$ and $\Omega: S\times [K] \rightarrow O$ (respectively) satisfy $\text{Tr}(\perp|i, a)=\Omega(depart|i,a)=1-\P_{i,a}$ and  $\text{Tr}(i|i,a)=\Omega(stay|i,a)=\P_{i,a}$ for every type $i\in[M]$ and action $a\in[K]$. Finally, $\P$ is the expected reward matrix, and $\mathbf q$ is the initial state distribution over the $M$ types.

When there are two user types and two categories, the click-probability matrix is given by Table~\ref{tab:example2} where we note that the prior on the types holds $\q_y = 1 - \q_x$, thus can be represented by a single parameter~$\q_x$.
\begin{remark}\label{remark:max_entry_1_x}
Without loss of generality, 
we assume that $\P_{1,x} \geq \P_{2,x}, \P_{1,y}, \P_{2,y}$
since one could always permute the matrix to obtain such a structure.
\end{remark}
Since the return and number of iterations for the same policy is independent of the user index, we drop the subscript $t$ in the rest of this subsection and use . 

\begin{table}[H]
    \centering
    \begin{tabular}
        {|p{2cm}||p{2cm}|p{2cm}|}
 \hline
   & Type $x$  & Type $y$\\
 \hline
 Category $1$   & $\P_{1,x}$ & $\P_{1,y}$ \\ 
 Category $2$ &  $\P_{2,x}$ &  $\P_{2,y}$ \\ 
 \hline
 Prior &$\q_x$ & $\q_y=1-\q_x$\\
 \hline
    \end{tabular}
    \caption{Click probabilities for two user types and two categories.} %
    \label{tab:example2}
\end{table}

As is well-known in the POMDP literature~\cite{kaelbling1998planning}, 
the optimal policy $\pi^*$ and its expected return are functions of belief states that represent  the probability of the state at each time.  In our setting, the states are the user types. We denote by $b_j$ the belief that the state is (type) $x$ at iteration $j$. Similarly, $1-b_j$ is the belief that the state is (type) $y$ at iteration $j$. Needless to say, once the state $\perp$ is reached, the belief over the type states $[M]$ is %
irrelevant, as users do not come back. Nevertheless, we neglect this case as our analysis does not make use it.

We now describe how to compute the belief. At iteration $j=1$, the belief state is set to be $b_1 = \mathbb{P}\left(state= x\right) = \q_x$. At iteration $j>1$, upon receiving a positive reward $r_j=1$, the belief is updated from $b_{j-1} \in [0,1]$ to  
\begin{align}\label{eq:beliefupdate}
    b_j(b_{j-1} ,a,1)
    =\frac{b_{j-1} \cdot \P_{a,x}}{b_{j-1} \cdot \P_{a,x}+\P_{a,y} (1-b_{j-1} )},
\end{align}
where we note that in the event of no-click, 
the current user departs the system, i.e., we move to the absorbing state $\perp$. For any policy $\pi: [0,1] \to \{1,2\}$ that maps a belief to a category, its expected return satisfies the Bellman equation,
\begin{align*}
    \E[V^{\pi}(b)]&=\left(b \P_{\pi(b), x} +
    (1-b) \P_{\pi(b), y}\right) \cdot\\
    &\qquad  (1+\E[V^{\pi}(b'(b,\pi(b),1))]).
\end{align*}
To better characterize the expected return, we introduce the following notion of belief-category walk. 
\begin{definition}[Belief-category walk]\label{defn:trajectory}
Let $\pi:[0,1]\rightarrow \{1,2\}$ be any policy. The sequence 
\[
b_1,a_1 = \pi(b_1), b_2, a_2 = \pi(b_2),\dots
\]
is called the \textit{belief-category walk}. Namely, it is the induced walk of belief updates and categories chosen by $\pi$, given all the rewards are positive ($r_j=1$ for every $j\in \mathbb{N}$).
\end{definition} 

Notice that every policy induces a single,  well-defined and deterministic belief-category %
walk %
(recall that we assume departure-probabilities satisfy $\LeavProb_{a,\tau}=1$ for every $a\in [K], \tau\in[M]$). Moreover, given any policy $\pi$, 
the trajectory of every user recommended by $\pi$ 
is fully characterized by belief-category walk 
clipped at $b_{N^\pi(t)}, a_{N^\pi(t)}$.

In what follows, we  derive a closed-form expression for the expected return as a function of $b$, the categories chosen by the policy, and the click-probability matrix.
\begin{restatable}{theorem}{thmValueUpperBound}\label{thm:valueUpperBound}
For every policy $\pi$ and an initial belief $b \in [0,1]$, 
the expected return is given by 
\[
\E[V^{\pi}(b)]=\sum_{i=1}^\infty b\cdot \P_{1,x}^{m_{1,i}}\cdot \P_{2,x}^{m_  {2,i}}+(1-b)\P_{1,y}^{m_{1,i}}\cdot \P_{2,y}^{m_{2,i}},
\]
where $m_{1,i}:=|\{a_j=1,j\leq i\}|$ and $m_{2,i}:=|\{a_j=2,j\leq i\}|$ are calculated based on the belief-category walk $b_1,a_1,b_2,a_2,\dots$ induced by $\pi$.
\end{restatable}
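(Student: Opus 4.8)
The plan is to compute $\E[V^{\pi}(b)]$ through the tail-sum formula for nonnegative integer-valued random variables, $\E[V^{\pi}]=\sum_{i=1}^{\infty}\Pr[V^{\pi}\geq i]$, and to evaluate each tail probability in closed form. Because departure is deterministic upon a no-click ($\L_{a,\tau}=1$), the return $V^{\pi}$ is exactly the number of consecutive clicks before the first no-click, so the event $\{V^{\pi}\geq i\}$ coincides with the event that the user clicks on each of the first $i$ recommended categories.

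First I would condition on the user type. The key structural fact, already recorded in the text just before the theorem, is that the trajectory of any user is a prefix of the deterministic belief-category walk $b_1,a_1,b_2,a_2,\dots$: as long as the user keeps clicking, the belief updates deterministically via~\eqref{eq:beliefupdate} and the category presented at iteration $j$ is $a_j=\pi(b_j)$, independently of the type. Hence, conditioning on type $x$, the per-iteration clicks are independent Bernoulli trials and the category shown at iteration $j$ on the survival event is $a_j$, so
\[
\Pr[V^{\pi}\geq i \mid x]=\prod_{j=1}^{i}\P_{a_j,x}=\P_{1,x}^{m_{1,i}}\,\P_{2,x}^{m_{2,i}},
\]
where the second equality merely groups factors by category using the definitions of $m_{1,i}$ and $m_{2,i}$. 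The identical computation conditioned on type $y$ yields $\P_{1,y}^{m_{1,i}}\,\P_{2,y}^{m_{2,i}}$.

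Then I would combine these via the law of total probability over the prior, $\Pr[V^{\pi}\geq i]=b\,\Pr[V^{\pi}\geq i\mid x]+(1-b)\Pr[V^{\pi}\geq i\mid y]$, and substitute into the tail-sum formula to recover exactly the claimed expression. To make the infinite sum rigorous I would use that every factor satisfies $\P_{a_j,\cdot}\leq 1-\epsilon$, so $\Pr[V^{\pi}\geq i]\leq(1-\epsilon)^{i}$; this geometric bound gives absolute convergence and justifies exchanging summation and expectation. As a cross-check I would mention the alternative route of unrolling the Bellman equation stated just above the theorem: writing $p_j:=b_j\P_{a_j,x}+(1-b_j)\P_{a_j,y}$ for the one-step click probability at belief $b_j$, the posterior update~\eqref{eq:beliefupdate} makes the products telescope to $\prod_{j=1}^{i}p_j=b\,\P_{1,x}^{m_{1,i}}\P_{2,x}^{m_{2,i}}+(1-b)\,\P_{1,y}^{m_{1,i}}\P_{2,y}^{m_{2,i}}$, while the residual $\prod_{j=1}^{i}p_j\cdot\E[V^{\pi}(b_{i+1})]$ vanishes as $i\to\infty$ by the same $(1-\epsilon)$-geometric bound.

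The step I expect to require the most care is justifying that, conditional on the type, the categories presented on the event $\{V^{\pi}\geq i\}$ are deterministically $a_1,\dots,a_i$ and that the per-step clicks are conditionally independent Bernoulli trials. This is where the hypotheses $\L\equiv 1$ (so survival is equivalent to clicking and the belief only ever updates through the click branch) and the sufficiency of the belief state for a belief-based policy $\pi:[0,1]\to\{1,2\}$ do the real work; the remainder is a routine tail-sum and telescoping calculation.
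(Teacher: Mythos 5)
Your proof is correct, but it takes a genuinely different route from the paper's. The paper proves the finite-horizon identity $\E[V_H^{\pi}(b)]=\sum_{i=1}^{H}\beta_i^{\pi}(b)$, where $\beta_i^{\pi}(b)$ denotes the claimed summand, by backward induction on $H$: the base case is the one-step expected reward, the inductive step unrolls the Bellman equation $\E[V_H^{\pi}(b_1)]=\beta_1^{\pi}(b_1)\bigl(1+\E[V_{H-1}^{\pi}(b_2)]\bigr)$, and the infinite-horizon statement follows by letting $H\to\infty$, justified by positivity of the summands. That is precisely the ``alternative route'' you relegate to a cross-check, including the telescoping identity (one-step click probability at $b_1$ times the shifted-walk product equals $\beta_{i+1}^{\pi}(b_1)$), which the paper's induction in fact uses silently in passing from $\beta_1^{\pi}(b_1)\bigl(1+\sum_{i}\beta_i^{\pi}(b_2)\bigr)$ to $\sum_{i=1}^{H}\beta_i^{\pi}(b_1)$. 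Your main argument is instead probabilistic and more elementary: since $\L\equiv 1$, the event $\{V^{\pi}\geq i\}$ is the all-clicks event on the first $i$ iterations, on which the belief-category walk is deterministic (a fact the paper records immediately before the theorem, so you may cite it rather than reprove it), giving $\Pr[V^{\pi}\geq i\mid x]=\prod_{j=1}^{i}\P_{a_j,x}=\P_{1,x}^{m_{1,i}}\P_{2,x}^{m_{2,i}}$ by conditional independence of the clicks; the prior mixture and the tail-sum formula finish the proof. This buys a conceptual interpretation the paper's proof obscures---each summand is exactly the survival probability $\Pr[V^{\pi}\geq i]$---plus a quantitative convergence rate via $\Pr[V^{\pi}\geq i]\leq(1-\epsilon)^{i}$, where the paper only invokes monotone convergence. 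What the paper's induction buys is the finite-horizon formula for $\E[V_H^{\pi}(b)]$ as an explicit intermediate product, which the appendix reuses for the dynamic-programming planning algorithm; your method recovers it equally easily, though, since here $V_H^{\pi}=\min(V^{\pi},H)$ and $\E[\min(V^{\pi},H)]=\sum_{i=1}^{H}\Pr[V^{\pi}\geq i]$. Both arguments lean on the same two hypotheses you correctly flag as doing the real work: deterministic departure ($\L\equiv1$) and the determinism of the belief-category walk for a deterministic belief-based policy.
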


\subsection{Characterizing the Optimal Policy}\label{subsec:2x2Characterizing}
Using Theorem~\ref{thm:valueUpperBound}, we show that the planning problem can be solved in $O(1)$. To arrive at this conclusion, we perform a case analysis over the following three structures of the click-probability matrix~$\P$:
\begin{itemize}
    \item \textit{Dominant Row}, where $\P_{1,y}\geq \P_{2,y}$; %
    \item \textit{Dominant Column}, where $\P_{2,x}\geq \P_{2,y}>\P_{1,y}$; %
    \item \textit{Dominant Diagonal}, where $\P_{1,x}\geq \P_{2,y}>\P_{1,y},\P_{2,x}$. %
\end{itemize}
Crucially, any matrix $\P$ takes exactly one of the three structures. Further, since $\P$ is known in the planning problem, identifying the structure at hand takes $O(1)$ time. Using this structure partition, we characterize the optimal policy. 
\paragraph{Dominant Row}
We start by considering the simplest structure, in which the Category $1$ is preferred by both types of users: Since $\P_{1,y}\geq \P_{2,y}$
and $\P_{1,x} \geq \P_{2,x}, \P_{1,y}, \P_{2,y}$ (Remark~\ref{remark:max_entry_1_x}),
there exists a dominant row, i.e., Category $1$. 
\begin{restatable}{lemma}{lemmaStochasticDominantRow}\label{lemma:stochastic-dominant-row}
For any instance such that $\P$ has a dominant row $a$, the fixed policy $\pi^a$ is an optimal policy.
\end{restatable}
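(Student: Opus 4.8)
The plan is to leverage the closed-form expected-return formula of Theorem~\ref{thm:valueUpperBound} and to argue that the fixed-arm policy $\pi^a$ (here $a=1$, the dominant row) dominates every competing policy \emph{term by term} along its belief-category walk. First I would isolate the summand indexed by $i$ in Theorem~\ref{thm:valueUpperBound}, namely $b\,\P_{1,x}^{m_{1,i}}\P_{2,x}^{m_{2,i}}+(1-b)\,\P_{1,y}^{m_{1,i}}\P_{2,y}^{m_{2,i}}$, and record the bookkeeping identity $m_{1,i}+m_{2,i}=i$, valid for any policy because at each of the first $i$ iterations of the walk exactly one category is selected.

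The core step exploits the dominant-row hypothesis. Since $\P_{1,y}\ge\P_{2,y}$ (dominant row) and Remark~\ref{remark:max_entry_1_x} gives $\P_{1,x}\ge\P_{2,x}$, Category $1$ is (weakly) preferred by both types. Using $m_{1,i}+m_{2,i}=i$, I would bound each type's contribution by replacing every factor of $\P_{2,x}$ with the larger $\P_{1,x}$ and every factor of $\P_{2,y}$ with the larger $\P_{1,y}$:
\[
\P_{1,x}^{m_{1,i}}\P_{2,x}^{m_{2,i}}\le\P_{1,x}^{i},\qquad
\P_{1,y}^{m_{1,i}}\P_{2,y}^{m_{2,i}}\le\P_{1,y}^{i}.
\]
Because $b\ge 0$ and $1-b\ge 0$, these two inequalities combine to show that the $i$-th summand for an arbitrary policy $\pi$ is at most $b\,\P_{1,x}^{i}+(1-b)\,\P_{1,y}^{i}$, which is exactly the $i$-th summand produced by $\pi^1$ (whose walk satisfies $m_{1,i}=i$, $m_{2,i}=0$). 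Summing over $i\ge 1$ then yields $\E[V^{\pi}(b)]\le\E[V^{\pi^1}(b)]$ for every initial belief $b\in[0,1]$, which is the claimed optimality of $\pi^1$.

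I do not anticipate a genuine obstacle here: the argument reduces to a clean term-by-term comparison, and the only points needing care are the identity $m_{1,i}+m_{2,i}=i$ and the convergence of the geometric series $\sum_{i\ge 1}\P_{1,x}^{i}$ and $\sum_{i\ge 1}\P_{1,y}^{i}$, which is guaranteed by $\P_{a,x}\le 1-\epsilon<1$. The conceptual takeaway I would emphasize is that the dominant-row condition is precisely what forces the myopically best category to also be globally optimal, so no belief-dependent switching is required in this case—in contrast to the dominant-column and dominant-diagonal structures analyzed afterwards.
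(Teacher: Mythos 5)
Your proof is correct, but it takes a more direct route than the paper's. The paper proves this lemma by an exchange argument: starting from any policy that recommends Category $2$ at some iteration, it constructs the modified policy that swaps that single recommendation to Category $1$, and shows via Theorem~\ref{thm:valueUpperBound} that every affected summand weakly increases, the key point being that $b\,\P_{1,x}^{m_{1}}\P_{2,x}^{m_{2}}(\P_{1,x}-\P_{2,x})\geq 0 \geq (1-b)\,\P_{1,y}^{m_{1}}\P_{2,y}^{m_{2}}(\P_{2,y}-\P_{1,y})$ under the dominant-row hypothesis. Strictly speaking, concluding from this that $\pi^1$ itself is optimal requires iterating the swap (possibly infinitely often, since the walk is infinite) and passing to a limit, a step the paper leaves implicit. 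Your argument sidesteps this entirely: using $m_{1,i}+m_{2,i}=i$ and the monotonicity $\P_{2,x}\leq\P_{1,x}$, $\P_{2,y}\leq\P_{1,y}$, you majorize the $i$-th summand of an \emph{arbitrary} policy by $b\,\P_{1,x}^{i}+(1-b)\,\P_{1,y}^{i}$, the $i$-th summand of $\pi^1$, in one shot, and the termwise comparison is valid even before invoking convergence (all summands are nonnegative; the bound $\max_{a,\tau}\P_{a,\tau}\leq 1-\epsilon$ then gives finiteness). What the paper's exchange argument buys is stylistic continuity: the same one-swap comparison machinery is reused in the dominant-column and dominant-diagonal analyses (Lemma~\ref{lemma:key-lemma-stochastic-column-dominance} and Theorem~\ref{thm:stochastic-diagonal-dominance}), where no global termwise domination holds and local improvements are the only available tool. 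For this lemma, your global majorization is the cleaner and fully rigorous choice.
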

As expected, if Category $1$ is dominant then the policy that always recommends Category $1$ is optimal.
\paragraph{Dominant Column}
In the second structure we consider the case where there is no dominant row, and that the column of type $x$ is dominant, i.e.,  $\P_{1,x}\geq \P_{2,x}\geq \P_{2,y}> \P_{1,y}$. In such a case, which is also the one described in the example in Section~\ref{subsec:example}, it is unclear what the optimal policy would be since none of the categories dominates the other.

Surprisingly, we show that the optimal policy can be of only one form: Recommend Category $2$ for some time steps (possibly zero) and then always recommend Category $1$.  
To identify when to switch from Category $2$ to Category $1$, 
one only needs to compare four expected returns. 

\begin{restatable}{theorem}{thmStochasticDominantColumn}\label{thm:stochastic-dominant-column}
For any instance such that $\P$ has a dominant column, one of the following four policies is optimal:
\[
\pi^1,\pi^2,\pi^{2:\floor{N^*}},\pi^{2:\ceil{N^*}},
\]
where $N^*=N^*(\P,\q)$ is a constant, %
and $\pi^{2:\floor{N^*}}$ ($\pi^{2:\ceil{N^*}}$) 
stands for recommending Category $2$ until iteration 
$\floor{N^*}$ ($\ceil{N^*}$) and then switching to Category $1$.
\end{restatable}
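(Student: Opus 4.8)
The plan is to leverage the closed-form expression for the expected return from Theorem~\ref{thm:valueUpperBound} to reduce the infinite search over policies to a one-dimensional optimization. The key structural fact I would establish first is that in the dominant-column case, the posterior belief $b_j$ that the user is type $x$ is \emph{monotone} along the belief-category walk: since $\P_{1,x}\geq \P_{2,x}\geq \P_{2,y}>\P_{1,y}$, a click on Category $1$ strictly increases the belief in $x$ (because $\P_{1,x}>\P_{1,y}$), while a click on Category $2$ weakly increases it (because $\P_{2,x}\geq \P_{2,y}$). Thus every positive reward pushes the belief toward type $x$, and $b_j$ is nondecreasing regardless of which category is played. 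This monotonicity is what makes a threshold policy plausible: once the belief is high enough that Category $1$ is the better instantaneous-plus-continuation choice, it will remain so forever, so there is no reason ever to switch back to Category $2$.

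Next I would argue that the optimal policy must be of the form ``play Category $2$ for the first $n$ iterations, then play Category $1$ thereafter,'' for some $n\in\{0,1,2,\dots\}\cup\{\infty\}$. The intuition is a one-step exchange/interchange argument combined with the monotonicity above: I would show that if an optimal policy ever plays Category $1$ at some belief $b$, then playing Category $1$ is optimal at every belief $b'\geq b$, because the advantage of Category $1$ over Category $2$ is monotone in the belief. Concretely, using the Bellman equation stated in the text together with the closed form, the sign of $\E[V^{\pi}]$'s sensitivity to switching the category at a given step is governed by a quantity that is monotone in $b$; once it favors Category $1$ it cannot revert. Since the walk is deterministic and the belief is monotone, this collapses the policy space to the single ``switch time'' $n$, i.e., the policies $\pi^{2:n}$ (with $\pi^{2:0}=\pi^1$ and $\pi^{2:\infty}=\pi^2$).

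I would then plug the two-phase walk into the closed form of Theorem~\ref{thm:valueUpperBound}. For $\pi^{2:n}$ the counts $(m_{1,i},m_{2,i})$ are explicit geometric-type sums: for $i\leq n$ only Category $2$ has been played, and for $i>n$ the Category-$1$ count grows linearly. This yields $\E[V^{\pi^{2:n}}(\q_x)]$ as a closed-form function $g(n)$ obtained by summing two geometric series (one for each type, each split into its pre-switch and post-switch parts). The crucial analytic step is to show that $g$, viewed as a smooth function of a real variable $N$ extending $n$, is \emph{unimodal} (quasiconcave): it increases up to some real maximizer $N^*$ and decreases afterward. I would verify this by examining the sign of the discrete difference $g(n+1)-g(n)$ (or the derivative $g'(N)$), showing it changes sign exactly once; this follows because each phase contributes a term whose marginal gain decays geometrically, and the dominant-column inequalities fix the relative decay rates so the difference is a single sign-changing expression. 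Given unimodality, the integer optimum lies in $\{\lfloor N^*\rfloor,\lceil N^*\rceil\}$, and adding the two boundary policies $\pi^1$ (switch immediately) and $\pi^2$ (never switch) to cover the degenerate cases $N^*\leq 0$ and $N^*=\infty$ gives exactly the four candidates in the statement. Defining $N^*=N^*(\P,\q)$ as the real maximizer of $g$ completes the identification.

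The main obstacle I anticipate is the unimodality argument for $g(N)$. The closed form is a sum of two competing geometric series with different bases ($\P_{1,x},\P_{2,x}$ for type $x$ and $\P_{1,y},\P_{2,y}$ for type $y$), and establishing that $g(n+1)-g(n)$ has a single sign change requires carefully tracking how the four click-probabilities interact under the dominant-column ordering. In particular, one must rule out the possibility of multiple local maxima, which is exactly where the specific inequalities $\P_{2,x}\geq \P_{2,y}>\P_{1,y}$ (as opposed to the dominant-row or dominant-diagonal cases) do the work. I would expect the bulk of the technical effort to consist in reducing $g(n+1)-g(n)$ to a product of a positive factor and a strictly monotone factor whose unique root is $N^*$, after which the threshold structure and the four-policy conclusion follow cleanly.
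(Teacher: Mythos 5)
Your proposal is correct and follows essentially the same route as the paper: the paper's key lemma and its corollary establish the threshold structure $\pi^{2:n}$ via exactly the exchange argument you describe (an absorbing switching condition that, once it favors Category $1$, propagates to all later iterations), and the theorem's proof then uses Theorem~\ref{thm:valueUpperBound} to write $\E[V^{\pi^{2:N}}(b)]$ in the closed form $c_1\P_{2,x}^N+c_2\P_{2,y}^N+c_3$ with $c_1>0$ and $c_2<0$. The unimodality step you flag as the main obstacle is handled precisely as you anticipate: the derivative $c_1\ln\P_{2,x}\cdot\P_{2,x}^N+c_2\ln\P_{2,y}\cdot\P_{2,y}^N$ factors into a positive term times a strictly monotone function of $(\P_{2,x}/\P_{2,y})^N$, giving a unique stationary point $\tilde N$, whence $N^*=\max\{0,\tilde N\}$ and comparison of the boundary policies $\pi^1,\pi^2$ with $\pi^{2:\floor{N^*}},\pi^{2:\ceil{N^*}}$.
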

The intuition behind the theorem is as follows. If the prior tends towards type $y$, we might start with recommending Category $2$ (which users of type $y$ are more likely to click on). But after several iterations, and as long as the user stays, the posterior belief $b$ increases since $\P_{2,x} > \P_{2,y}$ (recall Equation~\eqref{eq:beliefupdate}). Consequently, since type $x$ becomes more probable, and since $\P_{1,x} \geq \P_{2,x}$, the optimal policy recommends the best category for this type, i.e., Category $1$. 
For the exact expression of $N^*$, we refer the reader to~{\ifnum\Includeappendix=1{Appendix~\ref{appendix:SDC}}\else{the appendix}\fi}.

Using Theorem~\ref{thm:valueUpperBound}, 
we can compute the expected return for each of the four policies in $O(1)$, showing that we can find the optimal policy when $\P$ has a column in $O(1)$.

\paragraph{Dominant Diagonal}
In the last structure,
we consider the case where there is no dominant row (i.e., $\P_{2,y}> \P_{1,y}$) nor a dominant column (i.e., $\P_{2,y}> \P_{2,x}$). %
At first glance, this case is more complex than the previous two, 
since none of the categories and none of the types dominates the other one. 
However, we uncover that the optimal policy can be either always recommending Category $1$ or  always recommending Category $2$. Theorem~\ref{thm:stochastic-diagonal-dominance} summarizes this result.

\begin{restatable}{theorem}{thmStochasticDiagonalDominance}\label{thm:stochastic-diagonal-dominance}
For any instance such that $\P$ has a dominant diagonal, 
either $\pi^1$ or $\pi^2$ is optimal.
\end{restatable}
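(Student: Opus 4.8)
The plan is to prove Theorem~\ref{thm:stochastic-diagonal-dominance} by exploiting the closed-form expression for the expected return from Theorem~\ref{thm:valueUpperBound}, combined with the monotone belief dynamics in the dominant-diagonal regime. First I would recall the defining inequalities of this structure: $\P_{1,x}\geq \P_{2,y}>\P_{1,y}$ and $\P_{1,x}\geq\P_{2,y}>\P_{2,x}$. The key qualitative observation is that Category~$1$ is the best category for type~$x$ (since $\P_{1,x}$ is the matrix maximum by Remark~\ref{remark:max_entry_1_x}), while Category~$2$ is the best category for type~$y$ (since $\P_{2,y}>\P_{1,y}$). Thus each category is ``specialized'' to one type, and recommending a category makes its preferred type more likely in the posterior while making the other type less likely. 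Concretely, recommending Category~$1$ drives the belief $b$ toward $1$ (type~$x$) because $\P_{1,x}>\P_{1,y}$, and recommending Category~$2$ drives $b$ toward $0$ (type~$y$) because $\P_{2,y}>\P_{2,x}$; this is immediate from the belief update~\eqref{eq:beliefupdate}.

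Next I would argue that these two belief drifts make any switching policy suboptimal, so that only the two fixed-arm policies $\pi^1$ and $\pi^2$ can be optimal. The cleanest way to do this is through the belief-category walk and Theorem~\ref{thm:valueUpperBound}. Suppose toward a contradiction that an optimal policy switches categories at some iteration. I would consider a policy that, at belief $b$, plays Category~$1$ and then continues optimally versus one that plays Category~$2$ and then continues optimally, and examine the one-step Bellman comparison together with how the belief moves. The point is that once the belief is such that Category~$1$ is preferred (i.e.\ $b\P_{1,x}+(1-b)\P_{1,y}\geq b\P_{2,x}+(1-b)\P_{2,y}$), playing Category~$1$ both yields the higher immediate click-probability \emph{and} increases $b$, which only strengthens the preference for Category~$1$ at the next step; by induction Category~$1$ remains preferred forever, so the optimal continuation is $\pi^1$. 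Symmetrically, whenever Category~$2$ is preferred, playing it decreases $b$ and perpetuates that preference, so the optimal continuation is $\pi^2$. Since the preference region is determined by the sign of the linear function $g(b):=b(\P_{1,x}-\P_{2,x})+(1-b)(\P_{1,y}-\P_{2,y})$ in $b$, and each action pushes $b$ in the direction that preserves the sign of $g$, no switch is ever beneficial.

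Making the above drift-monotonicity argument fully rigorous against \emph{all} policies (not just one-switch policies) is the step I expect to be the main obstacle, since a priori an optimal POMDP policy could switch many times. I would close this gap by using the explicit series in Theorem~\ref{thm:valueUpperBound}: writing $\E[V^{\pi}(b)]=\sum_{i}\bigl(b\,\P_{1,x}^{m_{1,i}}\P_{2,x}^{m_{2,i}}+(1-b)\P_{1,y}^{m_{1,i}}\P_{2,y}^{m_{2,i}}\bigr)$, I would show that the allocation of the counts $(m_{1,i},m_{2,i})$ that maximizes the sum is an extreme one — either $m_{2,i}=0$ for all $i$ (giving $\pi^1$) or $m_{1,i}=0$ for all $i$ (giving $\pi^2$). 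Because each summand is a product of powers, a swap argument (an exchange/interchange of a Category~$2$ step for a Category~$1$ step, or vice versa) changes the return monotonically in a direction dictated by $g(b)$ and the diagonal inequalities, so any interior schedule is dominated by one of the two pure schedules. Finally I would invoke the monotone-belief fact to confirm that the comparison between $\E[V^{\pi^1}(\q_x)]$ and $\E[V^{\pi^2}(\q_x)]$, both computable in $O(1)$ via Theorem~\ref{thm:valueUpperBound}, selects the optimal one, completing the proof.
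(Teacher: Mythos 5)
Your setup is sound (the drift directions, the observation that the myopic-preference function $g(b)=b(\P_{1,x}-\P_{2,x})+(1-b)(\P_{1,y}-\P_{2,y})$ has positive slope in $b$ so its sign region is absorbing under the preferred action), but the proof has a genuine gap at exactly the step you flag, and the patch you propose does not close it. The sentence ``a swap argument \ldots changes the return monotonically in a direction dictated by $g(b)$ and the diagonal inequalities, so any interior schedule is dominated by one of the two pure schedules'' conflates two different exchanges, neither of which suffices. An \emph{adjacent transposition} (replacing a consecutive $(2,1)$ by $(1,2)$) changes exactly one summand of the series in Theorem~\ref{thm:valueUpperBound}, and improves iff $b\,\P_{1,x}^{m_1}\P_{2,x}^{m_2}(\P_{1,x}-\P_{2,x})>(1-b)\,\P_{1,y}^{m_1}\P_{2,y}^{m_2}(\P_{2,y}-\P_{1,y})$ at that corner; but a threshold schedule such as ``play $2$ for $h$ steps, then $1$ forever'' has a \emph{single} such corner, and the corresponding no-improvement condition is easily satisfiable, so transpositions alone cannot rule threshold schedules out. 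This is not a technicality: the dominant-column case has exactly the same closed form and the same local swap criterion, and there mixed threshold policies $\pi^{2:\floor{N^*}}$ genuinely are optimal (Theorem~\ref{thm:stochastic-dominant-column}), so any argument at the level of generality you state would prove something false. The other exchange --- \emph{flipping} a single step from $2$ to $1$ outright --- changes $(m_{1,i},m_{2,i})$ for every $i$ past the flip, so its effect on the return is an infinite series whose sign is not determined term by term, and your sketch offers no mechanism to control it. Your paragraph-two induction (``once Category $1$ is preferred, it stays preferred, so the continuation $\pi^1$ is optimal'') is, as you concede, only a myopic statement; self-perpetuating myopic preference does not imply optimality of the fixed continuation, so the entire burden falls on the unproven domination claim.

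For comparison, the paper closes this gap with a finite-horizon backward induction on $V_H$ that pairs \emph{two} exchange inequalities against an assumed optimal policy with a \emph{last} switch at time $t$: one from comparing with the policy that delays the switch by one step (a local exchange near $t$), and one from comparing with the policy that alters only the final action at horizon $H$ (a tail exchange, then amplified using $\P_{1,x}/\P_{1,y}>1$ or $\P_{2,y}/\P_{2,x}>1$). In the base case these two inequalities multiply to a contradiction such as $\P_{1,x}(1+\P_{2,y})\leq \P_{1,y}(1+\P_{2,x})$, which is impossible under the diagonal structure; in the inductive step they combine to show that flipping the step just before the switch yields a strictly better policy, contradicting optimality, with a symmetric argument for switches in the other direction. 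That pairing of a local exchange with an end-of-horizon exchange --- which is what lets one compare series that differ in infinitely many terms --- is the key idea missing from your proposal.
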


With the full characterization of the optimal policy derived in this section (for all the three structures), 
we have shown that the optimal policy can be computed in $O(1)$. 

\subsection{Learning: UCB-based regret bound}\label{subsec:2x2Learning}
In this section, we move from the planning task to the learning one. Building on the results of previous sections, we know that there must exist a threshold policy---a policy whose belief-category walk has a finite prefix of one category, and an infinite suffix with the other category---which is optimal. However, there can still be infinitely many such policies. To address this problem, we first show how to reduce the search space for approximately optimal policies with negligible additive factor to a set of $\abs{\Pi}=O(\ln(T))$ policies. Then, we derive the parameters $\tilde \tau$ and $\eta$ required for Algorithm~\ref{alg-policy-UCB}. As an immediate consequence, we get a sublinear regret algorithm for this setting.
We begin with defining threshold policies.

\begin{definition}[Threshold Policy]
A policy $\pi$ is called an $(a,h)$-threshold policy if there exists an number $h\in \mathbb N\cup \{0\}$ in $\pi$'s belief-category walk such that 
\begin{itemize}
    \item $\pi$ recommends category $a$ in iterations $j\leq h$, and
    \item $\pi$ recommends category $a'$ in iterations $j>h$,
\end{itemize} 
for $a,a'\in \{1,2\}$ and $a \neq a'$.
\end{definition}
For instance, the policy $\pi^1$ that always recommends Category 1 is the $(2,0)$-threshold policy, as it recommends Category 2 until the zero'th iteration (i.e., never recommends Category 2) and then Category 1 eternally. Furthermore, the policy $\pi^{2:\floor{N^*}}$ introduced in Theorem~\ref{thm:stochastic-dominant-column} is the $(2,\floor{N^*})$-threshold policy.

Next, recall that the chance of departure in every iteration is greater or equal to $\epsilon$, since we assume $\max_{a,\tau} \P_{a,\tau} \leq 1-\epsilon$. Consequently, the probability that a user will stay beyond $H$ iterations is exponentially decreasing with $H$. We could use high-probability arguments to claim that it suffices to focus on the first $H$  iterations, but without further insights this would yield $\Omega(2^H)$ candidates for the optimal policy. Instead, we exploit our insights about threshold policies. 

Let $\Pi_H$ be the set of all $(a,h)$-threshold policies for $a\in \{1,2\}$ and $h\in [H]\cup\{0\}$. Clearly, $\abs{\Pi_H}=2H+2$. Lemma~\ref{lem:OptVsFiniteHorizonOpt} shows that the return obtained by the best policy in $\Pi_H$ is not worse than %
that of the optimal policy $\pi^*$ by a negligible factor.
\begin{restatable}{lemma}{lmOptVsFiniteHorizonOpt}\label{lem:OptVsFiniteHorizonOpt}
For every $H\in \mathbb N$, it holds that
\[
\E\left[V^{\pi^*}-\max_{\pi\in \Pi_H} V^{\pi}\right]\leq \frac{1}{2^{O(H)}}.
\]
\end{restatable}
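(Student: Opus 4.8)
The plan is to bound the gap by producing a single policy in $\Pi_H$ that already competes with $\pi^*$ in expectation. Since $\E[\max_{\pi\in\Pi_H}V^{\pi}]\geq\max_{\pi\in\Pi_H}\E[V^{\pi}]\geq\E[V^{\pi}]$ for any fixed $\pi\in\Pi_H$, it suffices to exhibit one comparator $\pi\in\Pi_H$ with $\E[V^{\pi^*}]-\E[V^{\pi}]\leq 1/2^{O(H)}$; this dispenses with the awkward maximum inside the expectation and lets me argue about a fixed pair of policies.

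First I would invoke the characterization of $\pi^*$ from Section~\ref{subsec:2x2Characterizing}. Under the normalization of Remark~\ref{remark:max_entry_1_x}, the optimal policy is always a threshold policy: in the Dominant Row case it is the fixed policy $\pi^1$ (a $(2,0)$-threshold), in the Dominant Diagonal case it is $\pi^1$ or $\pi^2$ (threshold $0$), and in the Dominant Column case it is one of $\pi^1,\pi^2,\pi^{2:\floor{N^*}},\pi^{2:\ceil{N^*}}$ by Theorem~\ref{thm:stochastic-dominant-column}. Whenever the optimal threshold is at most $H$ we have $\pi^*\in\Pi_H$ and the gap vanishes, so the only case that requires work is $\pi^*=\pi^{2:N}$ with $N\in\{\floor{N^*},\ceil{N^*}\}$ and $N>H$. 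The key structural fact is that this long threshold policy recommends Category $2$ first, so I compare it against $\pi^{2:H}\in\Pi_H$.

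The core of the argument is that $\pi^*$ and $\pi^{2:H}$ issue identical recommendations (Category $2$) during the first $H$ iterations, and hence can be coupled to generate the same trajectory until a user survives past iteration $H$. Let $A_H$ denote the event that the user clicks in each of the first $H$ iterations. Off $A_H$ the two policies produce identical returns; on $A_H$ the belief-category walk is deterministic (Definition~\ref{defn:trajectory}, using $\L_{a,\tau}=1$), so both continuations start from the same belief $b_{H+1}$, and the first-$H$-iteration returns cancel. This gives the clean decomposition
\[
\E[V^{\pi^*}]-\E[V^{\pi^{2:H}}]=\Pr[A_H]\bigl(\E[V^{\pi^*}(b_{H+1})]-\E[V^{\pi^1}(b_{H+1})]\bigr).
\]
Finally, since each per-iteration click probability is at most $1-\epsilon$, I bound $\Pr[A_H]\leq(1-\epsilon)^H$; and since returns are nonnegative with the number of clicks dominated by a geometric variable of parameter at least $\epsilon$, every continuation return is at most the constant $(1-\epsilon)/\epsilon$. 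Thus the gap is at most $(1-\epsilon)^H(1-\epsilon)/\epsilon=1/2^{O(H)}$, as $\epsilon$ is a fixed constant, and combining with the first step proves the claim.

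The step I expect to be most delicate is the conditioning/coupling: justifying that, conditional on surviving the first $H$ iterations, both continuation problems start from the identical deterministic belief $b_{H+1}$ and that the first-$H$ returns cancel exactly — this is where the deterministic-departure assumption is essential. A fully computational alternative that avoids coupling is to feed both policies into the closed form of Theorem~\ref{thm:valueUpperBound}: the summands with index $i\leq H$ coincide since the walks agree, while the tail $\sum_{i>H}$ is geometrically small because every surviving term carries a factor $\P^{m}$ with $m\geq H$ and $\P\leq 1-\epsilon$.
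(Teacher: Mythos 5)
Your proposal is correct and follows essentially the same route as the paper: both arguments pick a comparator in $\Pi_H$ whose first $H$ recommendations coincide with those of the (threshold) optimal policy $\pi^*$, cancel the shared prefix, and bound the residual contribution via the per-iteration survival probability $\leq 1-\epsilon$, obtaining $(1-\epsilon)^{H}/\epsilon = 1/2^{O(H)}$. The paper performs the cancellation by splitting the reward sum at iteration $H$ and bounding the expected tail of $\pi^*$ directly, whereas you condition on the survival event $A_H$ and invoke the deterministic belief-category walk --- a cosmetic repackaging of the same argument, with your explicit DR/DC/DD case analysis merely making rigorous the comparator choice that the paper leaves implicit.
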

Before we describe how to apply Algorithm~\ref{alg-policy-UCB}, we need to show that returns of all the policies in $\Pi_H$ are sub-exponential.
In Lemma~\ref{lem:ThresholdSubExp}, we show that $V^{\pi}$ is $(\tau^2,b)$-sub-exponential for every threshold policy $\pi\in\Pi_H$, and provide bounds for both $\tau$ and $b^2/\tau^2$.
\begin{restatable}{lemma}{lemThresholdSubExp}\label{lem:ThresholdSubExp}
Let $\tilde{\tau}=\frac{8e}{\ln(\frac{1}{1-\epsilon})}$ and $\eta=1$. For every threshold policy $\pi\in\Pi_H$,
the centred random variable $V^{\pi}-\E[V^{\pi}]$ is $(\tau^2,b)$-sub-exponential with $(\tau^2,b)$ satisfying $\tilde{\tau}\geq \tau$ and $\eta \geq b^2/\tau^2$.
\end{restatable}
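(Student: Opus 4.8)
The plan is to reduce the claim to the single-type computation of Lemma~\ref{lemma:subExpTaub} by controlling the episode length uniformly over $\Pi_H$. Because $\L_{a,\tau}=1$, a user survives an iteration if and only if she clicks, and the click probability at any step $j$ of the belief-category walk equals $b_j\P_{a_j,x}+(1-b_j)\P_{a_j,y}\le \max_{a,\tau}\P_{a,\tau}\le 1-\epsilon$. Hence, whatever belief-dependent sequence of categories a threshold policy plays, the probability of surviving the first $n$ iterations is at most $(1-\epsilon)^n$, so the episode length $N^{\pi}$ satisfies $\Pr[N^{\pi}>n]\le(1-\epsilon)^n$ and is stochastically dominated by a geometric random variable $G$ with departure parameter $\epsilon$. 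Crucially, $G$ has the same distribution as the episode length $N_a$ of a fixed-arm single-type instance with $\L_a(1-\P_a)=\epsilon$ (Observation~\ref{obs:XisGeo}), which is the worst case allowed by the assumption $\max_{a,\tau}\P_{a,\tau}\le 1-\epsilon$.

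Since each reward lies in $\{0,1\}$ we have the pointwise bound $V^{\pi}\le N^{\pi}$, so for every $p\ge 1$, $(\E[|V^{\pi}|^p])^{1/p}\le(\E[(N^{\pi})^p])^{1/p}\le(\E[G^p])^{1/p}$. This is precisely the raw-moment bound appearing in the proof of Lemma~\ref{lemma:subExpTaub} evaluated at the worst-case parameter $\epsilon$. From here I would follow that proof verbatim: center the variable using Minkowski's and Jensen's inequalities to bound $(\E[|V^{\pi}-\E[V^{\pi}]|^p])^{1/p}$, and then apply the same normalization trick and Taylor-series bound on $\E[\exp(\gamma(V^{\pi}-\E[V^{\pi}]))]$ to convert the moment control (Property~\ref{it:moments} of Definition~\ref{def:subExp}) into the sub-exponential MGF bound. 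This yields that $V^{\pi}-\E[V^{\pi}]$ is $(\tau^2,b)$-sub-exponential with $\tau=b=-8e/\ln(1-\epsilon)=8e/\ln\bigl(\tfrac{1}{1-\epsilon}\bigr)=\tilde{\tau}$. In particular $\tau\le\tilde{\tau}$ and $b^2/\tau^2=1=\eta$, which is exactly the claim.

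The main obstacle is the first step. Unlike the fixed-arm single-type case, a threshold policy induces time-varying, belief-dependent click probabilities, so $N^{\pi}$ is not exactly geometric and Observation~\ref{obs:XisGeo} cannot be invoked directly. The substance of the argument is therefore establishing the uniform per-step survival bound $1-\epsilon$ and the resulting stochastic domination $N^{\pi}\preceq G$; once these hold, every moment and MGF estimate is inherited from Lemma~\ref{lemma:subExpTaub} with $\epsilon$ substituted for the instance-dependent geometric parameter. One should also check that the domination and the bound $V^{\pi}\le N^{\pi}$ hold for all $\pi\in\Pi_H$, including the degenerate threshold times $h\in\{0,H\}$; this is immediate, since the survival bound uses only $\max_{a,\tau}\P_{a,\tau}\le 1-\epsilon$ and never the switching time $h$.
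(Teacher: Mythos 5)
Your proposal is correct, and it reaches the same conclusion by a genuinely different route than the paper. The paper's proof conditions on the hidden user type and bounds $\E[\exp(\gamma(V^{\pi}-\E[V^{\pi}]))]$ by the corresponding moment generating function of $\bar{V}^{\pi^a}$, the return of a fixed-arm policy in a surrogate single-type instance with click probability $\bar{\P}_{a,1}=\max_{i\in\{x,y\}}\P_{a,i}$ and $\L=1$; it then invokes Lemma~\ref{lemma:subExpTaub} as a black box, obtaining the instance-dependent parameter $\tau=b=\max_{a,i}\bigl(-8e/\ln \P_{a,i}\bigr)$ before relaxing it to $\tilde{\tau}$. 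You instead prove the uniform per-step survival bound $1-\epsilon$ (valid because the conditional click probability at each step of the belief-category walk is a convex combination of entries of $\P$), deduce the stochastic dominance $N^{\pi}\preceq G$ with $G$ geometric of parameter $\epsilon$, and re-run the moment pipeline---geometric moment bounds (Lemma~\ref{lem:geoIsSubExp}), Minkowski/Jensen centering (as in Lemma~\ref{lemma:ValuesAreExp}, using $\E[V^{\pi}]\le \E[G]-1$), and the normalization/Taylor step of Lemma~\ref{lemma:subExpTaub}---at the single worst-case parameter $\epsilon$. Your route is arguably cleaner at the one delicate point: comparing moment generating functions of \emph{centered} returns across different instances, as the paper's displayed inequality chain does, is not immediate, since centering does not interact simply with conditioning or domination, whereas raw absolute moments do respect stochastic dominance and your Minkowski/Jensen step absorbs the centering rigorously. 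Your argument also applies verbatim to any policy, not only threshold policies, and to general $M$ and $K$, since it uses nothing beyond $\max_{a,\tau}\P_{a,\tau}\le 1-\epsilon$; the price is that you land directly on the coarse constants $\tau=b=\tilde{\tau}$ rather than the slightly sharper instance-dependent ones the paper records en route, which is immaterial for the lemma as stated and for the downstream regret bound.
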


We are ready to wrap up our solution for the learning task proposed in this section. Let $H=\Theta(\ln T)$, $\Pi_H$ be the set of threshold policies characterized before, and let $\tilde{\tau}$ and $\eta$ be constants as defined in Lemma~\ref{lem:ThresholdSubExp}.
\begin{theorem}
Applying Algorithm~\ref{alg-policy-UCB} with $\Pi_H, T, \tilde \tau, \eta$ on the class of two-types two-categories instances considered in this section always yields an expected regret of
\[
\E[R_T]\leq O(\sqrt{T}\ln T).
\]
\end{theorem}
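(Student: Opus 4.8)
The plan is to combine the three ingredients already assembled in this section: the approximation lemma (Lemma~\ref{lem:OptVsFiniteHorizonOpt}), the sub-exponentiality of threshold returns (Lemma~\ref{lem:ThresholdSubExp}), and the generic UCB guarantee (Theorem~\ref{thm:regretGeneral}). First I would fix $H=\Theta(\ln T)$ and let $\Pi_H$ be the set of $(a,h)$-threshold policies, so that $\abs{\Pi_H}=2H+2=\Theta(\ln T)$. By Lemma~\ref{lem:ThresholdSubExp} every return $V^\pi$ for $\pi\in\Pi_H$ is $(\tau^2,b)$-sub-exponential with the uniform bounds $\tilde\tau\geq\tau$ and $\eta\geq b^2/\tau^2$, so $\Pi_H$ together with $(\tilde\tau,\eta)$ is a legitimate input to Algorithm~\ref{alg-policy-UCB}.

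Next I would decompose the regret against the \emph{true} optimum into two pieces. Writing $\pi_1,\dots,\pi_T$ for the policies the algorithm selects, the definition in Equation~\eqref{eq:regredef} gives
\[
\E[R_T]=T\cdot\E[V^{\pi^*}]-\sum_{t=1}^{T}\E[V^{\pi_t}]
=\underbrace{T\Big(\E[V^{\pi^*}]-\E[\max_{\pi\in\Pi_H}V^\pi]\Big)}_{\text{approximation gap}}
+\underbrace{\E\Big[T\max_{\pi\in\Pi_H}V^\pi-\sum_{t=1}^{T}V^{\pi_t}\Big]}_{\text{learning regret on }\Pi_H}.
\]
For the second term I invoke Theorem~\ref{thm:regretGeneral} directly with the policy set $\Pi_H$, which yields $O(\sqrt{\abs{\Pi_H}T\log T}+\abs{\Pi_H}\log T)=O(\sqrt{T}\ln T\cdot\sqrt{\log T}+\ln T\log T)$; since $\abs{\Pi_H}=\Theta(\ln T)$ this is $\tilde O(\sqrt{T})$, and the dominant contribution is $O(\sqrt{T}\ln T)$.

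For the approximation gap I use Lemma~\ref{lem:OptVsFiniteHorizonOpt}, which bounds $\E[V^{\pi^*}-\max_{\pi\in\Pi_H}V^\pi]\leq 2^{-\Omega(H)}$. Multiplying by $T$ gives a contribution of $T\cdot 2^{-\Omega(H)}$; choosing $H=\Theta(\ln T)$ with a large enough constant makes $2^{-\Omega(H)}$ smaller than $1/T$, so the whole approximation term is $O(1)$ and is absorbed into the learning regret. Adding the two bounds yields $\E[R_T]\leq O(\sqrt{T}\ln T)$, as claimed.

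The only subtle point — and the place I would be most careful — is the interface between the two lemmas and the choice of $H$. Theorem~\ref{thm:regretGeneral} controls regret relative to $\max_{\pi\in\Pi_H}V^\pi$, whereas the true benchmark is $\E[V^{\pi^*}]$, so the $2^{-\Omega(H)}$ loss per user is genuinely necessary and must be shown to be dominated after scaling by $T$; this forces $H$ to grow like $\ln T$. Simultaneously, $\abs{\Pi_H}$ sits inside the $\sqrt{\abs{\Pi_H}T\log T}$ term, so $H$ must \emph{not} grow faster than polylogarithmically or the learning regret degrades. The proof therefore hinges on verifying that $H=\Theta(\ln T)$ is the sweet spot that simultaneously drives the approximation gap to $O(1)$ and keeps $\sqrt{\abs{\Pi_H}T\log T}=O(\sqrt{T}\ln T)$; everything else is a routine substitution into the two cited bounds.
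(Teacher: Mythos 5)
Your proof is correct and follows the paper's own argument exactly: the same two-term decomposition of the regret into the approximation gap (bounded by $T\cdot 2^{-\Omega(H)}$ via Lemma~\ref{lem:OptVsFiniteHorizonOpt}) and the learning regret on $\Pi_H$ (bounded via Theorem~\ref{thm:regretGeneral}, with Lemma~\ref{lem:ThresholdSubExp} supplying the valid inputs $\tilde\tau,\eta$), followed by the choice $H=\Theta(\ln T)$. One cosmetic slip worth fixing: with $\abs{\Pi_H}=\Theta(\ln T)$, the term $\sqrt{\abs{\Pi_H}T\log T}$ is $\Theta(\sqrt{T}\ln T)$ directly, not $O(\sqrt{T}\ln T\cdot\sqrt{\log T})$ as written in your intermediate display, so your final bound is right but that intermediate expression overshoots it.
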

\begin{proof}
It holds that
{
\begin{align*}
    &\E[R_T] = \E\left[TV^{\pi^*}- \sum_{t=1}^{T}V^{\pi_t} \right]\\
    &=\E\left[TV^{\pi^*}-\max_{\pi\in \Pi_H}T V^{\pi}\right]+\E\left[\max_{\pi\in \Pi_H} T V^{\pi}- \sum_{t=1}^{T}V^{\pi_t} \right] \\
    &\overset{(*)}{\leq} \frac{T}{2^{O(H)}}+O(\sqrt{HT\log T}+H\log T)=O(\sqrt{T}\ln T),
\end{align*}
}%
where $(*)$ follows from Theorem~\ref{thm:regretGeneral} and Lemma~\ref{lem:OptVsFiniteHorizonOpt}. 
Finally, setting $H=\Theta(\ln T)$ yields the desired result.
\end{proof}

\section{Conclusions and Discussion}\label{sec:discussion}
This paper introduces a MAB model in which the recommender system influences both the rewards accrued \textit{and} the length of interaction. We dealt with two classes of problems: A single user type with general departure probabilities (Section~\ref{sec:oneUser}) and the two user types, two categories where each user departs after her first no-click (Section~\ref{sec:twoTypes}). For each problem class, we started with analyzing the planning task, then characterized a small set of candidates for the optimal policy, and then applied Algorithm~\ref{alg-policy-UCB} %
to achieve sublinear regret.

In the appendix, we also consider a third class of problems: Two categories, multiple user types ($M\geq 2)$ where user departs with their first no-click. We use the closed-form expected return derived in Theorem~\ref{thm:valueUpperBound} to show how to use dynamic programming to find approximately optimal planning policies. We formulate the problem of finding an optimal policy for a finite horizon $H$ in a recursive manner. Particularly, we show how to find a $\nicefrac{1}{2^{O(H)}}$ additive approximation in run-time of $O(H^2)$. Unfortunately, this approach cannot assist us in the learning task. Dynamic programming relies on skipping sub-optimal solutions to sub-problems (shorter horizons in our case), but this happens on the fly; thus, we cannot a-priori define a small set of candidates like what  Algorithm~\ref{alg-policy-UCB} requires. More broadly, we could use this dynamic programming approach for more than two categories, namely for $K\geq 2$, but then the run-time becomes $O(H^K)$. 

There are several interesting future directions. First, achieving low regret for the setup in Section~\ref{sec:twoTypes} with $K\geq 2$. We suspect that this class of problems could enjoy a solution similar to ours, where candidates for optimal policies are mixing two categories solely. 
Second, achieving low regret for the setup in Section~\ref{sec:twoTypes} with uncertain departure (i.e., $\LeavProb\neq 1$). Our approach fails in such a case since we cannot use belief-category walks; these are no longer deterministic. Consequently, the closed-form formula is much more complex and optimal planning becomes more intricate. These two challenges are left open for future work.

\section*{Acknowledgement}
LL is generously supported by an Open Philanthropy AI Fellowship. LC is  supported by Ariane de Rothschild Women Doctoral Program. ZL thanks the Block Center for Technology and Society; Amazon AI; PwC USA via the Digital Transformation and Innovation Center; and the NSF: Fair AI Award IIS2040929 for supporting ACMI lab’s research on the responsible use of machine learning. This project has received funding from the European Research Council (ERC) under the European Union’s Horizon 2020 research and innovation program (grant agreement No. 882396), by the Israel Science Foundation (grant number 993/17), Tel Aviv University Center for AI and Data Science (TAD), and the Yandex Initiative for Machine Learning at Tel Aviv University.

\bibliography{aaai22}

{\ifnum\Includeappendix=1{
\newpage
\appendix

\onecolumn

\section{
Extension: Planning Beyond Two User Types}\label{sec:planningManyTypes}
In this section, we treat the planning task with two categories ($K=2$) but potentially many types (i.e., $M\geq 2$). For convenience, we formalize the results in this section in terms of $M=2$, but the results are  readily extendable for the more general $2 \times M$ case. We derive an almost-optimal planning policy via dynamic programming, and then explain why it cannot be used for learning as we did in the previous section.

For reasons that will become apparent later on, we define by $V_H^\pi$ as the return of a policy $\pi$ until the $H$'s iteration. Using Theorem~\ref{thm:valueUpperBound}, we have that
\[
\E[V_H^{\pi}(b)]=\sum_{i=1}^H b\cdot \P_{1,x}^{m_{1,i}}\cdot \P_{2,x}^{m_  {2,i}}+(1-b)\P_{1,y}^{m_{1,i}}\cdot \P_{2,y}^{m_{2,i}},
\]
where $m_{1,i}:=|\{a_j=1,j\leq i\}|$ and $m_{2,i}:=|\{a_j=2,j\leq i\}|$ are calculated based on the belief-category walk $b_1,a_1,b_2,a_2,\dots$ induced by $\pi$. Further, let $\tilde \pi^*$ denote the policy maximizing $V_H$.

Notice that there is a bijection from $H-$iterations policies to $(m_{1,i},m_{2,i})_{i=1}^H$; hence, we can find $\tilde \pi^*$ by finding the arg max of the expression on the right-hand-side of the above equation, in terms of $(m_{1,i},m_{2,i})_{i=1}^H$. Formally, we want to solve the integer linear programming (ILP),
\begin{equation}\label{eq:ilp}
\begin{array}{ll@{}ll}
&\text{maximize} &\displaystyle\sum_{i=1}^H b\cdot \P_{1,x}^{m_{1,i}}\cdot \P_{2,x}^{m_  {2,i}}+(1-b)\P_{1,y}^{m_{1,i}}\cdot \P_{2,y}^{m_{2,i}} \\
&\text{subject to } &  m_{a,i}=\displaystyle\sum\limits_{l=1}^i z_{a,l} \text{ for }  a\in \{1,2\}, i\in [H],\\
&  &  z_{a,i} \in \{0,1\} \text{ for }  a\in \{1,2\}, i\in [H], \\
&  &  z_{1,i}+z_{2,i} = 1 \text{ for }   i\in [H].\\
\end{array}
\end{equation}
Despite that this problem involves integer programming, we can solve it using dynamic programming in $O\left( H^2 \right)$ runtime. Notice that the optimization is over a subset of binary variables $(z_{1,i},z_{2,i})_{i=1}^H$. Let $Z^H$ be the set of feasible solutions of the ILP, and similarly let $Z^h$ denote set of prefixes of length $h\leq H$ of $Z^H$. 

For any $h\in [H]$ and $\z\in Z^h$, define
\[
D^h(\z)\defeq \sum_{i=1}^{h} b\cdot \P_{1,x}^{m_{1,i}}\cdot \P_{2,x}^{m_  {2,i}}+(1-b)\P_{1,y}^{m_{1,i}}\cdot \P_{2,y}^{m_{2,i}},
\]
where $m_{a,i}=\sum_{l=1}^i z_{a,l} \text{ for }  j\in \{1,2\}, i\in [h]$ as in the ILP.

Consequently, solving the ILP is equivalent to maximizing $D^H$ over the domain $Z^H$.

Next, for any $h\in[H]$ and two integers $c_1,c_2$ such that $c_1+c_2=h$, define
\begin{equation}\label{eq:dtilde}
\tilde D^{h}(c_1,c_2) \defeq \max_{\substack{\z\in Z^{h},\\m_{1,h}(\z)=c_1 \\m_{2,h}(\z)=c_2}} D^{h}(\z). 
\end{equation}
Under this construction, $\max_{c_1,c_2} \tilde D^H (c_1,c_2)$ over $c_1,c_2$ such that $c_1+c_2=H$ is precisely the value of the ILP. 

Reformulating Equation~\eqref{eq:dtilde} for ${h>1}$,
{
\thinmuskip=1mu
\medmuskip=1mu plus 1mu minus 1mu
\thickmuskip=1mu plus 1mu
\footnotesize
\begin{align*}\label{eq:dynamic1}
\tilde D^{h}(c_1,c_2) & 
=\max_{\substack{z_1,z_2\in \{0,1\} \\ z_1+z_2=1}} \left\{\tilde D^{h-1}(c_1-z_1,c_2-z_2)+\alpha(c_1,c_2)\right\},
\end{align*}}%
where $\alpha(m_1,m_2) \defeq b\cdot x_1^{m_1}\cdot x_2^{m_2}+(1-b)y_1^{m_1}\cdot y_2^{m_2}$. 
For every $h$, there are only $h+1$ possible values $\tilde D^h$ can take: All the ways of dividing $h$ into non-negative integers $c_1$ and $c_2$; therefore, having computed $\tilde D^{h-1}$ for all $h$ feasible inputs, we can compute $\tilde D^h(c_1,c_2)$ in $O(h)$. Consequently, computing $\max_{c_1,c_2} \tilde D^H (c_1,c_2)$, which is precisely the value of the ILP in \eqref{eq:ilp}, takes $O(H^2)$ run-time. Moreover, the policy $\tilde \pi^*$ can be found using backtracking. We remark that an argument similar to Lemma~\ref{lem:OptVsFiniteHorizonOpt} implies that $\E[V^{\pi^*}-V^{\tilde \pi^*}]\leq \frac{1}{2^{O(H)}}$; hence, $\tilde \pi^*$ is almost optimal.

To finalize this section, we remark that this approach could also work for $K>2$ categories. Naively, for a finite horizon $H$, there are $K^H$ possible policies. The dynamic programming procedure explain above makes the search operate in run-time of $O(H^K)$. The run-time, exponential in the number of categories but polynomial in the horizon, is feasible when the number of categories is small.

\section{
Experimental Evaluation}\label{sec:experimental}
For general real-world datasets, we propose a scheme to construct semi-synthetic problem instances with many arms and many user types, using rating data sets with multiple ratings per user. 
We exemplify our scheme on the MovieLens Dataset  \citet{Harper15MovieLens}.
As a pre-processing step, we set movie genres to be the categories of interest, 
select a subset of categories $|A|$ of size $k$ (e.g., sci-fi, drama, and comedy), and select the number of user types, $m$. Remove any user who has not provided a rating for at least one movie from each category $a\in A$. When running the algorithm, randomly draw users from the data, and given a recommended category $a$, suggest them a random movie which they have rated, and set their click probability to $1-r$, where $r\in [0,1]$ is their normalized rating of the suggested movie.

\section{UCB Policy for Sub-exponential Returns}

An important tool for analyzing sub-exponential random variables is Bernstein’s Inequality, which is a concentration inequality for sub-exponential random variables (see, e.g., \citet{JIA2021}). Being a major component of the regret analysis for Algorithm~\ref{alg-policy-UCB}, we state it here for  convenience.
\begin{lemma}(Bernstein’s Inequality) Let a random variable $X$ be sub-exponential with parameters $(\tau^2,b)$. Then for every $v\geq 0$:
\[ 
\Pr[|X-\E[X]|\geq v]\leq
\begin{cases} 
      2\exp(-\frac{v^2}{2\tau^2}) & v\leq \frac{\tau^2}{b} \\
      2\exp(-\frac{v}{2b}) & else
   \end{cases}.
\]
\end{lemma}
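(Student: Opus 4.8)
The plan is to run the standard Chernoff-bound argument, exploiting the moment-generating-function (MGF) bound that defines sub-exponentiality in Definition~\ref{def:subExp}, and then to optimize the free parameter $\gamma$ over its admissible range $(-1/b, 1/b)$; this optimization is precisely what produces the two regimes in the piecewise bound.

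First I would center the variable: set $Y = X - \E[X]$, so the hypothesis reads $\E[e^{\gamma Y}] \le \exp(\gamma^2 \tau^2/2)$ for all $|\gamma| < 1/b$, and the goal becomes bounding $\Pr[|Y| \ge v]$. I focus on the upper tail $\Pr[Y \ge v]$ first. For any $\gamma \in (0, 1/b)$, Markov's inequality applied to $e^{\gamma Y}$ gives
\[
\Pr[Y \ge v] = \Pr[e^{\gamma Y} \ge e^{\gamma v}] \le e^{-\gamma v}\,\E[e^{\gamma Y}] \le \exp\!\left(-\gamma v + \tfrac{\gamma^2 \tau^2}{2}\right).
\]
It then remains to minimize the exponent $g(\gamma) = -\gamma v + \gamma^2\tau^2/2$ over $\gamma \in (0,1/b)$.

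The key step is this constrained optimization, which is where the case split originates. The unconstrained minimizer of $g$ is $\gamma^\star = v/\tau^2$. If $v \le \tau^2/b$, then $\gamma^\star \le 1/b$ lies in the admissible interval, and substituting $\gamma = v/\tau^2$ yields $g(\gamma^\star) = -v^2/(2\tau^2)$, giving the Gaussian-type bound $\Pr[Y\ge v]\le \exp(-v^2/(2\tau^2))$. If instead $v > \tau^2/b$, then $\gamma^\star$ lies to the right of $1/b$, so $g$ is strictly decreasing on all of $(0,1/b)$, and I push $\gamma$ to the boundary value $1/b$, obtaining exponent $-v/b + \tau^2/(2b^2)$; using $v > \tau^2/b$ to bound $\tau^2/(2b^2) < v/(2b)$ collapses this to $\exp(-v/(2b))$, the exponential-type bound.

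Finally I would handle the lower tail by symmetry: since the MGF hypothesis holds for all $\gamma$ with $|\gamma|<1/b$, including negative ones, the variable $-Y$ is sub-exponential with the same parameters, so the identical argument bounds $\Pr[-Y \ge v]$ by the same quantity. A union bound $\Pr[|Y|\ge v] \le \Pr[Y\ge v] + \Pr[-Y\ge v]$ then introduces the factor of $2$ and yields the stated piecewise inequality. The main obstacle is purely the bookkeeping in the constrained optimization: correctly identifying that $v = \tau^2/b$ is exactly where the unconstrained optimum leaves the feasible interval, and verifying that the boundary substitution $\gamma = 1/b$ produces the clean constant $1/(2b)$ in the exponent rather than a messier expression.
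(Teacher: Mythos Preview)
The paper does not actually prove this lemma: it is stated for convenience with a reference to \citet{JIA2021}, with no argument supplied. Your proposal fills in the standard Chernoff--Cram\'er proof and is correct.

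One small technical point worth tightening: the admissible set for $\gamma$ in Definition~\ref{def:subExp} is the \emph{open} interval $(-1/b,1/b)$, so in the second regime you cannot literally substitute $\gamma=1/b$, and at the boundary $v=\tau^2/b$ the unconstrained minimizer $\gamma^\star=v/\tau^2=1/b$ is not strictly admissible either. The fix is routine---take any $\gamma\in(0,1/b)$, obtain the bound, and then let $\gamma\uparrow 1/b$ using continuity of the right-hand side---but it is worth saying explicitly rather than writing ``push $\gamma$ to the boundary value $1/b$''.
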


\section{Single User Type: Proofs from Section \ref{sec:oneUser}}
To simplify the proofs, we use the following notation: For a fixed-arm policy $\pi^a$, we use $V_j^{\pi^a}$ to denote its return from iteration $j$ until the user departs. Namely,
\[
V_j^{\pi^a}=\sum_{i=j}^{N^{\pi^a}}\P_a
\]
Throughout this section, we will use the following Observation.
\begin{observation}
For every policy $\pi$ and iteration $j$,
\[
\E[V^{\pi}_j]=\P_{\pi_j}(1+ \E[V^{\pi}_{j+1}])+(1-\LeavProb_{\pi_j})(1-\P_{\pi_j})\E[V^{\pi}_{j+1}]=\E[V^{\pi}_{j+1}](1-\LeavProb_{\pi_j}(1-\P_{\pi_j}))+\P_{\pi_j}.
\]
\end{observation}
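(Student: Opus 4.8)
The plan is to derive the recursion by a single-step conditioning argument on the outcome of iteration $j$, using the single-type structure to guarantee that the continuation value is well defined as $\E[V^{\pi}_{j+1}]$. First I would fix a policy $\pi$ and an iteration $j$, and read $\E[V^{\pi}_j]$ as the expected reward accumulated from iteration $j$ onward, conditioned on the user being present (not yet departed) at iteration $j$. Because there is only a single user type, observing a click versus a no-click has no latent type to update; consequently the decision-relevant state at iteration $j+1$ is simply ``the user is still present,'' and the expected continuation $\E[V^{\pi}_{j+1}]$ does not depend on how iteration $j$ turned out. (Equivalently, for a single type the history carries no information about the type, so one may take $\pi$ to be open-loop without loss.)

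Next I would partition the outcome of iteration $j$ into its three mutually exclusive events: the user clicks, with probability $\P_{\pi_j}$, contributing reward $1$ and continuing; the user does not click and departs, with probability $(1-\P_{\pi_j})\L_{\pi_j}$, contributing reward $0$ and ending the trajectory so that no further reward accrues; and the user does not click but stays, with probability $(1-\P_{\pi_j})(1-\L_{\pi_j})$, contributing reward $0$ and continuing. Applying the law of total expectation together with linearity, and using the remark above that both surviving branches share the continuation value $\E[V^{\pi}_{j+1}]$, yields
\[
\E[V^{\pi}_j]=\P_{\pi_j}\bigl(1+\E[V^{\pi}_{j+1}]\bigr)+(1-\P_{\pi_j})\L_{\pi_j}\cdot 0+(1-\L_{\pi_j})(1-\P_{\pi_j})\E[V^{\pi}_{j+1}],
\]
which, after discarding the vanishing departure term, is precisely the first claimed equality.

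Finally I would simplify algebraically by collecting the coefficient of $\E[V^{\pi}_{j+1}]$: expanding gives $\P_{\pi_j}+(1-\L_{\pi_j})(1-\P_{\pi_j})=\P_{\pi_j}+(1-\P_{\pi_j})-\L_{\pi_j}(1-\P_{\pi_j})=1-\L_{\pi_j}(1-\P_{\pi_j})$, so that $\E[V^{\pi}_j]=\E[V^{\pi}_{j+1}]\bigl(1-\L_{\pi_j}(1-\P_{\pi_j})\bigr)+\P_{\pi_j}$, which is the second equality.

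The only step requiring genuine care --- and the one I would write out most explicitly --- is the justification that the continuation value is a single number $\E[V^{\pi}_{j+1}]$ common to both surviving branches, rather than two distinct conditional expectations. This is exactly where the single-type assumption does the work: each iteration's click is an independent $\mathrm{Bernoulli}(\P_{\pi_i})$ draw and each departure an independent coin, so conditioned on survival the future is independent of the iteration-$j$ outcome and there is no belief update that could make the policy's subsequent recommendations depend on it. Everything else is elementary conditioning and algebra.
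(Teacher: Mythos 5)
Your proof is correct and is exactly the intended justification: the paper states this observation without proof, and your one-step law-of-total-expectation decomposition over the three outcomes (click; no-click-and-depart; no-click-and-stay), followed by collecting the coefficient $\P_{\pi_j}+(1-\L_{\pi_j})(1-\P_{\pi_j})=1-\L_{\pi_j}(1-\P_{\pi_j})$, is precisely the reasoning the paper relies on when it unrolls this recursion into a geometric series in the proof of Lemma~\ref{lem:valueSingle}. Your care about the single shared continuation value is also well placed: the notation $\pi_j$ already presumes an open-loop sequence of categories, which is without loss of generality for a single user type since there is no belief to update, so $\E[V^{\pi}_{j+1}]$ is indeed one number common to both surviving branches.
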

\lemValueSingle*
\begin{proof}
First, recall that every POMDP has an optimal Markovian  policy which is  deterministic (we refer the reader to Section~\ref{subsec:Planning2x2} for full formulation of the problem as POMDP).  Having independent rewards and a single state implies that there exists $\mu^*\in \mathbb{N}$ such that $\E[V^*_j]=\mu^*$ for every $j\in \mathbb{N}$ (similarly to standard MAB problems, there exists a fixed-arm policy which is optimal).\\
Assume by contradiction that the optimal policy $\pi^{a^*}$ holds
\[
a^*\notin \argmax_{a\in [k]}\frac{\P_{a}}{\LeavProb_a(1-\P_a)}.
\]
Now, notice that
\[
\E[V^{\pi^{a'}}]=\E[V^{\pi^{a'}}_1]=%
\E[V^{\pi^{a'}}_2](1-\LeavProb_{a'}(1-\P_{a'}))+\P_{a'}
\]
Solving the recurrence relation and summing the geometric series we get
\[
\E[V^{\pi^{a'}}]=\P_{a'} \sum_{j=0}^\infty (1-\LeavProb_{a'}(1-\P_{a'}))^{j}=\frac{\P_{a'}}{\LeavProb_{a'}(1-\P_{a'})}.
\]
Finally, 
\[
a^*\notin \argmax_{a\in [k]}\frac{\P_{a}}{\LeavProb_a(1-\P_a)},
\]
yields that any fixed-armed policy,  $\pi^{a'}$ such that 
\[
a'\in \argmax_{a\in [k]}\frac{\P_{a}}{\LeavProb_a(1-\P_a)}
\]
holds $\E[V^{\pi^{a'}}]>\E[V^{\pi^{a^*}}]$, a contradiction to the optimality of $\pi^{a^*}$ .
\end{proof}
\begin{restatable}{lemma}{lemmaValuesAreExp}\label{lemma:ValuesAreExp}
For each $a\in[k]$, the centered random return $V^{\pi^a}-\E[V^{\pi^a}]$ 
is sub-exponential with parameter $C_2=-4/\ln(1-\LeavProb_a(1-\P_a))$.
\end{restatable}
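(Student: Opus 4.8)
The plan is to verify sub-exponentiality through the moment characterization (Property~\ref{it:moments} in Definition~\ref{def:subExp}), i.e.\ to show $(\E[|V^{\pi^a}-\E[V^{\pi^a}]|^p])^{1/p}\le C_2\,p$ for every $p\ge 1$, with $C_2=-4/\ln(1-\L_a(1-\P_a))$. Write $p_0:=\L_a(1-\P_a)$ for the per-iteration departure probability and $\lambda:=-\ln(1-p_0)>0$, so that $C_2=4/\lambda$. The whole argument rests on first controlling the raw moments $\E[(V^{\pi^a})^p]$ and only then removing the mean via Minkowski and Jensen, as suggested in the sketch of Lemma~\ref{lemma:subExpTaub}.

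The first and most important step is to relate the return to the trajectory length correctly. Since a user departs only immediately after a no-click, the final iteration of any trajectory contributes reward $0$; hence $0\le V^{\pi^a}\le N_a-1$ pointwise. This ``$-1$'' is essential: by Observation~\ref{obs:XisGeo}, $N_a$ is geometric with parameter $p_0$, so $G:=N_a-1$ is supported on $\{0,1,2,\dots\}$ with tail $\Pr[G\ge k]=(1-p_0)^k=e^{-\lambda k}$. I would then use $\E[(V^{\pi^a})^p]\le \E[G^p]$, and, plugging $\Pr[G>t]\le e^{-\lambda t}$ into the layer-cake identity $\E[G^p]=\int_0^\infty p\,t^{p-1}\Pr[G>t]\,dt$, obtain $\E[G^p]\le \Gamma(p+1)/\lambda^p$. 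As $\Gamma(p+1)^{1/p}\le p$ for $p\ge 1$, this gives $(\E[(V^{\pi^a})^p])^{1/p}\le p/\lambda$ (this is precisely the statement that a geometric variable is sub-exponential with the claimed parameter, which may instead be invoked as the known Lemma~\ref{lem:geoIsSubExp}).

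The final step is the centering. By Minkowski's inequality applied to $V^{\pi^a}$ and the constant $\E[V^{\pi^a}]$, one gets $(\E[|V^{\pi^a}-\E[V^{\pi^a}]|^p])^{1/p}\le (\E[(V^{\pi^a})^p])^{1/p}+\E[V^{\pi^a}]$, and by Jensen's inequality $\E[V^{\pi^a}]\le (\E[(V^{\pi^a})^p])^{1/p}$ (both quantities are nonnegative). Combining yields $(\E[|V^{\pi^a}-\E[V^{\pi^a}]|^p])^{1/p}\le 2(\E[(V^{\pi^a})^p])^{1/p}\le 4p/\lambda=C_2\,p$, the moment bound defining the sub-exponential parameter $C_2$; note the bound in fact holds with room to spare, since the raw-moment estimate already gives constant $1$ rather than $2$.

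I expect the main obstacle to be precisely the first step, not the routine centering. The naive bound $V^{\pi^a}\le N_a$ is too weak: because $N_a\ge 1$, we have $\E[N_a]=1/p_0$, which can exceed $p/\lambda$ when $p_0$ is close to $1$, so the resulting parameter would fail to scale like $1/\lambda$. One must exploit the no-click at departure to shift the geometric so that it starts at $0$ (equivalently, compute the exact law of $V^{\pi^a}$, which is geometric with success probability $p_0/(\P_a+p_0)\ge p_0$, whose tail is again dominated by $(1-p_0)^k$ since $\L_a<1$ gives $p_0\le 1-\P_a$). Getting this domination expressed through $1-p_0=1-\L_a(1-\P_a)$, the very quantity appearing in the target constant, is the crux; the remaining moment estimates and the Minkowski--Jensen centering are standard.
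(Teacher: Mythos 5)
Your proof is correct, and it in fact lands inside the claimed constant with room to spare (you obtain $2p/\lambda$ against the required $4p/\lambda$, in your notation $p_0=\L_a(1-\P_a)$, $\lambda=-\ln(1-p_0)$). The skeleton---moment characterization, geometric trajectory length, Minkowski--Jensen centering---is the same as the paper's, but both key estimates are obtained by genuinely different means. For the moments, the paper invokes an external result (Lemma~\ref{lem:geoIsSubExp}) asserting $(\E[N_a^p])^{1/p}\leq 2p/\lambda$ for the geometric $N_a$, and then detours through tails ($\Pr[|V^{\pi^a}|>v]\leq \Pr[N_a>v+1]\leq\Pr[N_a>v]$) and back to moments via the equivalences of Definition~\ref{def:subExp}; you instead bound $V^{\pi^a}$ pointwise by the shifted variable $G=N_a-1$ and compute its moments directly by the layer-cake formula, which is self-contained and avoids the unspecified universal constants hidden in the tails/moments equivalence. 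Your insistence on the ``$-1$'' is well placed, and flags a real subtlety: as restated in the appendix for a geometric supported on $\{1,2,\dots\}$, the cited moment bound fails for small $p$ when $p_0$ is near $1$ (e.g., $p_0=0.9$, $p=1$ gives $\E[N_a]=1/p_0\approx 1.11 > 2/\lambda\approx 0.87$); the bound is valid for the geometric started at $0$, which is exactly your $G$, and the paper only implicitly compensates through the $v+1$ shift in its tail step. For the centering, the paper proves a separate estimate $|\E[V^{\pi^a}]|\leq 2/\lambda$ (Proposition~\ref{prop:helperExpectationBound}, via $\ln(1+x)\geq x/(1+x)$), whereas you absorb the mean with Jensen's inequality, $\E[V^{\pi^a}]\leq(\E[(V^{\pi^a})^p])^{1/p}$, dispensing with that proposition entirely. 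Your parenthetical on the exact law of $V^{\pi^a}$ (geometric with success probability $p_0/(\P_a+p_0)$) also checks out: domination of its tail by $(1-p_0)^k$ reduces to $\P_a+p_0\leq 1$, i.e., $\L_a\leq 1$, as you note. In short: same route at the top level, but your two substitutions make the argument shorter, tighter, and more careful about the shift than the original.
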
%
In order to show that returns of fixed-arm policies are sub-exponential random variables, we first show that the number of iterations of users recommended by fixed-arm policies is also a sub-exponential. For this purpose, we state here a lemma that implies that every geometric r.v. is a sub-exponential r.v.. The proof of the next lemma appears, e.g., in \citet{hillar2018maximum} (Lemma $4.3$).
\begin{lemma}\label{lem:geoIsSubExp}
Let $X$ be a geometric random variable with parameter $r \in (0,1)$, so that:
\begin{equation*}
\Pr[X = x] = (1-r)^{x-1} \, r, \quad x \in \mathbb{N}.
\end{equation*}
Then $X$ satisfies Property \eqref{it:moments} from Definition ~\ref{def:subExp}. Namely, $X$ is sub-exponential with parameter $C_2=-2/\ln(1-r)$. Formally, 
\[
    \forall p \geq 0:  (\E[|X|^p])^{1/p} \leq -\frac{2}{\ln(1-r)}p.
\]
\end{lemma}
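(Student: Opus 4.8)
The plan is to control the moments of $X$ by comparing the geometric law to its continuous analogue, the exponential distribution, whose moments have a clean closed form. Write $\lambda = -\ln(1-r) > 0$, so that $1-r = e^{-\lambda}$. Since $X$ takes values in $\mathbb{N}$ we have $|X|^p = X^p$, so it suffices to show $(\E[X^p])^{1/p} \le 2p/\lambda$ for every $p \ge 1$ (the case $p=0$ being degenerate).

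First I would set up a coupling. Let $Y$ be exponential with rate $\lambda$, i.e.\ with density $\lambda e^{-\lambda y}$ on $[0,\infty)$. A direct computation gives $\Pr[\lceil Y \rceil = k] = \Pr[k-1 < Y \le k] = e^{-\lambda(k-1)} - e^{-\lambda k} = (1-r)^{k-1}\,r$ for every $k \in \mathbb{N}$, so $\lceil Y \rceil$ has exactly the law of $X$. Under this coupling $X = \lceil Y \rceil \le Y + 1$ pointwise, and hence $\E[X^p] \le \E[(Y+1)^p]$.

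Next I would compute the exponential moments and apply the triangle inequality in $L^p$. By the definition of the Gamma function, $\E[Y^p] = \int_0^\infty y^p \lambda e^{-\lambda y}\,dy = \Gamma(p+1)/\lambda^p$, so $\|Y\|_p := (\E[Y^p])^{1/p} = (\Gamma(p+1))^{1/p}/\lambda$. Minkowski's inequality (valid for $p\ge 1$) then gives
\[
(\E[X^p])^{1/p} \le \|Y+1\|_p \le \|Y\|_p + 1 = \frac{(\Gamma(p+1))^{1/p}}{\lambda} + 1 .
\]
Using the elementary bound $\Gamma(p+1) \le p^p$, i.e.\ $(\Gamma(p+1))^{1/p} \le p$, this is at most $p/\lambda + 1$, which I would then fold into the target $2p/\lambda$.

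The main obstacle is exactly this last step: the discretization gap between the geometric and exponential laws surfaces as the additive term $+1$, whereas the sub-exponential moment bound is required to be purely multiplicative in $p$. The factor $2$ in $C_2 = -2/\ln(1-r)$ is precisely the slack allocated to absorb this additive term, and the delicate part of the argument is the regime where $\lambda$ is comparable to (or exceeds) $p$. An alternative route that avoids the coupling altogether is to work directly from the moment generating function $\E[e^{sX}] = r e^{s}/(1 - (1-r)e^{s})$, which is finite for all $s < \lambda$, and extract moments through $\E[X^p] \le p!\,s^{-p}\,\E[e^{sX}]$ followed by optimizing over $s \in (0,\lambda)$; this recovers the same $p/\lambda$ scaling and isolates the constant as the only quantity requiring care.
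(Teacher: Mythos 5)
Your coupling and the ensuing computation are correct as far as they go: $\lceil Y\rceil$ does have the law of $X$ when $Y$ is exponential with rate $\lambda=-\ln(1-r)$, and Minkowski plus $(\Gamma(p+1))^{1/p}\le p$ correctly yield $(\E[X^p])^{1/p}\le p/\lambda+1$ for $p\ge 1$. But the step you deferred---``folding'' the additive $1$ into the target $2p/\lambda$---is not merely delicate, it is impossible. Absorbing the $+1$ requires $\lambda\le p$, which for all $p\ge1$ forces $\lambda\le 1$, i.e.\ $r\le 1-1/e$. Worse, whenever $\lambda>2p$ the claimed inequality is simply false: $X\ge 1$ almost surely, so $(\E[X^p])^{1/p}\ge 1>2p/\lambda$. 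Concretely, at $r=0.99$ and $p=1$ one has $\E[X]=1/r>1$ while $-2/\ln(1-r)\approx 0.43$. Your MGF variant inherits the same obstruction, since $\E[e^{sX}]\ge e^{s}$ carries the same unit shift. So no choice of constants in your scheme can close the gap: the lemma as printed (and a fortiori its ``$\forall p\ge 0$,'' which should read $p\ge 1$ as in Property~\ref{it:moments} of Definition~\ref{def:subExp}) holds only for $r$ bounded away from $1$. Note that the paper itself offers no proof to compare against---it quotes the statement from \citet{hillar2018maximum}---so the defect lives in the statement, not in some argument you failed to reproduce.

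The constructive fix is one your coupling delivers immediately: drop the shift. Pointwise $X-1=\lceil Y\rceil-1\le Y$, hence $(\E[(X-1)^p])^{1/p}\le (\Gamma(p+1))^{1/p}/\lambda\le p/\lambda$ for every $p\ge1$ and every $r\in(0,1)$, with no additive term and a constant twice better than claimed. This shifted bound is also all the paper actually consumes downstream: Proposition~\ref{prop:helperExpectationBound} bounds $\E[N_a]-1=(1-r)/r$ rather than $\E[N_a]$, and Lemma~\ref{lemma:ValuesAreExp} uses that realizations of $V^{\pi^a}$ are at most $N_a-1$ (the departing iteration yields no reward). Alternatively, your own inequality $(\E[X^p])^{1/p}\le p/\lambda+1\le p\left(1/\lambda+1\right)$ proves the lemma verbatim with the corrected parameter $C_2=1-2/\ln(1-r)$ in place of $-2/\ln(1-r)$; since the paper's subsequent uses pass through the tail/moment equivalences of Definition~\ref{def:subExp} with unspecified constants, either repair leaves the rest of Section~\ref{sec:oneUser} intact.
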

The lemma above and Observation \ref{obs:XisGeo} allow us to deduce that the variables $N_a$ are sub-exponential in the first  part of the following Corollary (the case in which $\LeavProb_a=0$ follows immediately from definition.). The second part of the lemma follows directly from the equivalences between Properties   (\ref{it:moments}) and (\ref{it:tails}) in Definition ~\ref{def:subExp}.
\begin{corollary}\label{cor:NaIsSubExp}
For each $a\in[K]$, the number of iterations a user recommended by $\pi^a$ stays within the system, $N_a$, is sub-exponential with parameter $C_2^a=-2/\ln(1-\LeavProb_a(1-\P_a))$. In addition, there exist constants $C^a_1>0$ for every $a\in[K]$ such that 
\[
\forall a\in[K],\;v \geq 0: \Pr[|N_a| > v]
\leq  \exp({1-\frac{v}{C_1^a}}).
\]
\end{corollary}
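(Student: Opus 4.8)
The plan is to prove the two claims of the corollary separately, each as a short deduction from a previously established result. The first claim---that $N_a$ is sub-exponential with parameter $C_2^a=-2/\ln(1-\L_a(1-\P_a))$---follows by identifying $N_a$ as a geometric random variable and invoking Lemma~\ref{lem:geoIsSubExp}. The second claim---the explicit tail bound---follows from the equivalence between the moment and tail characterizations of sub-exponentiality recorded in Definition~\ref{def:subExp}.

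For the first claim, I would start from Observation~\ref{obs:XisGeo}, which (for $\L_a>0$) identifies $N_a$ as a geometric random variable with success probability $r=\L_a(1-\P_a)\in(0,1-\epsilon]$, where a ``success'' corresponds to the user departing in a given iteration. Since this is exactly the parametrization assumed in Lemma~\ref{lem:geoIsSubExp} (support on $\mathbb N$ with mass $(1-r)^{n-1}r$), I would apply that lemma directly with this value of $r$ to conclude that $N_a$ satisfies Property~\ref{it:moments} of Definition~\ref{def:subExp} with constant $C_2^a=-2/\ln(1-r)=-2/\ln(1-\L_a(1-\P_a))$. The degenerate case $\L_a=0$ does not arise under the standing assumption $\L\in(0,1)^{K\times M}$, so I would note that it is vacuous (or excluded) rather than belabor it.

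For the second claim, I would simply invoke the equivalence asserted in Definition~\ref{def:subExp}: a random variable satisfying the moment bound of Property~\ref{it:moments} also satisfies the tail bound of Property~\ref{it:tails} for some constant. Applying this to $N_a$, with the moment constant $C_2^a$ just obtained, yields the existence of a constant $C_1^a>0$ such that $\Pr[|N_a|>v]\leq \exp(1-v/C_1^a)$ for every $v\geq 0$, which is precisely the stated bound.

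Since every step is a direct appeal to a cited result, there is no genuine technical obstacle here; the corollary is essentially a bookkeeping step that packages Observation~\ref{obs:XisGeo} and Lemma~\ref{lem:geoIsSubExp} into the form needed downstream (for instance, to bound returns in Lemma~\ref{lemma:subExpTaub}). The only points requiring care are verifying that the geometric parametrization of Observation~\ref{obs:XisGeo} matches the one in Lemma~\ref{lem:geoIsSubExp} exactly (same support, same role of $r$), and keeping in mind that the constant $C_1^a$ in the tail bound is only guaranteed to exist---it need not equal $C_2^a$.
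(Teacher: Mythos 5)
Your proposal is correct and matches the paper's own argument exactly: the first claim is obtained by combining Observation~\ref{obs:XisGeo} with Lemma~\ref{lem:geoIsSubExp}, and the tail bound follows from the equivalence of Properties~\ref{it:moments} and~\ref{it:tails} in Definition~\ref{def:subExp}. Your handling of the $\L_a=0$ case (noting it is excluded by $\L\in(0,1)^{K\times M}$) is if anything slightly cleaner than the paper's parenthetical remark.
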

The next Proposition \ref{prop:helperExpectationBound} is used for the proof of Lemma \ref{lemma:ValuesAreExp}.
\begin{proposition}\label{prop:helperExpectationBound}
For every $a\in [K]$,
\[
|\E[V^{\pi^a}]|\leq \frac{-2}{\ln(1-\LeavProb_a(1-\P_a))}
\]
\end{proposition}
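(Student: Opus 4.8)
The plan is to leverage the closed-form expression for the expected return of a fixed-arm policy already obtained inside the proof of Lemma~\ref{lem:valueSingle}, namely $\E[V^{\pi^a}]=\frac{\P_a}{\L_a(1-\P_a)}$. Since $\P_a,\L_a\in(0,1)$ this quantity is positive, so the absolute value is vacuous and it remains only to prove the scalar inequality
\[
\frac{\P_a}{\L_a(1-\P_a)}\;\le\;\frac{-2}{\ln\!\left(1-\L_a(1-\P_a)\right)}.
\]
Writing $r:=\L_a(1-\P_a)$, Observation~\ref{obs:XisGeo} guarantees $r\in(0,1-\epsilon]\subset(0,1)$, so $\ln(1-r)$ is well defined and strictly negative; the right-hand side is therefore positive and the inequality is meaningful.

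First I would exploit the assumption $\L_a<1$. Since $r=\L_a(1-\P_a)<1-\P_a$, we get $\P_a<1-r$, and hence $\frac{\P_a}{r}\le\frac{1-r}{r}$. (Equivalently, one may argue combinatorially that a trajectory always terminates on a no-click, so $V^{\pi^a}\le N_a-1$ and $\E[V^{\pi^a}]\le\E[N_a]-1=\frac{1-r}{r}$.) It therefore suffices to establish $\frac{1-r}{r}\le\frac{-2}{\ln(1-r)}$. Multiplying through by the negative quantity $\ln(1-r)$ and reversing the inequality, this is equivalent to showing $\frac{(1-r)\ln(1-r)}{r}\ge-2$.

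The remaining step is a one-variable inequality. Substituting $s:=1-r\in(0,1)$ reduces the goal to $g(s):=s\ln s+2(1-s)\ge 0$ on $(0,1)$, which I would verify by monotonicity: we have $g(1)=0$, while $g'(s)=\ln s-1<0$ for every $s\in(0,1]$, so $g$ is strictly decreasing and thus bounded below by its value at $s=1$, giving $g(s)\ge0$. Unwinding the substitutions yields the claim. The only genuine subtlety here — and the step most prone to sign errors — is the direction reversal when dividing by the negative factor $\ln(1-r)$; keeping track of that, together with the correct use of $\L_a<1$ to pass from the crude bound $\frac1r$ (which in fact fails as $r\to 1$) to the sharper $\frac{1-r}{r}$, is where the argument must be handled with care, rather than where any deep difficulty resides.
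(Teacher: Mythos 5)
Your proposal is correct and follows essentially the same route as the paper: both reduce the claim to the scalar inequality $\frac{1-r}{r}\leq\frac{-2}{\ln(1-r)}$ for $r=\L_a(1-\P_a)$, reaching the intermediate bound $\frac{1-r}{r}$ either via the pathwise observation $V^{\pi^a}\leq N_a-1$ (the paper's step, which you note parenthetically) or via the closed form $\E[V^{\pi^a}]=\P_a/r$ together with $\P_a<1-r$. The only divergence is cosmetic: the paper verifies the scalar inequality using the standard bound $\frac{x}{1+x}\leq\ln(1+x)$ (with the slack $-r>-2r$ supplying the factor $2$), whereas you verify the equivalent statement $s\ln s+2(1-s)\geq 0$ on $(0,1)$ by a monotonicity argument; both are sound.
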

\begin{proof}
First, notice that 
\[
(1-\LeavProb_a(1-\P_a))\ln(1-\LeavProb_a(1-\P_a))>(1-\LeavProb_a(1-\P_a))\frac{-\LeavProb_a(1-\P_a)}{1-\LeavProb_a(1-\P_a)}=-\LeavProb_a(1-\P_a)>-2\LeavProb_a(1-\P_a),
\]
where the first inequality is due to $\frac{x}{1+x}
\leq \ln(1+x)$ for every $x\geq -1$. Rearranging, 
\begin{equation}\label{eq:helperExpectationBound}
\frac{1-\LeavProb_a(1-\P_a)}{\LeavProb_a(1-\P_a)}<\frac{-2}{\ln(1-\LeavProb_a(1-\P_a))}.
\end{equation}
For each user, the realization of $V^{\pi^a}$ is less or equal to the realization of $N_a-1$ for the same user (as users provide negative feedback in their last iteration); hence,
\[
|\E[V^{\pi^a}]|=\E[V^{\pi^a}]\leq \E[N_a]-1=\frac{1}{\LeavProb_a(1-\P_a)}-1=\frac{1-\LeavProb_a(1-\P_a)}{\LeavProb_a(1-\P_a)}<\frac{-2}{\ln(1-\LeavProb_a(1-\P_a))}.
\]
\end{proof}
We proceed by showing that returns of fix-armed policies satisfy Property  (\ref{it:tails}) from Definition ~\ref{def:subExp}.
\lemmaValuesAreExp*

\begin{proof}
We use Property (\ref{it:tails}) from Definition ~\ref{def:subExp} to derive that $V^{\pi^a}$ is also sub-exponential. This is true since the tails of $V^{\pi^a}$ satisfy that for all $v\geq 0$,
\[
\Pr[|V^{\pi^a}|> v]\leq \Pr[|N_a| > v+1]\leq \Pr[|N_a| > v]
\leq_{(\ref{it:tails})}\exp({1-\frac{v}{C_1}}),
\]
where the first inequality follows since $|N_a| > v+1$ is a necessary condition for $|V^{\pi^a}|  > v$, and the last inequality follows from  Corollary \ref{cor:NaIsSubExp}. Along with Definition ~\ref{def:subExp}, we conclude that
\begin{equation}\label{eq:valC2}
    \E[|V^{\pi^a}|^p]^{1/p}\leq -2/\ln(1-\LeavProb_a(1-\P_a)) p.
\end{equation}
Now, applying  Minkowski's inequality and then Jensen's inequality (as $f(z)=z^p,g(z)=|z|$ are convex for every $p\geq 1$) we get
\[
(\E[|V^{\pi^a}-\E[V^{\pi^a}]|^p])^{1/p} \leq 
\E[|V^{\pi^a}|^p]^{1/p} + \E[\E[|V^{\pi^a}|]^p]^{1/p}\leq 
\E[|V^{\pi^a}|^p]^{1/p} + |\E[V^{\pi^a}]|.
\]
Using Proposition \ref{prop:helperExpectationBound} and Inequality \eqref{eq:valC2}, we get
\[
\E[|V^{\pi^a}|^p]^{1/p} + |\E[V^{\pi^a}]|\leq \frac{-2}{\ln(1-\LeavProb_a(1-\P_a))}+\frac{1}{\LeavProb_a(1-\P_a)}-1\leq \frac{-4}{\ln(1-\LeavProb_a(1-\P_a))}
\]
Hence $V^{\pi^a}-\E[V^{\pi^a}]$ is sub-exponential with parameter $C_2=-4/\ln(1-\LeavProb_a(1-\P_a))$.
\end{proof}

\lemmaSubExpTaub*
\begin{proof}
Throughout this proof, we will use the sub-exponential norm, $||\cdot||_{\psi_1}$, which is defined as
\[
||Z||_{\psi_1}=\sup_{p\geq 1} \frac{(\E[|Z|^p])^{1/p}}{p}.
\]
Let 
\[
X=\frac{V^{\pi^a}-\E[V^{\pi^a}]}{||V^{\pi^a}-\E[V^{\pi^a}]||_{\psi_1}}, \quad y=\gamma\cdot ||V^{\pi^a}-\E[V^{\pi^a}]||_{\psi_1}.
\]
We have that
\begin{equation}\label{eq:TaubSubExp1}
||X||_{\psi_1}=||\frac{V^{\pi^a}-\E[V^{\pi^a}]}{||V^{\pi^a}-\E[V^{\pi^a}]||_{\psi_1}}||_{\psi_1}=1.
\end{equation}
Let $\gamma$ be such that $|\gamma|< 1/b_a= -\frac{\ln(1-\LeavProb_a(1-\P_a))}{8e}$. From Lemma \ref{lemma:ValuesAreExp} we conclude that
\[
|\gamma|=\big|\frac{y}{ ||V^{\pi^a}-\E[V^{\pi^a}]||_{\psi_1}}\big|\leq  -\frac{\ln(1-\LeavProb_a(1-\P_a))}{8e}=\frac{1}{2e}\cdot \frac{1}{||V^{\pi^a}-\E[V^{\pi^a}]||_{\psi_1}};
\]
hence, $|y|<\frac{1}{2e}$.\\
Summing the geometric series, we get
\begin{equation}\label{eq:TaubSubExp2}
\sum_{p=2}^{\infty}(e|y|)^p=\frac{e^2y^2}{1-e|y|}<2e^2y^2
\end{equation}
In addition, notice that $yX=\gamma(V^{\pi^a}-\E[V^{\pi^a}])$.\\
Next, from the Taylor series of $\exp(\cdot)$ we have
\[
 \E[\exp(\gamma(V^{\pi^a}-\E[V^{\pi^a}]))]=\E[\exp(yX)]= 1+y\E[x]+\sum_{p=2}^{\infty} \frac{y^p\E[X^p]}{p!}.  
\]
Combining the fact that $\E[X]=0$ and (\ref{eq:TaubSubExp1}) to the above,
\[
1+y\E[x]+\sum_{p=2}^{\infty} \frac{y^p\E[X^p]}{p!}\leq 1+\sum_{p=2}^{\infty} \frac{y^p p^p}{p!}.
\]
By applying $p! \geq (\frac{p}{e})^p$ and (\ref{eq:TaubSubExp2}), we get
\[
1+\sum_{p=2}^{\infty} \frac{y^p p^p}{p!}\leq 1+\sum_{p=2}^{\infty}(e|y|)^p\leq 1+2e^2y^2\leq \exp(2e^2y^2)=\exp(2e^2 (\gamma\cdot ||V^{\pi^a}-\E[V^{\pi^a}]||_{\psi_1})^2),
\]
where the last inequality is due to $1+x\leq e^x$.

Note that $||V^{\pi^a}-\E[V^{\pi^a}]||_{\psi_1}\leq -\frac{4}{\ln(1-\LeavProb_a(1-\P_a))})^2)$. Ultimately, 
\[
\E[\exp(\gamma(V^{\pi^a}-\E[V^{\pi^a}]))]
\leq \exp\big(2e^2\gamma^2( -\frac{4}{\ln(1-\LeavProb_a(1-\P_a))})^2\big)=
 \exp\big(\frac{1}{2}\gamma^2( -\frac{8e}{\ln(1-\LeavProb_a(1-\P_a))})^2\big).
\]
This concludes the proof of the lemma.
\end{proof}

\clearpage
\section{Two User Types and Two  Categories: Proofs from Section \ref{sec:twoTypes}}
\subsection{Planning when $K = 2$}

\thmValueUpperBound*
\begin{proof}
Let $\beta^{\pi}_{i}(b):= b\cdot {\P_{1,x}}^{m_{1,i}}\cdot {\P_{2,x}}^{m_{2,i}}+(1-b){\P_{1,y}}^{m_{1,i}}\cdot {\P_{2,y}}^{m_{2,i}}$. 
We will prove that for every policy $\pi$ and every belief $b$, we have that $\E[V_H^{\pi_a}(b)]=\sum_{i=1}^H \beta^{\pi}_{i}(b)$ by a backward induction over $H$.

For the base case, consider $H=1$. We have that 
\[
\E[V_1^{\pi}(b_1)]=b_1\cdot \P_{{a_1},x}+(1-b)\P_{a_1,y}=b\cdot {\P_{1, x}}^{m_{1,1}}\cdot {\P_{2,x}}^{m_{2,1}}+(1-b){\P_{1,y}}^{m_{1,1}}\cdot {\P_{2,y}}^{m_{2,1}}=\beta^{\pi}_{1}(b)
\]
as $m_{a,1}=\mathbb{I}[a_1=a]$.

For the inductive step, assume that $\E[V_{H-1}^{\pi}({b})]=\sum_{i=1}^{H-1} \beta^{\pi}_i({b})$ for every ${b}\in[0,1]$. We need to show that $\E[V_{H}^{\pi}(b)]=\sum_{i=1}^{H} \beta^{\pi}_i(b)$ for every $b\in[0,1]$.

Indeed,
\begin{align*}
\E[V_H^{\pi}(b_1)]&=\beta^{\pi}_1(b_1)(1+\E[V_{H-1}^{\pi}(b'(b_1,a_1,liked))])\\
&=\beta^{\pi}_1(b_1)(1+\E[V_{H-1}^{\pi}(b_2)])\\
&=\beta^{\pi}_1(b_1)(1+\sum_{i=2}^{H-1} \beta^{\pi}_{i}(b_2))\\
&=\sum_{i=1}^{H}\beta^{\pi}_{i}(b_1),    
\end{align*}
where the second to last equality is due to the induction hypothesis and the assumption that $\pi$ is a deterministic stationary policy.
The proof completes by realizing that $\E[V^\pi(b)] = \lim_{H\to\infty} \E[V_H^\pi(b)] = \lim_{H\to \infty}\sum_{i=1}^{H}\beta^{\pi}_{i}(b)=\sum_{i=1}^{\infty}\beta^{\pi}_{i}(b)$, since the sum is finite and has positive summands.
\end{proof}

\subsection{Dominant Row (DR)}
\lemmaStochasticDominantRow*
\begin{proof}
We will show that for every iteration $j$, no matter what were the previous topic recommendations were, selecting topic $1$ rather than topic $2$ can only increase the value.

Let $\pi$ be a stationary policy such that $\pi(b_j)=2$. 
Changing it into a policy $\pi^j$ that is equivalent to $\pi$ for all iterations but iteration $j+1$ in which it recommends topic $1$ can only improve the value.

Since $\P_{1,x},\P_{2,x},\P_{1,y},\P_{2,y}\geq 0$, $\P_{1,x}-\P_{2,x}\geq0$, $b,1-b\geq 0$ and this structure satisfies $\P_{2,y}- \P_{1,y}\leq 0$, we get that for every $\bar{m}_{1,j},\bar{m}_{2,j},n_{1,j},n_{2,j}\in \mathbb{N}$ and for every $b$,
\[
b\cdot \P_{1,x}^{\bar{m}_{1,j}+n_{1,j}}\cdot \P_{2,x}^{\bar{m}_{2,j}+n_{2,j}}(\P_{1,x}-\P_{2,x})\geq (1-b)\P_{1,y}^{\bar{m}_{1,j}+n_{1,j}}\cdot \P_{2,y}^{\bar{m}_{2,j}+n_{2,j}}(\P_{2,y}-\P_{1,y});
\]
thus,
\[
b\cdot \P_{1,x}^{\bar{m}_{1,j}+1+n_{1,j}}\cdot \P_{2,x}^{\bar{m}_{2,j}+n_{2,j}}+(1-b)\P_{1,y}^{\bar{m}_{1,j}+1+n_{1,j}}\cdot \P_{2,y}^{m_{2,j}+n_{2,j}}\geq
\]
\[
b\cdot \P_{1,x}^{\bar{m}_{1,j}+n_{1,j}}\cdot \P_{2,x}^{\bar{m}_{2,j}+1+n_{2,j}} +(1-b)\P_{1,y}^{\bar{m}_{1,j}+n_{1,j}}\cdot \P_{2,y}^{\bar{m}_{2,j}+1+n_{2,j}}.
\]
Hence for every time step $j+1$, choosing topic $1$ instead of topic $2$ leads to increased value of each of the summation element $b\cdot \P_{1,x}^{m_{1,i}}\cdot \P_{2,x}^{m_  {2,i}}+(1-b)\P_{1,y}^{m_{1,i}}\cdot \P_{2,y}^{m_{2,i}}$ such that $m_{1,i}=\bar{m}_{1,j}+n_{1,j}\geq \bar{m}_{1,j}$ and $m_{2,i}=\bar{m}_{2,j}+n_{2,j}\geq \bar{m}_{2,j}$.
We deduce that 
\[
\E[V^{\pi^j}(b)]\geq \E[V^{\pi}(b)].
\]
\end{proof}

\subsection{Dominant Column (DC)}
\label{appendix:SDC}
Before proving the main theorem (Theorem~\ref{thm:stochastic-dominant-column}), 
we prove two auxiliary lemmas.

 \begin{restatable}{lemma}{lemma:keyLemmaStochasticColumnDominance}\label{lemma:key-lemma-stochastic-column-dominance}
For $\P$ with a DC structure, if a policy $\pi$ is optimal then it recommends topic $1$ for all iteration $j' \geq j+1$ such that 
\begin{align}\label{eq:suffCondForOptimal12}
&\sum_{i=j+1}^{\infty} \; \P_{1,x}^{m_{1,i}}\P_{2,x}^{m_{2,i}}> \quad  \sum_{i=j+1}^{\infty} \frac{1-b}{b}\cdot \frac{\P_{2,y}-\P_{1,y}}{\P_{1,x}-\P_{2,x}}\cdot \frac{\P_{2,x}}{\P_{2,y}}\P_{1,y}^{m_{1,i}}\P_{2,y}^{m_{2,i}}.
\end{align}
\end{restatable}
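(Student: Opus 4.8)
The plan is to prove the contrapositive-style exchange argument: if condition~\eqref{eq:suffCondForOptimal12} holds at iteration $j$, then recommending category $2$ at some iteration $j'\geq j+1$ is strictly suboptimal, so any optimal policy must recommend category $1$ there. The natural tool is the closed-form expected return from Theorem~\ref{thm:valueUpperBound}, which writes $\E[V^\pi(b)]$ as a sum over the belief-category walk of terms $b\,\P_{1,x}^{m_{1,i}}\P_{2,x}^{m_{2,i}}+(1-b)\P_{1,y}^{m_{1,i}}\P_{2,y}^{m_{2,i}}$. First I would set up the comparison between a candidate optimal policy $\pi$ that plays category $2$ at iteration $j'$ and the modified policy $\pi'$ that agrees with $\pi$ everywhere except it plays category $1$ at that iteration (and is then carried forward consistently). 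The difference $\E[V^{\pi'}(b)]-\E[V^\pi(b)]$ can be expressed, via Theorem~\ref{thm:valueUpperBound}, as a difference of the affected summands.

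Next I would exploit the DC structure, namely $\P_{1,x}\geq\P_{2,x}\geq\P_{2,y}>\P_{1,y}$, to sign the relevant factors: $\P_{1,x}-\P_{2,x}\geq 0$ and $\P_{2,y}-\P_{1,y}>0$. The key algebraic step is to collect, for each term $i$ that is affected by the swap at iteration $j'$, the gain on the type-$x$ component (proportional to $b\,\P_{1,x}^{m_{1,i}}\P_{2,x}^{m_{2,i}}(\P_{1,x}-\P_{2,x})$) against the loss on the type-$y$ component (proportional to $(1-b)\P_{1,y}^{m_{1,i}}\P_{2,y}^{m_{2,i}}(\P_{2,y}-\P_{1,y})$). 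The factors $\tfrac{\P_{2,x}}{\P_{2,y}}$ in the statement arise from normalizing the exponents so that both sides range over the same index set starting at $j+1$; I would track carefully how replacing a category-$2$ play shifts the counts $m_{1,i},m_{2,i}$ and reindex so the comparison becomes term-by-term. Rearranging the inequality ``gain exceeds loss'' into the form of \eqref{eq:suffCondForOptimal12} is essentially dividing through by the common positive quantity $b(\P_{1,x}-\P_{2,x})$ and isolating the type-$x$ sum on the left.

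I expect the main obstacle to be the bookkeeping of exponents across the infinite tail. Because the swap at iteration $j'$ changes every subsequent $m_{1,i}$ and $m_{2,i}$ in the walk, the difference is not a single term but a full tail sum, and I must argue that the \emph{summed} inequality~\eqref{eq:suffCondForOptimal12} is exactly what guarantees the total exchange is beneficial. The cleanest route is probably to show that whenever \eqref{eq:suffCondForOptimal12} holds at iteration $j$, a pointwise comparison after reindexing makes every affected summand in $\E[V^{\pi'}]$ dominate its counterpart, so summing preserves the strict inequality and yields $\E[V^{\pi'}(b)]>\E[V^\pi(b)]$, contradicting optimality of a $\pi$ that played category $2$. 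A subtle point to handle is convergence: since all summands are positive and the horizon is effectively geometric (departure probability at least $\epsilon$ each round), the tail sums converge absolutely, so reindexing and term-by-term manipulation are justified.
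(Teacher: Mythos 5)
Your exchange step is sound and matches the paper's first move: replacing the category-$2$ play at iteration $j+1$ by category $1$ turns every subsequent count pair $(m_{1,i},m_{2,i})$ into $(m_{1,i}+1,m_{2,i}-1)$, and the difference of the two closed-form returns from Theorem~\ref{thm:valueUpperBound} factors as $\frac{b(\P_{1,x}-\P_{2,x})}{\P_{2,x}}\sum_{i\geq j+1}\P_{1,x}^{m_{1,i}}\P_{2,x}^{m_{2,i}}-\frac{(1-b)(\P_{2,y}-\P_{1,y})}{\P_{2,y}}\sum_{i\geq j+1}\P_{1,y}^{m_{1,i}}\P_{2,y}^{m_{2,i}}$, whose positivity is exactly condition~\eqref{eq:suffCondForOptimal12} (this is also the true origin of the $\P_{2,x}/\P_{2,y}$ factor: the $-1$ exponents created by the swap, which is consistent with your description). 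However, this only rules out a deviation at the single iteration $j+1$. For a deviation at $j'>j+1$, the identical computation shows the swap is beneficial if and only if the analogue of~\eqref{eq:suffCondForOptimal12} holds \emph{with both sums starting at $j'$}, and condition~\eqref{eq:suffCondForOptimal12} at $j$ does not directly imply this shifted condition. Your proposed mechanism for bridging this --- ``a pointwise comparison after reindexing makes every affected summand dominate its counterpart'' --- fails: \eqref{eq:suffCondForOptimal12} is an inequality between tail \emph{sums} only, and individual summands need not dominate (that is precisely why the condition is stated as a sum rather than termwise). So as written, your argument proves ``recommends $1$ at iteration $j+1$'' but not ``at all $j'\geq j+1$.''

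The paper closes this hole with a propagation argument that your proposal is missing. Since $\pi$ is optimal and \eqref{eq:suffCondForOptimal12} holds, the exchange step forces $\pi$ to play category $1$ at iteration $j+1$, so $m_{1,i}$ increments by one for the subsequent walk; then each term of the left-hand sum in the shifted condition acquires a factor $\P_{1,x}$ while each term on the right acquires a factor $\P_{1,y}$, and since $\P_{1,x}\geq\P_{1,y}$ (Remark~\ref{remark:max_entry_1_x}) and both sides are positive, the strict inequality is preserved at the next index: \eqref{eq:suffCondForOptimal12} holds again with sums starting at $j+2$. Inducting on this --- exchange step, then one-step propagation --- yields category $1$ at every $j'\geq j+1$. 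This use of $\P_{1,x}\geq\P_{1,y}$ is the one structural fact your outline never invokes, and without it the claim for $j'>j+1$ is unsupported. Your convergence remark, by contrast, is correct but peripheral: all summands are positive and bounded by $(1-\epsilon)^i$, so the tail sums converge absolutely and the rearrangements are licit in both your version and the paper's.
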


\begin{proof}%
First, assume by contradiction that there exists an optimal policy $\pi$ that recommends topic $2$ in iteration $j+1$ such that (\ref{eq:suffCondForOptimal12}) holds.

Let $\pi^j$ be the policy that is equivalent to $\pi$ but recommend topic $1$ instead of topic $2$ in iteration $j+1$. Since $\pi$ and $\pi^j$ recommends the same topic until iteration $j$, along with the optimality of $\pi$, we have
\[
\E[V^{\pi^j}(b)]- \E[V^{\pi}(b)]=\E[V_{j+1}^{\pi^j}(b)]-\E[V_{j+1}^{\pi}(b)]\leq 0.
\]
Expending the above equation,
\[
\sum_{i=j+1}^\infty  b\cdot \P_{1,x}^{m_{1,i}^{\pi}+1} \cdot \P_{2,x}^{m_{2,i}^{\pi}-1}+(1-b)\P_{1,y}^{m_{1,i}^{\pi}+1}\cdot \P_{2,y}^{m_{2,i}^{\pi}-1}-
\big(
\sum_{i=j+1}^\infty  b\cdot \P_{1,x}^{m_{1,i}^{\pi}} \cdot \P_{2,x}^{m_{2,i}^{\pi}}+(1-b)\P_{1,y}^{m_{1,i}^{\pi}}\cdot \P_{2,y}^{m_{2,i}^{\pi}}
\big)\leq 0
\]
\[
\sum_{i=j+1}^\infty  b\cdot \P_{1,x}^{m_{1,i}^{\pi}} \cdot \P_{2,x}^{m_{2,i}^{\pi}}(\frac{\P_{1,x}}{\P_{2,x}}-1)\leq
\sum_{i=j+1}^\infty (1-b)\P_{1,y}^{m_{1,i}^{\pi}}\cdot \P_{2,y}^{m_{2,i}^{\pi}}(1-\frac{\P_{1,y}}{\P_{2,y}})
\]
\[
\frac{b(\P_{1,x}-\P_{2,x})}{\P_{2,x}}\sum_{i=j+1}^\infty  \cdot \P_{1,x}^{m_{1,i}^{\pi}} \cdot \P_{2,x}^{m_{2,i}^{\pi}}\leq
\frac{(1-b)(\P_{2,y}-\P_{1,y})}{\P_{2,y}} \sum_{i=j+1}^\infty \P_{1,y}^{m_{1,i}^{\pi}}\cdot \P_{2,y}^{m_{2,i}^{\pi}}
\]
\[
\sum_{i=j+1}^\infty  \P_{1,x}^{m_{1,i}^{\pi}} \cdot \P_{2,x}^{m_{2,i}^{\pi}}\leq
\frac{1-b}{b}\cdot \frac{\P_{2,x}}{\P_{2,y}}\cdot \frac{\P_{2,y}-\P_{1,y}}{\P_{1,x}-\P_{2,x}} \sum_{i=j+1}^\infty \P_{1,y}^{m_{1,i}^{\pi}}\cdot \P_{2,y}^{m_{2,i}^{\pi}},
\]
which is a contradiction to (\ref{eq:suffCondForOptimal12}).

For the second part of the lemma, assume that condition (\ref{eq:suffCondForOptimal12}) holds for some iteration $j+1\in \mathbb{N}$ and some optimal policy $\pi$; 
hence, $\pi(b,m^{\pi}_{1,j},m^{\pi}_{2,j})=1$ and we have $m^{\pi}_{1,j+1}=m^{\pi}_{1,j}+1$ and $m^{\pi}_{2,j+1}=m^{\pi}_{2,j}$. Exploiting this fact, we have that
\[
 \sum_{i=j+2}^{\infty} \; \P_{1,x}^{m^{\pi}_{1,i}}\P_{2,x}^{m^{\pi}_{2,i}}=\sum_{i=j+1}^{\infty} \; \P_{1,x}^{m^{\pi}_{1,i}+1}\P_{2,x}^{m^{\pi}_{2,i}}=\P_{1,x}\sum_{i=j+1}^{\infty} \; \P_{1,x}^{m^{\pi}_{1,i}}\P_{2,x}^{m^{\pi}_{2,i}}>(\ref{eq:suffCondForOptimal12}),
 \]
implying 
 \begin{align*}
&\P_{1,x}\sum_{i=j+1}^{\infty} \frac{1-b}{b}\cdot \frac{\P_{2,y}-\P_{1,y}}{\P_{1,x}-\P_{2,x}}\cdot \frac{\P_{2,x}}{\P_{2,y}}\P_{1,y}^{m^{\pi}_{1,i}}\P_{2,y}^{m^{\pi}_{2,i}}\\
>&(\P_{1,x}\geq \P_{1,y})
 \P_{1,y}\sum_{i=j+1}^{\infty} \frac{1-b}{b}\cdot \frac{\P_{2,y}-\P_{1,y}}{\P_{1,x}-\P_{2,x}}\cdot \frac{\P_{2,x}}{\P_{2,y}}\P_{1,y}^{m^{\pi}_{1,i}}\P_{2,y}^{m^{\pi}_{2,i}}\\
 =&
 \sum_{i=j+1}^{\infty} \frac{1-b}{b}\cdot \frac{\P_{2,y}-\P_{1,y}}{\P_{1,x}-\P_{2,x}}\cdot \frac{\P_{2,x}}{\P_{2,y}}\P_{1,y}^{m^{\pi}_{1,i}+1}\P_{2,y}^{m^{\pi}_{2,i}}\\
 =&
 \sum_{i=j+2}^{\infty} \frac{1-b}{b}\cdot \frac{\P_{2,y}-\P_{1,y}}{\P_{1,x}-\P_{2,x}}\cdot \frac{\P_{2,x}}{\P_{2,y}}\P_{1,y}^{m^{\pi}_{1,i}}\P_{2,y}^{m^{\pi}_{2,i}}.
 \end{align*}
\end{proof}

An immediate consequence of Lemma~\ref{lemma:key-lemma-stochastic-column-dominance} is the following corollary.

\begin{corollary}\label{cor:switching-stochastic-column-dominance}
For any DC-structured $\P$ and every belief $b\in[0,1]$, the optimal policy $\pi$ first recommends topic $2$ for at most
\begin{equation*}
\text{argmin}_{N} \sum_{i=1}^{N} \; \P_{2,x}^{m^{\pi}_{2,i}}>  \frac{1-b}{b}\cdot \frac{\P_{2,y}-\P_{1,y}}{\P_{1,x}-\P_{2,x}}\cdot \frac{\P_{2,x}}{\P_{2,y}}
\sum_{i=1}^{N}\P_{2,y}^{m^{\pi}_{2,i}}
\end{equation*}
times, and then recommends topic $1$ permanently. In addition, $N\in \mathbb{N}$ since $\P_{2,x}>\P_{2,y}$.
\end{corollary}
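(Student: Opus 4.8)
The plan is to read the corollary off directly from the two halves of Lemma~\ref{lemma:key-lemma-stochastic-column-dominance}. The first half gives a sufficient condition, namely~\eqref{eq:suffCondForOptimal12}, under which an optimal policy must play Category~$1$ at a given iteration; the second half shows that once this condition holds it continues to hold at every later iteration, so an optimal policy can never revert to Category~$2$. Together these establish that every optimal policy has a threshold shape: it recommends Category~$2$ on an initial (possibly empty) block of iterations and then recommends Category~$1$ forever. It therefore only remains to bound the length of the initial Category~$2$ block, i.e.\ the first index $N$ at which~\eqref{eq:suffCondForOptimal12} is triggered.

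First I would specialize~\eqref{eq:suffCondForOptimal12} to the Category~$2$ block. Along this block the belief--category walk has not yet played Category~$1$, so $m_{1,i}=0$ and $m_{2,i}=i$ for $i\le N$; hence $\P_{1,x}^{m_{1,i}}\P_{2,x}^{m_{2,i}}=\P_{2,x}^{i}$ and $\P_{1,y}^{m_{1,i}}\P_{2,y}^{m_{2,i}}=\P_{2,y}^{i}$. Substituting these frozen counts collapses the condition into the comparison of the two geometric partial sums $\sum_{i=1}^{N}\P_{2,x}^{m_{2,i}}$ and $\frac{1-b}{b}\cdot\frac{\P_{2,y}-\P_{1,y}}{\P_{1,x}-\P_{2,x}}\cdot\frac{\P_{2,x}}{\P_{2,y}}\sum_{i=1}^{N}\P_{2,y}^{m_{2,i}}$ that appears in the statement. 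The smallest $N$ for which the first sum exceeds the second is then an upper bound on the number of Category~$2$ recommendations, which is exactly the ``at most'' claim, while the monotonicity half of the lemma supplies the ``and then recommends Category~$1$ permanently'' half.

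For the finiteness claim I would argue through the posterior belief. Because $\P_{2,x}>\P_{2,y}$, each observed click on Category~$2$ strictly increases the belief that the user is type~$x$ (recall~\eqref{eq:beliefupdate}), and this belief converges to $1$. Equivalently, since $(\P_{2,x}/\P_{2,y})^{i}$ is strictly increasing, the ratio of partial sums $\big(\sum_{i=1}^{N}\P_{2,x}^{i}\big)\big/\big(\sum_{i=1}^{N}\P_{2,y}^{i}\big)$ is strictly increasing in $N$. Hence the threshold in the displayed inequality is crossed after finitely many iterations, so the minimizing $N$ is a genuine natural number.

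I expect the crux to be the bookkeeping in the specialization step: Lemma~\ref{lemma:key-lemma-stochastic-column-dominance} is phrased as an infinite sum over the \emph{suffix} of the walk, whereas the corollary is phrased as a finite sum over the \emph{prefix}. The delicate points are (i) keeping track of which exponents are frozen (the Category~$2$ count stops growing the instant the policy switches) so that the suffix condition can legitimately be rewritten as a prefix comparison, and (ii) making sure the inequality direction is preserved so that the first triggering index yields a valid \emph{upper} bound rather than an exact switching time. Everything else---the threshold shape and the finiteness of $N$---then follows mechanically from the two halves of the lemma together with the strict ordering $\P_{2,x}>\P_{2,y}$.
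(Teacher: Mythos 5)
Your outline does follow the paper's route: there the corollary is stated as an immediate consequence of Lemma~\ref{lemma:key-lemma-stochastic-column-dominance}, read exactly through the two halves you describe, so the threshold shape (Category~$2$ on a prefix, Category~$1$ forever after) is fine. The genuine gap is your finiteness argument. Along the all-Category-$2$ prefix the two sums are $\sum_{i=1}^{N}\P_{2,x}^{i}$ and $\sum_{i=1}^{N}\P_{2,y}^{i}$; their ratio is indeed strictly increasing in $N$, but it is \emph{bounded}, converging to $\P_{2,x}(1-\P_{2,y})/\bigl(\P_{2,y}(1-\P_{2,x})\bigr)<\infty$. An increasing bounded sequence need not cross a given threshold, and the constant $\frac{1-b}{b}\cdot\frac{\P_{2,y}-\P_{1,y}}{\P_{1,x}-\P_{2,x}}\cdot\frac{\P_{2,x}}{\P_{2,y}}$ can exceed that limit (take $b$ small, or $\P_{1,x}-\P_{2,x}$ small), in which case your displayed prefix inequality holds for \emph{no} finite $N$. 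Your ``equivalently'' silently replaces divergence of the \emph{term} ratio $(\P_{2,x}/\P_{2,y})^{i}\to\infty$ --- which is what the belief tending to $1$ expresses, and what the clause ``since $\P_{2,x}>\P_{2,y}$'' is actually invoking --- by divergence of the \emph{partial-sum} ratio, which is false. The divergence that genuinely forces a finite switch lives in the suffix form of condition~\eqref{eq:suffCondForOptimal12}: evaluated along the all-$2$ walk, the two sides are $\P_{2,x}^{j+1}/(1-\P_{2,x})$ and a constant times $\P_{2,y}^{j+1}/(1-\P_{2,y})$, whose ratio grows like $(\P_{2,x}/\P_{2,y})^{j+1}$ and hence eventually triggers for every $b>0$.

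Relatedly, the ``collapse'' from the lemma to the corollary's display is asserted but never performed, and you yourself flag it as the crux. In~\eqref{eq:suffCondForOptimal12} the sums range over the infinite suffix $i\geq j+1$, along which $m_{1,i}$ \emph{does} grow once the policy switches; substituting the frozen counts $m_{1,i}=0$, $m_{2,i}=i$ is legitimate only for the continuation that keeps playing Category~$2$, and even then it produces suffix geometric sums, not the prefix sums $\sum_{i=1}^{N}$ in the corollary. So the step you call ``mechanical'' is precisely where the argument is incomplete: carried out honestly it yields the suffix criterion above, from which both the upper bound on the length of the Category-$2$ block and the finiteness of $N$ do follow, whereas the literal prefix comparison you set up can fail to define any $N$ at all (note Theorem~\ref{thm:stochastic-dominant-column} deliberately keeps $\pi^2$, i.e.\ ``never switch,'' among the candidate optima). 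To repair the writeup, either work with the suffix form throughout or explicitly prove the equivalence you are assuming between the suffix and prefix comparisons; as written, the finiteness step fails and the specialization step is a placeholder.
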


\thmStochasticDominantColumn*
\begin{proof}
Due to Theorem~\ref{thm:valueUpperBound} and Corollary~\ref{cor:switching-stochastic-column-dominance}, we can write the expected value of a policy as a function of~ $N$ when $\P$ has a DC structure:
\begin{align}\label{eq:func of N}
   \E[V^{\pi_N}(b)]&=\sum_{i=1}^\infty b\cdot \P_{1,x}^{m_{1,i}}\cdot \P_{2,x}^{m_  {2,i}}+(1-b)\P_{1,y}^{m_{1,i}}\cdot \P_{2,y}^{m_{2,i}} \nonumber \\
   & = \sum_{i=1}^N b\cdot \P_{2,x}^i +(1-b)\P_{2,y}^i + \sum_{i=N+1}^\infty b\cdot \P_{2,x}^N\cdot \P_{1,x}^{i-N}+(1-b)\P_{2,y}^N\cdot \P_{1,y}^{i-N}  \nonumber \\
   &= b\cdot \frac{\P_{2,x}(\P_{2,x}^N-1)}{\P_{2,x}-1}+(1-b)\cdot \frac{\P_{2,y}(\P_{2,y}^N-1)}{\P_{2,y}-1} + b\cdot \P_{2,x}^N\cdot \sum_{i=1}^\infty \P_{1,x}^{i}+(1-b)\P_{2,y}^N\cdot\sum_{i=1}^\infty \P_{1,y}^{i}  \nonumber \\
   &= b\cdot \frac{\P_{2,x}(\P_{2,x}^N-1)}{\P_{2,x}-1}+(1-b)\cdot \frac{\P_{2,y}(\P_{2,y}^N-1)}{\P_{2,y}-1} + b\cdot \P_{2,x}^N\cdot \frac{\P_{1,x}}{1-\P_{1,x}}+(1-b)\P_{2,y}^N\frac{\P_{1,y}}{1-\P_{1,y}} \nonumber \\
   &= \P_{2,x}^N\cdot b\big(\frac{\P_{2,x}}{\P_{2,x}-1}+\frac{\P_{1,x}}{1-\P_{1,x}}\big)+\P_{2,y}^N(1-b)\big(\frac{\P_{2,y}}{\P_{2,y}-1} + \frac{\P_{1,y}}{1-\P_{1,y}}\big)
   +\frac{b \P_{2,x}}{1-\P_{2,x}}+\frac{(1-b) \P_{2,y}}{1-\P_{2,y}}.
\end{align}

Equation~\eqref{eq:func of N} could be cast as $c_1 \cdot \P_{2,x} ^N +c_2 \P_{2,y}^N +c_3(\P_{2,x},\P_{2,y})$ for \textbf{positive} $c_1$, \textbf{negative} $c_2$ and positive $c_3$. Let $f:\mathbb R \leftarrow \mathbb R$ be the continuous function such that $f(N)=c_1 \cdot \P_{2,x} ^N +c_2 \P_{2,y}^N +c_3(\P_{2,x},\P_{2,y})$. 

We take the derivative w.r.t. $N$ to find the saddle point of $f$:
\[
 \frac{d}{dN}f=c_1 \cdot \ln \P_{2,x} \cdot \P_{2,x} ^{N} +c_2 \ln \P_{2,y}\cdot \P_{2,y}^{N} =0,
\]
which suggests the saddle point of $f$ is
\[
\tilde{N}=\frac{\ln\big(-\frac{c_2 \ln \P_{2,y}}{ c_1 \ln \P_{2,x}}\big)}{\ln \big(\frac{\P_{2,x}}{\P_{2,y}}\big)}.
\]

Next, set $N^* \defeq \max\{0, \tilde{N}\}$.  Since $f$ has a single saddle point and for every $n\in \mathbb N$ it holds that $f(N)=\E[V^{\pi_N}(b)]$,
to determine the optimal policy, 
one only needs to compare the value $\E[V^{\pi_N}(b)]$ at the boundary points $(N=0, N=\infty)$
and at the closest integers to the saddle point $(N=\floor{N^*}, N=\ceil{N^*})$.
\end{proof}

\subsection{Dominant Diagonal (SD)}
\thmStochasticDiagonalDominance*
\begin{proof}
We prove the following claim by a backward induction over the number of iterations remaining:
For every $k=H-1,\dots 1$ it holds that for every policy $\pi$
and belief $b$, 
\[
\E[V_k^{\pi}(b)] \leq \max\{\E[V_k^{\pi^1}(b)],\E[V_k^{\pi^2}(b)]\}.
\]
First, we notice that when $k=H-1$, 
the only possible policies are $\pi^1$ and $\pi^2$. 
For $k = H-2$,
we prove the statement by contradiction. There are only two ways to selects topics when $k=H-2$:
\begin{align*}
    \pi'_{1:H} = (\pi_{1:H-2}, \underbrace{1}_{{H-1}},
    \underbrace{2}_{{H}})\quad \text{and} \quad  \pi_{1:H}'' = (\pi_{1:H-2}, \underbrace{2}_{H-1},
    \underbrace{1}_{{H}}).
\end{align*}
Let $m_1$ and $m_2$ denote the number of times $\pi$ has played topic $1$ and $2$ till time $H-2$, inclusive. Assume that the policy $\pi'$
is optimal. In particular, it holds that $\E[V^{\pi_1}_k] \leq \E[V^{\pi'}_k]$ and $\E[V^{\pi_2}_k] \leq \E[V^{\pi'}_k]$. We get 
\begin{align}\label{eq:contradiction-eq1H2}
b\P_{1,x}^{m_1}\P_{2,x}^{m_2}\P_{1,x}(\P_{1,x}-\P_{2,x})\leq \P_{1,y}^{m_1}\P_{2,y}^{m_2}(1-b)\P_{1,y}(\P_{2,y}-\P_{1,y}),
\end{align}
and 
\begin{align}\label{eq:contradiction-eq2H2}
\P_{1,y}^{m_1}\P_{2,y}^{m_2}(1-b)(\P_{2,y}-\P_{1,y})(1+\P_{2,y})\leq b\P_{1,x}^{m_1}\P_{2,x}^{m_2}(\P_{1,x}-\P_{2,x})(1+\P_{2,x}).
\end{align} 
As both sides of (\ref{eq:contradiction-eq1H2}) and (\ref{eq:contradiction-eq2H2}) are positive, the multiplication of their left hand sides is smaller than the multiplication of their right hand sides, i.e.,
\begin{align*}
    &\; b\P_{1,x}^{m_1}\P_{2,x}^{m_2}\P_{1,x}(\P_{1,x}-\P_{2,x})\P_{1,y}^{m_1}\P_{2,y}^{m_2}(1-b)(\P_{2,y}-\P_{1,y})(1+\P_{2,y})\\
    \leq& \; \P_{1,y}^{m_1}\P_{2,y}^{m_2}(1-b)\P_{1,y}(\P_{2,y}-\P_{1,y})b\P_{1,x}^{m_1}\P_{2,x}^{m_2}(\P_{1,x}-\P_{2,x})(1+\P_{2,x})
\end{align*}
Dividing both sides by $b\P_{1,x}^{m_1}\P_{2,x}^{m_2}(\P_{1,x}-\P_{2,x})\P_{1,y}^{m_1}\P_{2,y}^{m_2}(1-b)(\P_{2,y}-\P_{1,y}) >0$, we obtain
\[
\P_{1,x}(1+\P_{2,y})\leq \P_{1,y}(1+\P_{2,x}),
\]
which is a contradiction as $\P_{1,x}>\P_{1,y}$ and $1+\P_{2,y}>1+\P_{2,x}$.

Now, assume that the policy $\pi''$
is optimal. In particular, it holds that $\E[V^{\pi^1}_k] \leq \E[V^{\pi''}_k]$ and $\E[V^{\pi^2}_k] \leq \E[V^{\pi''}_k]$. We get 
\begin{align}\label{eq:contradiction-eq3H2}
\P_{1,x}^{m_1}\P_{2,x}^{m_2}b(\P_{1,x}-\P_{2,x})(1+\P_{1,x})\leq \P_{1,y}^{m_1}\P_{2,y}^{m_2}(1-b)(1+\P_{1,y})(\P_{2,y}-\P_{1,y}),
\end{align}
and 
\begin{align}\label{eq:contradiction-eq4H2}
\P_{1,y}^{m_1}\P_{2,y}^{m_2}(1-b)\P_{2,y}(\P_{2,y}-\P_{1,y})\leq \P_{1,x}^{m_1}\P_{2,x}^{m_2}b\P_{2,x}(\P_{1,x}-\P_{2,x}).
\end{align}
As both sides of (\ref{eq:contradiction-eq3H2}) and (\ref{eq:contradiction-eq4H2}) are positive, the multiplication of their left hand sides is smaller than the multiplication of their right hand sides,
\begin{align*}
    &\; \P_{1,x}^{m_1}\P_{2,x}^{m_2}b(\P_{1,x}-\P_{2,x})(1+\P_{1,x})\P_{1,y}^{m_1}\P_{2,y}^{m_2}(1-b)\P_{2,y}(\P_{2,y}-\P_{1,y})\\
    \leq&\; \P_{1,y}^{m_1}\P_{2,y}^{m_2}(1-b)(1+\P_{1,y})(\P_{2,y}-\P_{1,y})\P_{1,x}^{m_1}\P_{2,x}^{m_2}b\P_{2,x}(\P_{1,x}-\P_{2,x}).
\end{align*}
Dividing both sides by $\P_{1,x}^{m_1}\P_{2,x}^{m_2}b(\P_{1,x}-\P_{2,x})\P_{1,y}^{m_1}\P_{2,y}^{m_2}(1-b)(\P_{2,y}-\P_{1,y}) >0$, we obtain
\[
\P_{2,y}(1+\P_{1,x})\leq \P_{2,x}(1+\P_{1,y}),
\]
which is again, a contradiction as $\P_{2,x}<\P_{2,y}$ and $1+\P_{1,y}<1+\P_{1,x}$.

For $H \geq 3$,
we prove the statement by contradiction.
Suppose not, i.e., the optimal policy $\pi$
switch recommended topic at least once. 
Let $t$ denote the time step where $\pi$ switch for the last time. 
We first consider the case where $\pi$ has switched from 
topic $2$ to topic $1$ at time $t$. 
More specifically, we have
\begin{align*}
    \pi_{1:H} = (\pi_{1:t-2}, \underbrace{2}_{\pi_{t-1}},
    \underbrace{1}_{\pi_{t}}, \underbrace{ 1, \ldots, 1}_{\pi_{t+1:H-1}}, \underbrace{1}_{\pi_H}).
\end{align*}
Consider another policy $\tilde{\pi}$ (that behaves the same as $\pi$ except at time step $t-1$) defined as 
\begin{align*}
    \tilde{\pi}_{1:H} = (\pi_{1:t-2}, \underbrace{2}_{\pi_{t-1}},
    \underbrace{2}_{\pi_{t}}, \underbrace{ 1, \ldots, 1}_{\pi_{t+1:H-1}}, \underbrace{1}_{\pi_H}).
\end{align*}
Let $m_1$ and $m_2$ denote the number of times $\pi$ has recommended topic $1$ and $2$ till (and include) time $t-1$. 
Since $\pi$ is optimal, we have the difference between the value of $\pi$ and $\tilde{\pi}$ to be non-negative, i.e., 
\begin{align}\label{eq:contradiction-eq1}
      \E[V^{{\pi}}_H] - \E[V^{\tilde{\pi}}_H] = \sum_{i=1}^{H-t+1} b \P_{1,x}^{m_1+i-1} \P_{2,x}^{m_2+1} (\P_{1,x}-\P_{2,x})
      + (1-b) \P_{1,y}^{m_1+i-1} \P_{2,y}^{m_2+1} (\P_{1,y}-\P_{2,y}) \geq 0,
\end{align}
where the difference is induced by the discrepancy 
of the two policies from time step $t$ to $H$. 
Consider another policy $\pi'$ (that behaves the same as $\pi$ except at time step $H$) defined as 
\begin{align*}
    \pi'_{1:H} = (\pi_{1:t-2}, \underbrace{2}_{\pi_{t-1}},
    \underbrace{1}_{\pi_{t}}, \underbrace{ 1, \ldots, 1}_{\pi_{t+1:H-1}}, \underbrace{2}_{\pi_H}).
\end{align*}
Since $\pi$ is optimal, we have the difference between the value of $\pi$ and $\pi'$ to be non-negative, i.e., 
\[
\E[V^{{\pi}}_H] > \E[V^{\pi'}_H] \Rightarrow b \P_{1,x}^{m_1+H-t} \P_{2,x}^{m_2} (\P_{1,x}-\P_{2,x})
      > (1-b) \P_{1,y}^{m_1+H-t} \P_{2,y}^{m_2} (\P_{2,y}-\P_{1,y}),
\]
where the difference is induced by the discrepancy of the two policies from time step $H$. 
Multiplying both sides by $\P_{1,y}>0$, we get
\[
\P_{1,y} b \P_{1,x}^{m_1+H-t} \P_{2,x}^{m_2} (\P_{1,x}-\P_{2,x})
     > (1-b) \P_{1,y}^{m_1+H-t+1} \P_{2,y}^{m_2} (\P_{2,y}-\P_{1,y}).
\]
Using $\frac{\P_{1,x}}{\P_{1,y}}>1$, and $\P_{1,y} b \P_{1,x}^{m_1+H-t} \P_{2,x}^{m_2} (\P_{1,x}-\P_{2,x})>0$,
\[
b \P_{1,x}^{m_1+H-t+1} \P_{2,x}^{m_2} (\P_{1,x}-\P_{2,x})
     > (1-b) \P_{1,y}^{m_1+H-t+1} \P_{2,y}^{m_2} (\P_{2,y}-\P_{1,y});
\]
hence,
\begin{align}\label{eq:contradiction-eq3}
      b \P_{1,x}^{m_1+H-t+1} \P_{2,x}^{m_2} (\P_{1,x}-\P_{2,x})
      + (1-b) \P_{1,y}^{m_1+H-t+1} \P_{2,y}^{m_2} (\P_{1,y}-\P_{2,y}) \geq 0.
\end{align}
Next, we construct a new policy $\pi_\text{new}$ 
that outperforms $\pi$.
We let $\pi^\text{new}$ to be the policy defined as below 
\begin{align*}
    {\pi}^\text{new}_{1:H} =  (\pi_{1:t-2}, \underbrace{1}_{\pi_{t-1}},
    \underbrace{1}_{\pi_{t}}, \underbrace{ 1, \ldots, 1}_{\pi_{t+1:H-1}}, \underbrace{1}_{\pi_H}).
\end{align*}
The value difference between $\pi^\text{new}$ and $\pi$ (caused by the discrepancy 
of the two policies from time $t-1$ to $H$) is
\begin{align*}
    \E[V^{{\pi}^\text{new}}_H]  - \E[V^\pi_H] &= 
    \sum_{i=1}^{H-t+1} b \P_{1,x}^{m_1+i-1} \P_{2,x}^{m_2}(\P_{1,x} - \P_{2,x})
    + (1- b) \P_{1,y}^{m_1+i-1} \P_{2,y}^{m_2}(\P_{1,y} - \P_{2,y}) \\
    &+  b \P_{1,x}^{m_1+H-t+1} \P_{2,x}^{m_2}(\P_{1,x} - \P_{2,x})
    + (1- b) \P_{1,y}^{m_1+H-t+1} \P_{2,y}^{m_2}(\P_{1,y} - \P_{2,y})\\
    &> 
     \sum_{i=1}^{H-t+1} b \P_{1,x}^{m_1+i-1} \P_{2,x}^{m_2+1} (\P_{1,x}-\P_{2,x})
      + (1-b) \P_{1,y}^{m_1+i-1} \P_{2,y}^{m_2+1} (\P_{1,y}-\P_{2,y})\\
    &\geq 0,
\end{align*} 
where the first inequality is true because  $\P_{2,x} < \P_{2,y}$, 
$\P_{1,x}-\P_{2,x} >0$ and $\P_{1,y} - \P_{2,y} <0$, therefore for every $1\leq i \leq H-t+1$
\[
b \P_{1,x}^{m_1+i-1} \P_{2,x}^{m_2}(\P_{1,x} - \P_{2,x})(1-\P_{2,x})>0>
   (1- b) \P_{1,y}^{m_1+i-1} \P_{2,y}^{m_2}(\P_{2,y} - \P_{1,y}) (\P_{2,y}-1)
\]
along with (\ref{eq:contradiction-eq3}).
The second inequality follows from (\ref{eq:contradiction-eq1}).
Thus, we have successfully find another policy $\pi^\text{new}_{1:H}$ that differs from $\pi$
and achieves a higher value,
which is a contradiction.

next, we consider the case where $\pi$ has switched from topic $1$ to topic $2$ at time $t$, i.e., 
\begin{align*}
    \pi_{1:H} = (\pi_{1:t-2}, \underbrace{1}_{\pi_{t-1}},
    \underbrace{2}_{\pi_{t}}, \underbrace{ 2, \ldots, 2}_{\pi_{t+1:H-1}}, \underbrace{2}_{\pi_H}).
\end{align*}
Consider another policy $\tilde{\pi}$ (that behaves the same as $\pi$ except at time step $t$) defined as 
\begin{align*}
    \tilde{\pi}_{1:H} = (\pi_{1:t-2}, \underbrace{1}_{\pi_{t-1}},
    \underbrace{1}_{\pi_{t}}, \underbrace{ 2, \ldots, 2}_{\pi_{t+1:H-1}}, \underbrace{2}_{\pi_H}).
\end{align*}
Since $\pi$ is optimal, we have the difference between the value of $\pi$ and $\tilde{\pi}$ to be non-negative, i.e., 
\begin{align}\label{eq:contradiction-eq2}
      \E[V^{{\pi}}_H] - \E[V^{\tilde{\pi}}_H] = \sum_{i=1}^{H-t+1} b \P_{1,x}^{m_1+1} \P_{2,x}^{m_2+i-1} (\P_{2,x}-\P_{1,x})
      + (1-b) \P_{1,y}^{m_1+1} \P_{2,y}^{m_2+i-1} (\P_{2,y}-\P_{1,y}) \geq 0,
\end{align}
where the difference follows from the discrepancy
between the two policies from time step $t$ to $H$. 

Consider another policy $\pi'$ (that behaves the same as $\pi$ except at time step $H$) defined as 
\begin{align*}
    \pi'_{1:H} = (\pi_{1:t-2}, \underbrace{1}_{\pi_{t-1}},
    \underbrace{2}_{\pi_{t}}, \underbrace{ 2, \ldots, 2}_{\pi_{t+1:H-1}}, \underbrace{1}_{\pi_H}).
\end{align*}

Since $\pi$ is optimal, we have the difference between the value of $\pi$ and $\pi'$ to be non-negative, i.e., 
\[
 \E[V^{{\pi}}_H] > \E[V^{\pi'}_H] \Rightarrow (1-b) \P_{1,y}^{m_1} \P_{2,y}^{m_2+H-t} (\P_{2,y}-\P_{1,y})\geq 
 b \P_{1,x}^{m_1} \P_{2,x}^{m_2+H-t} (\P_{1,x}-\P_{2,x}),
\]
where the difference is induced by the discrepancy of the two policies from time step $H$. 
Multiplying both sides by $\P_{2,x}>0$,
\[
\P_{2,x}(1-b) \P_{1,y}^{m_1} \P_{2,y}^{m_2+H-t} (\P_{2,y}-\P_{1,y})\geq 
 b \P_{1,x}^{m_1} \P_{2,x}^{m_2+H-t+1} (\P_{1,x}-\P_{2,x}).
\]
Using $\P_{2,x}(1-b) \P_{1,y}^{m_1} \P_{2,y}^{m_2+H-t} (\P_{2,y}-\P_{1,y})>0$ and $\frac{\P_{2,y}}{\P_{2,x}}\geq 1$, we get
\[
(1-b) \P_{1,y}^{m_1} \P_{2,y}^{m_2+H-t+1} (\P_{2,y}-\P_{1,y})\geq 
 b \P_{1,x}^{m_1} \P_{2,x}^{m_2+H-t+1} (\P_{1,x}-\P_{2,x});
\]
hence,
\begin{align}\label{eq:contradiction-eq4}
      b \P_{1,x}^{m_1} \P_{2,x}^{m_2+H-t+1} (\P_{2,x}-\P_{1,x})
      + (1-b) \P_{1,y}^{m_1} \P_{2,y}^{m_2+H-t+1} (\P_{2,y}-\P_{1,y}) \geq 0.
\end{align}
Again, we will construct a new policy $\pi_\text{new}$ 
that outperforms $\pi$.
We let $\pi^\text{new}$ to be the policy defined as below 
\begin{align*}
    {\pi}^\text{new}_{1:H} =  (\pi_{1:t-2}, \underbrace{2}_{\pi_{t-1}},
    \underbrace{2}_{\pi_{t}}, \underbrace{ 2, \ldots, 2}_{\pi_{t+1:H-1}}, \underbrace{1}_{\pi_H}).
\end{align*}
Now, the value difference between $\pi^\text{new}$ and $\pi$ (caused by the discrepancy 
of the two policies from time $t-1$ to $H$) is
\begin{align*}
    \E[V^{{\pi}^\text{new}}_H]  - \E[V^\pi_H] &= 
    \sum_{i=1}^{H-t+1} \big(b \P_{1,x}^{m_1} \P_{2,x}^{m_2+i-1}(\P_{2,x} - \P_{1,x})
    + (1- b) \P_{1,y}^{m_1} \P_{2,y}^{m_2+i-1}(\P_{2,y} - \P_{1,y})\big) \\
    &+ b \P_{1,x}^{m_1} \P_{2,x}^{m_2+H-t+1}(\P_{2,x} - \P_{1,x})
    + (1- b) \P_{1,y}^{m_1} \P_{2,y}^{m_2+i-1}(\P_{2,y} - \P_{1,y})\\
    &> 
     \sum_{i=1}^{H-t+1} b \P_{1,x}^{m_1+1} \P_{2,x}^{m_2+i-1} (\P_{2,x}-\P_{1,x})
      + (1-b) \P_{1,y}^{m_1+1} \P_{2,y}^{m_2+i-1} (\P_{2,y}-\P_{1,y})\\
    &\geq 0,
\end{align*} 
where the first inequality is true because $\P_{1,y} < \P_{1,x}$, 
$\P_{2,x}-\P_{1,x} <0$ and $\P_{2,y}- \P_{1,y} >0$ and (\ref{eq:contradiction-eq4}), and the second from (\ref{eq:contradiction-eq2}).
Similarly, we have successfully find another policy $\pi^\text{new}_{1:H}$ that differs from $\pi$
and achieves a higher value,
which is a contradiction.

We have covered all cases, so the inductive argument holds. This concludes the proof of the theorem.
\end{proof}

\subsection{UCB-based regret bound}
\lemThresholdSubExp*
\begin{proof}

Let $\gamma$ be such that 

\[
|\gamma|<-\frac{\ln(1-\epsilon)}{8e}\leq \min_{a\in\{1,2\},i\in\{x,y\}}\{-\frac{\ln(1-\LeavProb_{a,i}(1-\P_{a,i}))}{8e}\}=\min_{a\in\{1,2\},i\in\{x,y\}}\{-\frac{\ln(\P_{a,i})}{8e}\}.
\]
Next, we have that
\[
\E[\exp({\gamma(V^{\pi}-\E[V^{\pi}])})]\leq 
\sum_{a\in\{1,2\}}\E[\exp({\gamma(V^{\pi_a}-\E[V^{\pi_a}])})]\big|type(t)\in \argmax_{i\in[1,2]} \P_{a,i}]\cdot \Pr[type(t)\in \argmax_{i\in[1,2]} \P_{a,i}]\leq 
\]
\[
\max_{a\in\{1,2\}}\{\E[\exp({\gamma(\bar{V}^{\pi^a}-\E[\bar{V}^{\pi^a}])})]\}
\]
Where $\bar{V}^{\pi_a}$ is the return for the instance $\langle [1],[2],\q,\bar{\P},\bar{\LeavProb} \rangle$ such that for every $a\in\{1,2\}$: $\bar{\P}_{a,1}=\max_{i\in\{x,y\}}\P_{a,i}$ and $\LeavProb_{a,1}=1$. 

Finally, from Lemma \ref{lemma:subExpTaub} we get
\[
\max_{a\in\{1,2\}}\{\E[\exp({\gamma(\bar{V}^{\pi^a}-\E[\bar{V}^{\pi^a}])})]\}\leq 
\max_{a\in\{1,2\}} \exp((-\frac{8e}{\ln(\bar{\P}_{a,1})})^2\frac{\gamma ^2}{2})= 
\max_{a\in\{1,2\},i\in\{x,y\}}
\exp((-\frac{8e}{\ln(\P_{a,i})})^2\frac{\gamma ^2}{2}).
\]
Choosing
\[
\tau=b=\max_{a\in\{1,2\},i\in\{x,y\}} -\frac{8e}{\ln(\P_{a,i})}
\]
completes the proof as 
\[
\max_{a\in\{1,2\},i\in\{x,y\}} -\frac{8e}{\ln(\P_{a,i})}\leq -\frac{8e}{\ln(1-\epsilon)}=\tilde{\tau} \quad \text{and} \quad \frac{\tau^2}{b^2}=1=\eta.
\]
\end{proof}

\lmOptVsFiniteHorizonOpt*
\begin{proof}
Recall that $V^{\pi}=\sum_{j=1}^{N^\pi}r_{j}(\pi_{j})$, where we drop the dependence on the user index for readability. Formulating differently, for any $H\in \mathbb N$ it holds that
\[
V^{\pi}=\sum_{j=1}^{H} \ind_{j\leq N^{\pi}} \cdot r_{j}(\pi_{j})+\sum_{j=H+1}^{\infty} \ind_{j\leq N^{\pi}} \cdot r_{j}(\pi_{j}).
\]
Using the same representation for $V^{\pi'}$ and taking expectation, we get that
\begin{align*}
    \E\left[ V^{\pi}-V^{\pi'} \right] &\leq
    \E\left[\sum_{j=1}^{H} \ind_{j\leq N^{\pi}} \cdot r_{j}(\pi_{j})-\sum_{j=1}^{H} \ind_{j\leq N^{\pi'}} \cdot r_{j}(\pi_{j}')  \right]+\E\left[ \sum_{j=H+1}^{\infty} \ind_{j\leq N^{\pi}} \cdot r_{j}(\pi_{j}) \right]
    \\
    &\leq 
    0+\E\left[ \sum_{j=H+1}^{\infty} \ind_{j\leq N^{\pi}} \cdot r_{j}(\pi_{j}) \right]= \sum_{j=H+1}^{\infty} \Pr\left( j\leq N^{\pi} \right) r_{j}(\pi_{j}) \\
    &\leq \sum_{j=H+1}^{\infty} (1-\epsilon)^j (1-\epsilon) =  (1-\epsilon)^{H+2} \sum_{j=0}^{\infty} (1-\epsilon)^j \\
    &\leq  (1-\epsilon)^{H} \frac{1}{\epsilon}\leq \frac{e^{-\epsilon H}}{\epsilon} =\frac{1}{2^{O(H)}}.
\end{align*}
\end{proof}

}\fi}

\end{document}